\documentclass{article}
\usepackage{PRIMEarxiv}

\usepackage[T1]{fontenc}
\usepackage[utf8]{inputenc}
\usepackage{newtxtext}
\usepackage[dvipsnames,table]{xcolor}
\usepackage{graphicx}
\usepackage{booktabs}
\usepackage{multirow}
\usepackage{array}
\usepackage{tabularx}
\usepackage[most]{tcolorbox}
\usepackage{makecell}
\usepackage{pifont}
\usepackage{amsmath,amssymb,amsthm,mathtools}
\usepackage{newtxmath}
\usepackage{bm}
\usepackage{algorithm}
\usepackage{algorithmic}
\usepackage{caption}
\usepackage{subcaption}
\usepackage{placeins}
\usepackage[numbers,sort&compress]{natbib}
\usepackage{xurl}
\usepackage{hyperref}
\usepackage[nameinlink,capitalize,noabbrev]{cleveref}

\definecolor{TableTint}{HTML}{E2F1ED}
\definecolor{TableBest}{HTML}{FCE2C4}
\definecolor{TableSecond}{HTML}{E2E8FA}
\definecolor{TableThird}{HTML}{F0E2F2}
\definecolor{TableFlag}{HTML}{E4E9ED}
\definecolor{IconYes}{HTML}{149B3A}
\definecolor{IconNo}{HTML}{D62828}
\definecolor{PanelBG}{HTML}{F4F8F7}
\definecolor{PanelEdge}{HTML}{CFDFDA}

\newtcolorbox{figpanel}{%
  enhanced, breakable=false, rounded corners,
  colback=PanelBG, colframe=PanelEdge,
  boxrule=0.5pt, arc=5pt, outer arc=5pt,
  left=7pt, right=7pt, top=7pt, bottom=5pt, boxsep=0pt,
}

\newcommand{\cmark}{\textcolor{IconYes}{\ding{51}}}
\newcommand{\xmark}{\textcolor{IconNo}{\ding{55}}}
\newcommand{\bestbase}[1]{\cellcolor{TableBest}#1}
\newcommand{\secondbase}[1]{\cellcolor{TableSecond}#1}
\newcommand{\thirdbase}[1]{\cellcolor{TableThird}#1}
\newcommand{\ourscell}[1]{\cellcolor{TableTint}#1}
\newcommand{\flagcell}[1]{\cellcolor{TableFlag}#1}

\hypersetup{
  hidelinks,
  pdftitle={Beyond Pairwise Graphs in Science: Hypergraph Adaptive Wavelet Operators for Parametric PDEs},
  pdfauthor={Rajat Sarkar, Venkataramana Runkana, Souvik Chakraborty},
  pdfsubject={Scientific machine learning; neural operators; hypergraph wavelets}
}

\newtheorem{definition}{Definition}[section]
\newtheorem{proposition}[definition]{Proposition}
\newtheorem{remark}[definition]{Remark}
\newtheorem{lemma}[definition]{Lemma}
\newtheorem{theorem}[definition]{Theorem}
\providecommand{\shortcite}[1]{\citep{#1}}

\title{Beyond Pairwise Graphs in Science: Hypergraph Adaptive
Wavelet Operators for Parametric PDEs}
\author{
  \textbf{Rajat Sarkar} \\
  \mdseries Department of Applied Mechanics \\
  \mdseries Indian Institute of Technology Delhi \\
  \mdseries New Delhi, 110016, India \\
  \mdseries TCS Research \\
  \mdseries Tata Consultancy Services, India \\
  \mdseries \texttt{amz258208@am.iitd.ac.in} \\
  \mdseries \texttt{rajat.sarkar1@tcs.com} \\
  \And
  \textbf{Venkataramana Runkana} \\
  \mdseries TCS Research \\
  \mdseries Tata Consultancy Services \\
  \mdseries India \\
  \mdseries \texttt{venkat.runkana@tcs.com} \\
  \And
  \textbf{Souvik Chakraborty} \\
  \mdseries Department of Applied Mechanics \\
  \mdseries Yardi School of Artificial Intelligence (ScAI) \\
  \mdseries Indian Institute of Technology Delhi \\
  \mdseries New Delhi, 110016, India \\
  \mdseries \texttt{souvik@am.iitd.ac.in} \\
  \mdseries \texttt{https://www.csccm.in/} \\
}
\date{}

\begin{document}
\maketitle

\begin{abstract}
    Physical systems are often modeled by solution operators that map input
    fields, parameters, geometries, or past states to steady or future physical
    states. Learning these maps is difficult, especially for time-dependent
    systems that must assimilate history and remain stable under autoregressive
    rollout. Many neural operators work best on regular, structured grids,
    while realistic simulations often require unstructured meshes or point
    clouds to resolve complex geometries; in such settings, grid-centric
    representations can lose accuracy. Graph neural operators handle these
    domains through message passing or spectral graph filtering, but pairwise
    edges do not directly capture group-wise couplings among mesh cells, local
    neighborhoods, or conservation volumes. We introduce the Hypergraph Adaptive
    waveLet Operator (HALO), which lifts the domain to a hypergraph and learns
    in its spectral wavelet domain. HALO avoids explicit hypergraph-Laplacian
    eigendecomposition through Chebyshev polynomial wavelet filters, giving
    localized spectral kernels at linear sparse-matrix cost. Its trainable dyadic
    wavelet scales are regularized toward tight-frame coverage, allowing the
    frequency response to adapt to each PDE while encouraging stable
    multi-scale spectral coverage. Across 2D and 3D benchmarks on structured
    and unstructured discretizations, HALO achieves best or near-best accuracy
    among frequency-, transformer-, DeepONet-, state-space-, and graph-based
    baselines and sustains stable multi-step rollouts. The same model scales to
    industrial aerodynamic geometries: on meshes of a few hundred thousand
    points it is on par with, or better than, the strongest fixed-discretization
    transformers, while remaining resolution-equivariant.
\end{abstract}

\keywords{Operator learning \textperiodcentered{} Hypergraph learning
\textperiodcentered{} Spectral wavelets \textperiodcentered{} Parametric partial
differential equations \textperiodcentered{} Scientific machine learning.}

\section{Introduction}

Scientific machine learning increasingly centers on \emph{operator learning}:
learning maps between infinite-dimensional function spaces that act as the
\emph{solution operators} of PDEs~\cite{lu2021deeponet,kovachki2023neural}.
Trained purely from data and requiring no explicit knowledge of the governing
equations, such operators offer fast surrogates that generalize across problem
instances and have shown strong results in weather
forecasting~\cite{pathak2022fourcastnet}, biomedical surrogate
modeling~\cite{yin2022simulating}, and foundation models for
PDEs~\cite{herde2024poseidon,hao2024dpot}. Yet building operators that stay
accurate on real-world domains remains challenging. Spectral neural operators
built on Fourier~\cite{li2021fourier} and wavelet~\cite{tripura2023wavelet} bases
are highly effective and resolution-equivariant on regular grids, and
geometry-aware variants such as Geo-FNO~\cite{li2022geofno} and
GINO~\cite{li2023gino} extend them beyond rectangular domains, yet their accuracy
still drops on complex, unstructured geometries. The deeper difficulty shared
across these families appears when the discretization itself varies: in many
applications, different samples of one task arrive on different meshes with
different node counts, which operators tied to a fixed grid cannot readily
absorb.

To address this variability, graph-based neural operators replace grid
convolution with message passing or graph kernel integration, processing
unstructured point sets and meshes while preserving permutation
equivariance~\cite{li2020neural,brandstetter2022message,pfaff2021learning}.
However, an ordinary graph encodes the domain through edges that each relate only
two vertices. This pairwise bias is limiting because many physical and numerical interactions
couple a whole group of nodes at once rather than isolated pairs: finite elements
aggregate multiple degrees of freedom~\cite{brenner2008mathematical}, and
finite-volume schemes update cell-wise conserved quantities through neighboring
control volumes~\cite{leveque2002finite}. Collapsing such multi-node relations into
independent edges fragments the interaction, forcing the model to reconstruct
higher-order correlations indirectly through deeper propagation.

Hypergraphs offer a direct representation for set-valued relations: a single
hyperedge can connect an arbitrary set of vertices, encoding a mesh element, a
$k$-ring neighborhood, or a local physical interaction
group~\cite{zhou2006learning,battiston2020networks,bodnar2021weisfeiler}. Both
spatial message-passing and spectral hypergraph learning have improved feature
learning in vision and 3D tasks~\cite{jiang2019dynamic,feng2019hypergraph,nong2021hypergraph},
yet their use as a neural-operator backbone for PDEs remains largely unexplored.
This motivates a natural question: can the same higher-order inductive bias help
scientific operators reason over local groups of nodes, rather than asking
pairwise edges alone to reconstruct their joint effect? The remaining challenge
is to turn this idea into a multi-scale, stable, and computationally practical
neural operator.

To address this challenge, we introduce the \emph{Hypergraph Adaptive waveLet
Operator} (HALO), which encodes group-wise geometry in a hypergraph and realizes
kernel integration through spectral hypergraph wavelets. Rather than
diagonalizing the hypergraph Laplacian, it evaluates each wavelet scale with
Chebyshev polynomial filters~\cite{hammond2011wavelets,defferrard2016convolutional,xu2019graph},
yielding spatially localized receptive fields with sparse-matrix computation.
Its scales are learned from a dyadic initialization and regularized toward
tight-frame coverage, letting the spectral response adapt to each PDE while
discouraging gaps or excessive amplification across scales. The same block
supports both steady-state operators and autoregressive rollouts.

Our main contributions are:
\begin{itemize}
\itemsep2pt
\item \textbf{Higher-order operator.} We propose \textbf{HALO}, a hypergraph
wavelet neural operator that learns higher-order, group-wise interactions beyond
the pairwise couplings of graph operators.
\item \textbf{Efficient adaptive spectrum.} A Chebyshev-accelerated wavelet block
with trainable scales and tight-frame regularization gives localized, multi-scale
kernel integration without eigendecomposition.
\item \textbf{Mesh-flexible, resolution-equivariant.} HALO keeps the
grid-agnostic flexibility of graphs while remaining resolution-equivariant:
it matches the best regular-grid operators on structured domains and surpasses
all baselines on unstructured ones.
\end{itemize}
\section{Problem Formulation and Background}

We consider learning a solution operator
$\mathcal{N}: \mathcal{A} \rightarrow \mathcal{U}$
that maps an input function $a \in \mathcal{A}$ to an output
$u \in \mathcal{U}$, both defined on a bounded physical domain
$D \subset \mathbb{R}^{d}$.
Given $N$ paired observations $\{(a^{(i)}, u^{(i)})\}_{i=1}^{N}$,
the goal is to learn $\mathcal{N}$ from finite samples so that it approximates
the underlying continuous solution map and can be evaluated on compatible
discretizations of $D$.

\subsubsection*{Kernel-Based Neural Operator}
Motivated by the Green's function representation of PDEs~\cite{li2020neural},
a kernel-based neural operator parameterizes $\mathcal{N}$ as $L$ iterative update
layers.
At each layer $\ell$, the hidden representation
$v_\ell : D \rightarrow \mathbb{R}^{d_h}$ is refined as
\begin{equation}
    v_{\ell+1}(x) = \sigma\!\left(
        W_\ell v_\ell(x)
        + \bigl(\mathcal{K}_\phi^{(\ell)}v_\ell\bigr)(x)
    \right),
    \label{eq:neural_op_layer}
\end{equation}
where $W_\ell \in \mathbb{R}^{d_h \times d_h}$ is a pointwise linear map,
$\sigma$ is a nonlinear activation, and
\begin{equation}
    \bigl(\mathcal{K}_\phi^{(\ell)}v\bigr)(x)
    = \int_D \kappa_\phi^{(\ell)}\!\left(x,y,a(x),a(y)\right)v(y)\,dy
    \label{eq:kernel_integration}
\end{equation}
is the \emph{kernel integration operator}~\cite{li2021fourier},
responsible for capturing long-range function dependencies.
Designing an expressive yet computationally tractable realization of
this operator over unstructured domains with higher-order interactions
is the central challenge we address.

\subsection{Hypergraph Wavelet Theory}

A hypergraph $\mathcal{G}=(\mathcal{V},\mathcal{E},W_e)$ generalizes a
graph by allowing each hyperedge $e\in\mathcal{E}$ to connect an arbitrary
subset of vertices. Let $H\in\{0,1\}^{n\times|\mathcal{E}|}$ denote its
incidence matrix, and let $W_e$, $D_v$, and $D_e$ denote the diagonal
hyperedge-weight, vertex-degree, and hyperedge-degree matrices, respectively.
Following Zhou et al.~\shortcite{zhou2006learning}, the normalized hypergraph
Laplacian is
\begin{equation}
    \Delta
    = I-D_v^{-1/2}HW_eD_e^{-1}H^\top D_v^{-1/2}.
    \label{eq:hg_laplacian}
\end{equation}
Since $\Delta$ is symmetric positive semidefinite, it admits the
eigendecomposition $\Delta=U\Lambda U^\top$. For a node signal
$x\in\mathbb{R}^n$, spectral filtering with a band-pass kernel $g$ (whose
Meyer-like form is given later in Eq.~\eqref{eq:meyer_kernel}) is defined as
$U g(\Lambda)U^\top x$.

Following spectral graph wavelet theory
\cite{hammond2011wavelets,nong2021hypergraph}, the wavelet operator at scale
$s$ is $\mathcal{W}_s=U g(s\Lambda)U^\top$, and the wavelet localized at vertex
$i$ is
\begin{equation}
    \psi_{s,i}=\mathcal{W}_s\delta_i=U g(s\Lambda)U^\top\delta_i.
    \label{eq:wavelet_atom}
\end{equation}
Applying $\mathcal{W}_s$ to a node signal $x$ gives the wavelet transform
$\mathcal{W}_s x$, whose $i$-th entry
$\Psi_{s,i}=(\mathcal{W}_s x)_i=\langle\psi_{s,i},x\rangle$ is the wavelet
coefficient at vertex $i$. To avoid explicit eigendecomposition, the spectral
kernel can be approximated by an order-$M$ Chebyshev
expansion~\cite{defferrard2016convolutional}:
\begin{equation}
    \Psi_{s,i}\approx
    \left(
        \frac{1}{2}c_{s,0}x+
        \sum_{m=1}^{M}c_{s,m}T_m(\widetilde{\Delta})x
    \right)_i,
    \widetilde{\Delta}=\frac{2\Delta}{\lambda_{\max}}-I,
    \label{eq:cheby_approx}
\end{equation}
where the coefficients $c_{s,m}$ approximate the scaled kernel
$g(s\lambda)$ on $[0,\lambda_{\max}]$. The expansion can be evaluated
through the Chebyshev recurrence using $M$ sparse matrix--vector products,
with complexity
$\mathcal{O}\!\left(M\,\mathrm{nnz}(\Delta)\right)$. Moreover, because
$T_m(\widetilde{\Delta})$ is a degree-$m$ polynomial in the Laplacian, the
order-$M$ approximation is supported within the $M$-hop neighborhood of
its center. Thus, $M$ controls spatial localization, while $s$ controls
spectral selectivity.

\section{Hypergraph Adaptive Wavelet Operator (HALO)}

HALO realizes the kernel-integration layer of
Eq.~\eqref{eq:neural_op_layer} as a multi-scale hypergraph spectral operator.
Given a field sampled at nodes $\{x_i\}$, it proceeds in three stages
(Figure~\ref{fig:halo_architecture}): raw inputs are lifted into a latent
representation, refined by $L$ \emph{Adaptive Wavelet Blocks} (AWBs) applying
trainable multi-scale filters over a fixed hypergraph Laplacian $\Delta$, and
decoded to the target space. The coordinates $\{x_i\}$ serve a dual purpose:
enriching node features and defining the hypergraph geometry encoded by $\Delta$.

\begin{figure}[H]
    \centering
    \includegraphics[width=\linewidth]{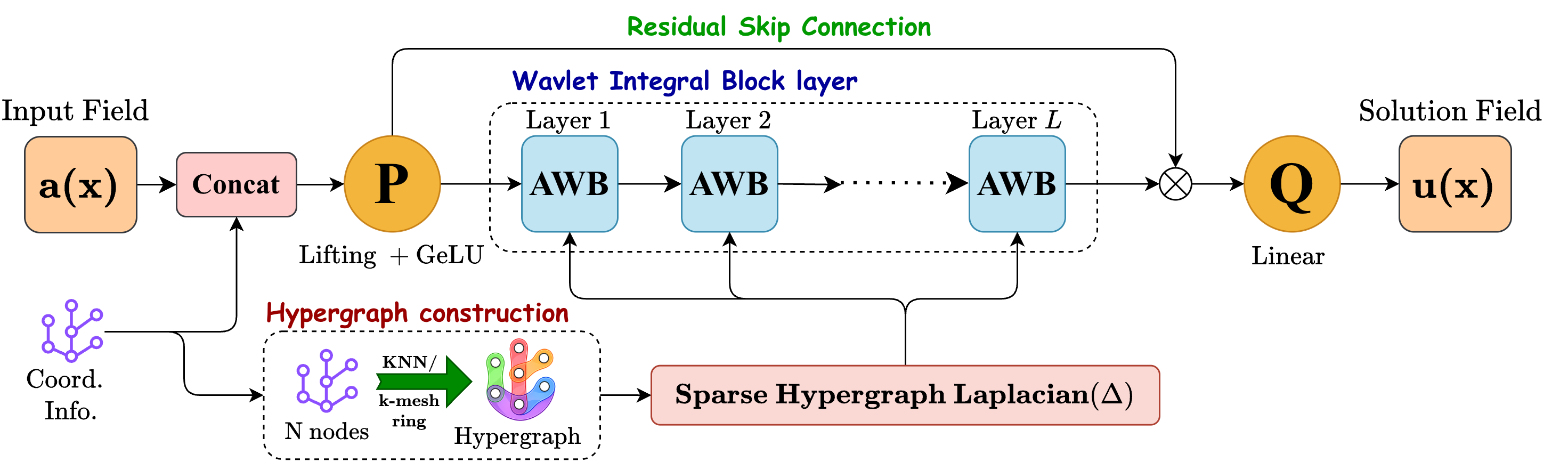}
    \caption{HALO architecture: coordinate-aware input uplift, a stack of $L$
    Adaptive Wavelet Blocks operating on the cached sparse hypergraph
    Laplacian, output projection, and a global residual connection.}
    \label{fig:halo_architecture}
\end{figure}

\subsection{Input Preparation}
HALO reads a single node-wise observation, and what that observation contains is
the only thing that separates the two classes of problem it addresses. A
steady-state task supplies one field over the domain---an input field $a(x)$, or
an initial condition $a(x,t_0)$---from which the operator recovers the target
field in a single application. An unsteady task instead supplies the
$T_{\text{in}}$ most recently observed snapshots
$\{a(x,t_1),\,a(x,t_2),\,\dots,\,a(x,t_{T_{\text{in}}})\}$, from which the
operator predicts the state at the next time level and is then applied
repeatedly to roll out to the target horizon. Let $c$ be the number of physical channels
carried by one snapshot. Both cases then assemble the same object at each node
$i$: the channel-wise stack of the supplied snapshots, ordered from oldest to
most recent,
\begin{equation}
    a(x_i)
    = \bigl[\,a(x_i,t_1);\;\dots;\;a(x_i,t_{T_{\text{in}}})\,\bigr]
      \in \mathbb{R}^{d_a},
    \qquad
    d_a = T_{\text{in}}\,c ,
    \label{eq:input_window}
\end{equation}
the steady case being $T_{\text{in}} = 1$. Concatenating this observation with
the spatial coordinate
$x_i \in \mathbb{R}^{d}$ and with any conditioning channels
$z_i \in \mathbb{R}^{d_{\text{cond}}}$ the task supplies---node or boundary
tags, global physical parameters---forms the node feature vector
$\tilde{x}_i = [a(x_i);\, x_i;\, z_i] \in \mathbb{R}^{d_{\text{in}}}$, with
$d_{\text{in}} = T_{\text{in}}\,c + d + d_{\text{cond}}$ and
$d_{\text{cond}} = 0$ when the task provides none.
A single linear projection with nonlinearity then \emph{uplifts} this
compact observation ($d_{\text{in}} \ll d_h$) to a richer $d_h$-dimensional
latent representation,
$\mathbf{h}^{(0)}_i = \mathrm{GELU}(\tilde{x}_i\,W_{\text{in}} + b_{\text{in}})
\in \mathbb{R}^{d_h}$, with $W_{\text{in}} \in \mathbb{R}^{d_{\text{in}} \times d_h}$
and $b_{\text{in}} \in \mathbb{R}^{d_h}$.

\subsection{Hypergraph Construction}

We build the hypergraph once from the node coordinates $\{x_i\}$ and reuse it
throughout training: its sparse Laplacian $\Delta$, with $\lambda_{\max}$
obtained by the power method, is cached and shared across all AWBs. The
construction follows the data, mapping every node to exactly one anchoring
hyperedge that feeds the same downstream operator.

\begin{figure}[htbp]
    \centering
    \begin{tabular}{@{}cc@{}}
        \includegraphics[width=0.35\linewidth]{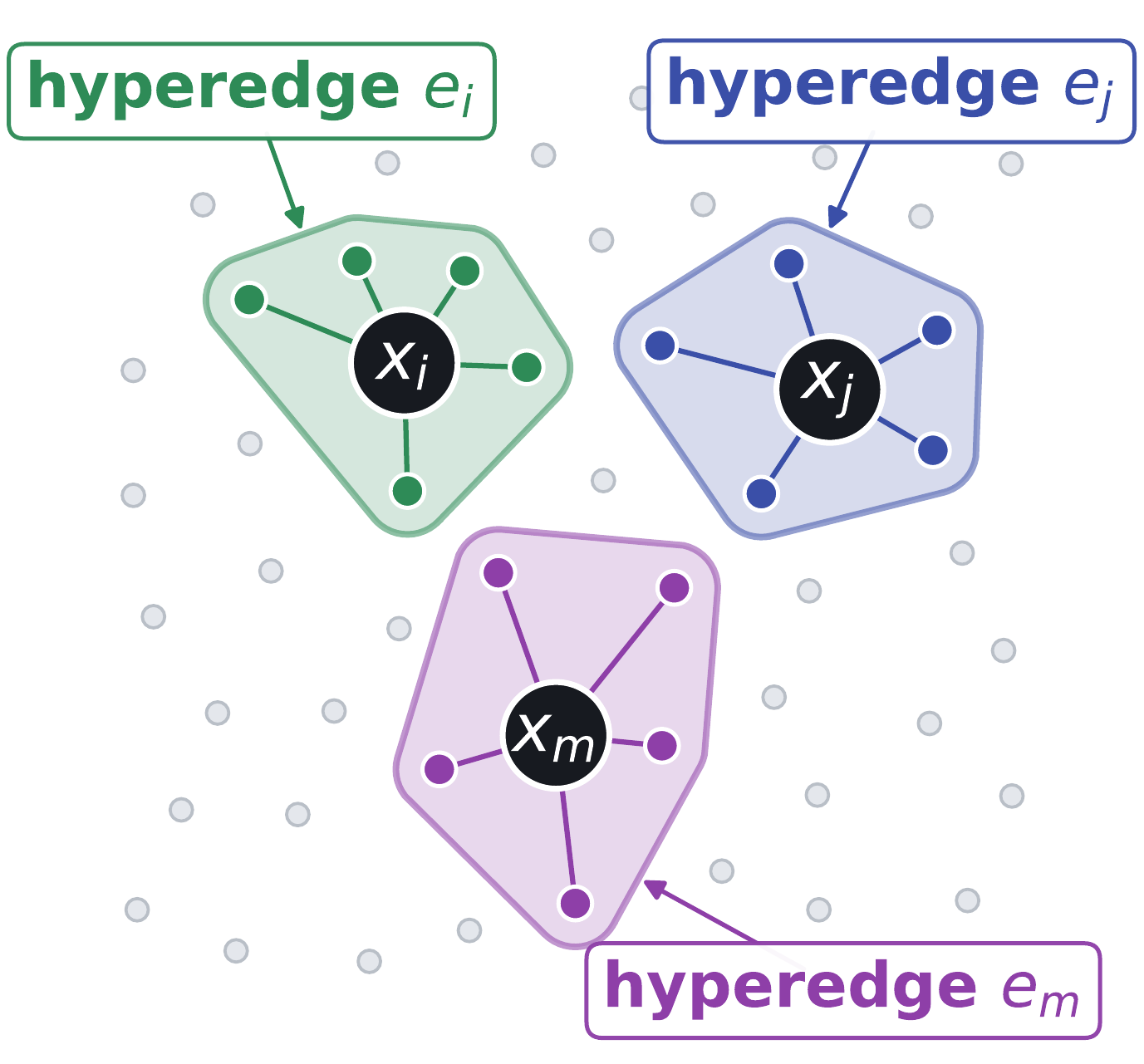} &
        \includegraphics[width=0.35\linewidth]{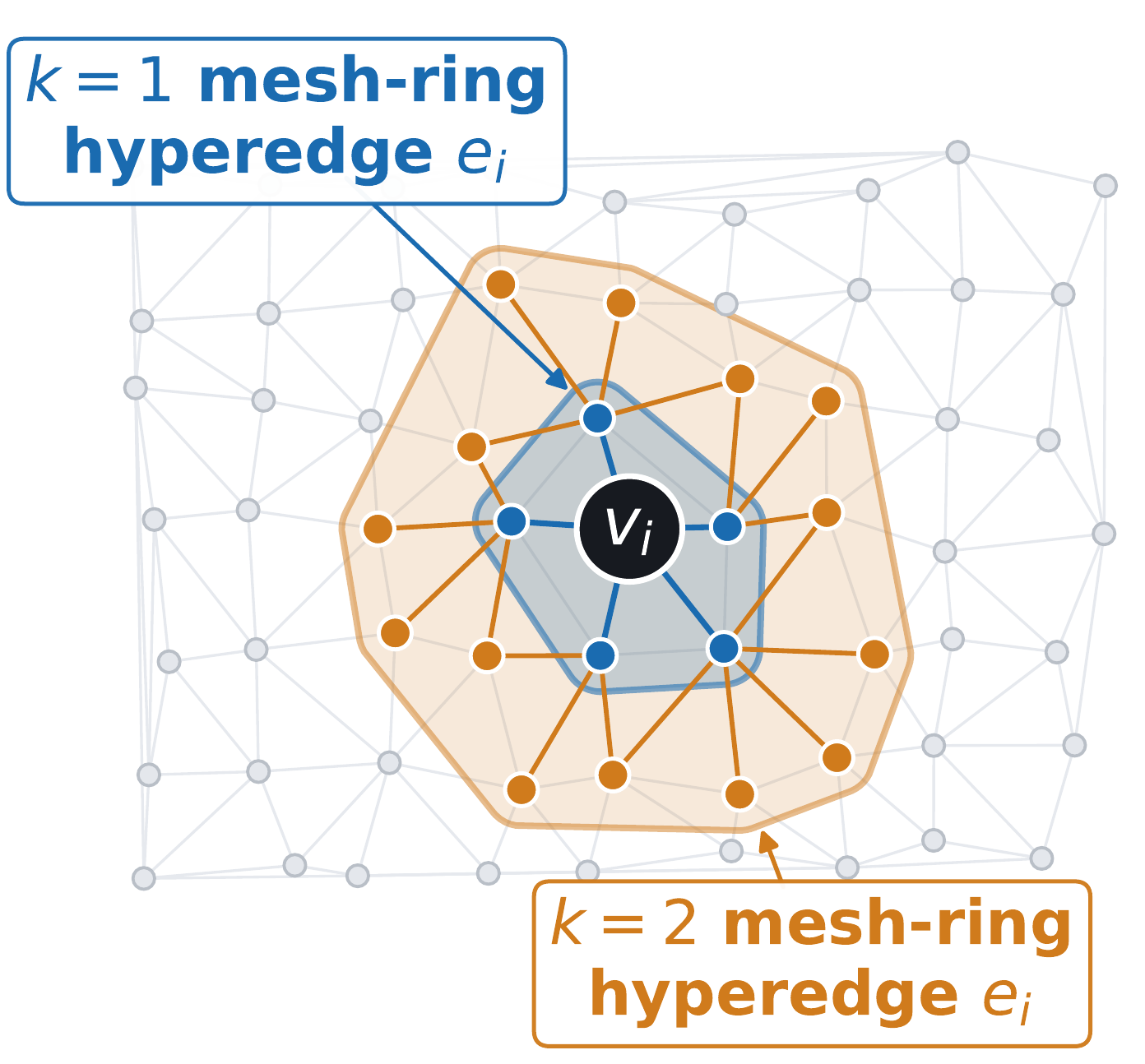} \\[2pt]
        (a) $k$-NN point-cloud hyperedges & (b) mesh-ring hyperedges
    \end{tabular}
    \caption{Dataset-aware hypergraph construction. Point clouds use Euclidean
    $k$-nearest-neighbor groups, whereas meshes use connected $k$-ring patches
    that respect the supplied topology.}
    \label{fig:knn_hypergraph}
\end{figure}

\subsubsection*{Local Hyperedges}
Each hyperedge is built around an \emph{anchor} node $i$ by collecting its
neighbors, $e_i=\{i\}\cup\mathcal{N}_k(i)$, with the neighbor rule set by the
spatial information the dataset provides. For a point cloud we take the $k$
nearest neighbors of $i$ by Euclidean distance
(Figure~\ref{fig:knn_hypergraph}(a)); when mesh connectivity is available we take
the $k$-ring, the nodes within $k$ hops of $i$ on the mesh graph (two nodes
joined when they share a cell or a listed edge), forming a connected patch
(Figure~\ref{fig:knn_hypergraph}(b)) that never links nodes which are spatially
close yet geodesically distant. In both cases the incidence matrix stays
binary, and each hyperedge is weighted by how tightly its members cluster around
the anchor $x_e:=x_i$: with $\sigma_e$ the mean anchor-to-member distance,
$(W_e)_{e_ie_i}$ averages the Gaussian proximity
$\exp(-\|x_v-x_e\|^2/\sigma_e^2)$ over $v\in e_i$, so compact groups weigh near
one and diffuse ones less. This yields $n$ overlapping hyperedges that encode
local higher-order structure in the topology.

\subsection{Adaptive Wavelet Block}

The AWB, illustrated in Figure~\ref{fig:awb}, is the
core computational module of HALO. It maps the node feature matrix
$\mathbf{h}^{(\ell)} \in \mathbb{R}^{n \times d_h}$ to
$\mathbf{h}^{(\ell+1)} \in \mathbb{R}^{n \times d_h}$ using the fixed sparse
Laplacian $\Delta$. The following components specify its
Chebyshev hypergraph-wavelet realization, beginning with the spectral kernel
and the scales at which it is applied.

\begin{figure}[htbp]
    \centering
    \includegraphics[width=\linewidth]{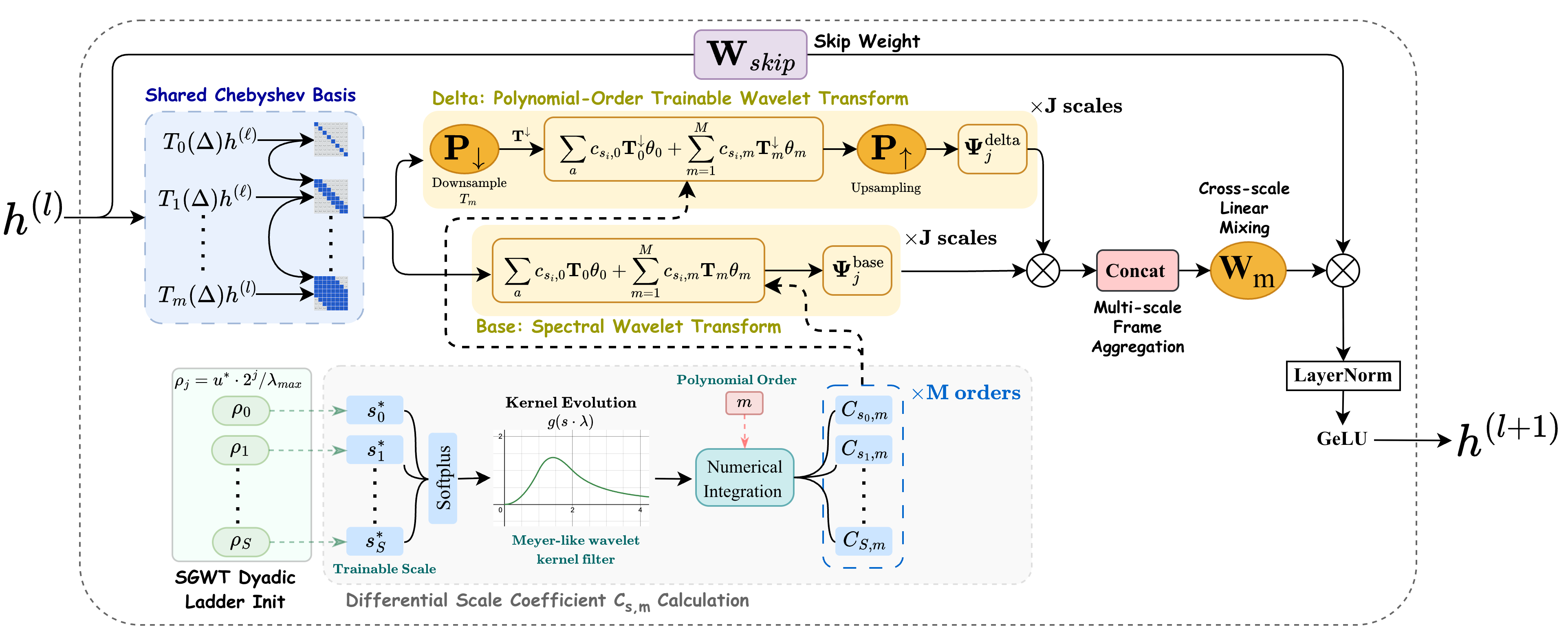}%
    \caption{Adaptive Wavelet Block. A shared Chebyshev basis supports every
    learned wavelet scale; analytic Base and low-rank Delta responses are mixed
    across scales and combined with the skip path before normalization and
    activation.}
    \label{fig:awb}
\end{figure}

\subsubsection*{Dyadic Scale Selection}
The spectral kernel used in HALO is a Meyer-like band-pass
function~\cite{hammond2011wavelets}:
\begin{equation}
    g(x) = \begin{cases}
        x^2 & 0 \leq x < 1, \\
        -5 + 11x - 6x^2 + x^3 & 1 \leq x \leq 2, \\
        4\,/\,x^2 & x > 2,
    \end{cases}
    \label{eq:meyer_kernel}
\end{equation}
which is zero at the origin (vanishing at DC), smooth, and decays for
large $x$.
Setting $g'(x^*) = 0$ in $[1,2]$ gives the unique maximizer
$x^* = 2 - 1/\sqrt{3} \approx 1.423$, the spectral location of peak wavelet
response.
We initialize the $J$ wavelet scales following the
\emph{SGWT dyadic ladder}~\cite{hammond2011wavelets},
$s_j^{\mathrm{init}} = x^* 2^{j}/\lambda_{\max}$ for $j = 0,1,\ldots,J-1$,
so that the $j$-th wavelet $\psi_{s_j, i}$ peaks at eigenvalue
$\lambda_j^* = x^*/s_j = \lambda_{\max}/2^j$, tiling the spectrum in
logarithmically spaced bands.

\subsubsection*{Trainable Scale Parameterization}
While the dyadic initialization provides a principled spectral tiling,
different PDE operators concentrate energy at different frequency
scales.
We therefore make all scales \emph{fully trainable} by
reparameterizing each as $s_j = \mathrm{softplus}(\rho_j) = \log(1 + e^{\rho_j})$,
with $\rho_j = \mathrm{softplus}^{-1}(s_j^{\mathrm{init}})$ as the
initial value.
The softplus map guarantees $s_j > 0$ for all $\rho_j \in \mathbb{R}$,
eliminating the need for constrained optimization.
The parameters $\{\rho_j\}$ are optimized jointly with all network
weights, letting each wavelet level track the dominant frequency
content of the target operator; the resulting per-benchmark scales are
reported in the Appendix.

\subsubsection*{Adaptive Chebyshev Coefficients}
At layer $\ell$, each learned scale $s_j^{(\ell)}$ determines an
order-$M$ Chebyshev approximation of the scaled spectral kernel
$g(s_j^{(\ell)}\lambda)$. The corresponding coefficients are
\begin{equation}
\begin{aligned}
c_{s_j^{(\ell)},m}
&=
\frac{2}{\pi}
\int_{0}^{\pi}
\cos(m\theta)\,
g\!\left(
s_j^{(\ell)}\lambda(\theta)
\right)d\theta,
\; m=0,\ldots,M,\\
\lambda(\theta)
&=
\frac{\lambda_{\max}}{2}
\left(\cos\theta+1\right).
\end{aligned}
\label{eq:adaptive_cheby_coeff}
\end{equation}
These coefficients translate the learned scale into the scalar weights
applied to the shared Chebyshev basis, thereby adapting the spectral
response without recomputing the polynomial features.

\subsubsection*{Differentiable Quadrature}
We evaluate Eq.~\eqref{eq:adaptive_cheby_coeff} using a $Q$-point
Chebyshev--Gauss quadrature rule. Let
\begin{equation}
    \theta_q=\frac{\left(q-\tfrac{1}{2}\right)\pi}{Q},
    \qquad q=1,\ldots,Q.
    \label{eq:quadrature_nodes}
\end{equation}
The coefficients are then approximated as
\begin{equation}
    c_{s_j^{(\ell)},m}
    \approx
    \frac{2}{Q}
    \sum_{q=1}^{Q}
    \cos(m\theta_q)\,
    g\!\left(
        s_j^{(\ell)}
        \frac{\lambda_{\max}}{2}
        [\cos\theta_q+1]
    \right).
    \label{eq:quadrature_coeff}
\end{equation}
The nodes $\{\theta_q\}_{q=1}^{Q}$ and cosine factors
$\{\cos(m\theta_q)\}_{m,q}$ are precomputed and shared across layers
and scales. Only the evaluations of
$g(s_j^{(\ell)}\lambda)$ vary with the learned scales, making the
coefficient computation inexpensive and differentiable with respect to
$\rho_j^{(\ell)}$.
Appendix~\ref{sec:approx_results} gives the supporting quadrature-exactness
result and the Chebyshev truncation-error guarantee.

\subsubsection*{Shared Basis}
At layer $\ell$, the shared tensors represent
\begin{equation*}
    \mathbf{T}_m
    =T_m(\widetilde{\Delta})\,\mathbf{h}^{(\ell)},
    \qquad m=0,\ldots,M.
\end{equation*}
Thus, $\mathbf{T}_m$ applies the degree-$m$ Chebyshev polynomial to the hidden
features. The tensors are computed \emph{once} for all $J$ scales via the
three-term recurrence:
\begin{equation}
    \mathbf{T}_0 = \mathbf{h}^{(\ell)},\quad
    \mathbf{T}_1 = \widetilde{\Delta}\,\mathbf{h}^{(\ell)},\quad
    \mathbf{T}_m = 2\widetilde{\Delta}\,\mathbf{T}_{m-1} - \mathbf{T}_{m-2},
    \label{eq:cheby_recurrence}
\end{equation}
where each step is a single sparse product $\widetilde{\Delta}\,\mathbf{T}_{m-1}$.
Crucially, these $M{+}1$ tensors are \emph{shared} across all $J$ scales; only
the scalar coefficients $c_{s_j^{(\ell)},m}$ differ per scale.

\begin{remark}[Per-layer cost of the shared Chebyshev AWB]
\label{rem:cost}
Sharing the polynomial basis reduces the AWB cost to $M$ sparse products,
$\mathcal{O}(M\,\mathrm{nnz}(\widetilde{\Delta})\,d_h)$, which is
\emph{independent of the number of scales $J$} (versus $JM$ for naive filtering).
The memory for the shared basis is $\mathcal{O}(Mnd_h)$. Because our $k$-NN and
$k$-ring constructions yield $\mathrm{nnz}(\widetilde{\Delta})=\mathcal{O}(nk^2)$,
the layer scales linearly with the node count $n$.
\end{remark}

In the implementation, the repeated products in
Eq.~\eqref{eq:cheby_recurrence} use a width-aware Sliced-ELLPACK backend
(SELL-C-1): the rescaled Laplacian is converted once, cached, and applied by a
custom Triton SpMM kernel, with CSR retained for narrow operands and CPU
execution. This layout matches the near-uniform row lengths induced by our
$k$-NN and $k$-ring hypergraphs, enabling coalesced structure reads with little
padding; symmetry of $\widetilde{\Delta}$ also lets the backward pass reuse the
same stored operator. On the representative Airfoil product, SELL-C-1 is
$2.0\times$ faster than cuSPARSE CSR, and the backend reduces the aggregate
one-epoch time across the benchmark suite by $1.38\times$ without changing the
model outputs to the reported precision. Appendix~\ref{app:sell_backend}
documents the layout and controlled timing study. On the two 3D geometry
benchmarks, where every sample carries its own mesh and those meshes are the
largest in the suite, $\widetilde{\Delta}$ is never assembled but applied
through its incidence factors, which keeps the per-geometry operator set
resident at the same result (Appendix~\ref{app:factored_laplacian}).

\begin{proposition}[$M$-hop localization of the AWB]
\label{prop:local}
Let $d_{\mathcal{G}}(i,v)$ denote the hop distance on the induced
adjacency graph, where two vertices are adjacent if they share a
hyperedge. For any scale $s_j^{(\ell)}>0$, let $\widehat{\mathcal{W}}_{s_j^{(\ell)}}^{(M)}$
denote the order-$M$ Chebyshev approximation of the wavelet operator
$\mathcal{W}_{s_j^{(\ell)}}$,
\begin{equation}
\widehat{\mathcal{W}}_{s_j^{(\ell)}}^{(M)}
=
\frac{1}{2}c_{s_j^{(\ell)},0}I
+
\sum_{m=1}^{M}
c_{s_j^{(\ell)},m}
T_m(\widetilde{\Delta}).
\label{eq:truncated_wavelet_operator}
\end{equation}
The wavelet localized at vertex $i$ satisfies
\begin{equation}
\left(
\widehat{\mathcal{W}}_{s_j^{(\ell)}}^{(M)}
\delta_i
\right)_v
=0
\qquad
\text{whenever}\qquad
d_{\mathcal{G}}(i,v)>M.
\label{eq:m_hop_localization}
\end{equation}
Thus, the order-$M$ wavelet operator is supported within the $M$-hop
neighborhood of its center, independently of the learned scale
$s_j^{(\ell)}$ and the coefficients
$\{c_{s_j^{(\ell)},m}\}_{m=0}^{M}$.
\end{proposition}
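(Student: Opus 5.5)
The argument rests on a sparsity-pattern fact about $\widetilde{\Delta}$ and its powers, and never uses the values of $c_{s_j^{(\ell)},m}$. That is why the conclusion holds for every scale $s_j^{(\ell)}$.

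\textbf{Step 1: sparsity of $\widetilde{\Delta}$.} From Eq.~\eqref{eq:hg_laplacian},
\[
(\widetilde{\Delta})_{uv}=\Bigl(\tfrac{2}{\lambda_{\max}}-1\Bigr)\delta_{uv}-\tfrac{2}{\lambda_{\max}}\,(D_v^{-1/2}HW_eD_e^{-1}H^\top D_v^{-1/2})_{uv}.
\]
The off-diagonal part has entries $\sum_{e}H_{ue}H_{ve}(W_e)_{ee}/(D_e)_{ee}$, up to the diagonal rescaling by $D_v^{-1/2}$. Since $H\ge 0$, this sum is nonzero only if $u$ and $v$ lie in a common hyperedge. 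Hence $(\widetilde{\Delta})_{uv}=0$ whenever $u\neq v$ and $d_{\mathcal{G}}(u,v)>1$. Equivalently, $\widetilde{\Delta}$ is supported on $\{(u,v): d_{\mathcal{G}}(u,v)\le 1\}$, where the diagonal has distance $0$.

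\textbf{Step 2: powers of $\widetilde{\Delta}$.} I would show by induction on $p$ that $(\widetilde{\Delta}^p)_{vi}=0$ whenever $d_{\mathcal{G}}(i,v)>p$. Write
\[
(\widetilde{\Delta}^{p})_{vi}=\sum_u(\widetilde{\Delta})_{vu}(\widetilde{\Delta}^{p-1})_{ui}.
\]
A term can be nonzero only if $d_{\mathcal{G}}(v,u)\le 1$ and $d_{\mathcal{G}}(u,i)\le p-1$. The triangle inequality for the hop metric then forces $d_{\mathcal{G}}(v,i)\le p$. Vertices in different connected components have infinite distance and are covered by the same argument.

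\textbf{Step 3: the Chebyshev polynomials.} Each $T_m$ is a polynomial of degree $m$, so $T_m(\widetilde{\Delta})=\sum_{p=0}^m a_{m,p}\widetilde{\Delta}^p$. By Step 2, $(T_m(\widetilde{\Delta}))_{vi}=0$ whenever $d_{\mathcal{G}}(i,v)>m$. One could instead run an induction directly through the recurrence $T_m=2\widetilde{\Delta}T_{m-1}-T_{m-2}$ of Eq.~\eqref{eq:cheby_recurrence}, but the monomial expansion is simpler.

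\textbf{Step 4: the truncated wavelet operator.} The operator $\widehat{\mathcal{W}}^{(M)}_{s_j^{(\ell)}}$ in Eq.~\eqref{eq:truncated_wavelet_operator} is a linear combination of $T_0=I,\dots,T_M$, each supported on pairs with $d_{\mathcal{G}}\le M$. Therefore $(\widehat{\mathcal{W}}^{(M)}_{s_j^{(\ell)}}\delta_i)_v=(\widehat{\mathcal{W}}^{(M)}_{s_j^{(\ell)}})_{vi}=0$ whenever $d_{\mathcal{G}}(i,v)>M$. The coefficients enter only as scalars in this combination, so the support bound is independent of them and of $s_j^{(\ell)}$.

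The only step needing care is Step 1. One must check that the normalizations $D_v^{-1/2}$ and $D_e^{-1}$ are diagonal and introduce no fill-in. One must also note that the identity shift and the rescaling by $\lambda_{\max}$ change only diagonal entries, with $\lambda_{\max}>0$ assumed so that $\widetilde{\Delta}$ is defined. Cancellations can only create extra zeros, so they never break the argument.
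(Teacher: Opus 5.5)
Your proof is correct and follows the paper's argument essentially step for step. Both proofs get the off-diagonal sparsity of $\widetilde{\Delta}$ from the structure of $D_v^{-1/2}HW_eD_e^{-1}H^\top D_v^{-1/2}$, then induct on matrix powers using the triangle inequality for the hop metric, and finish with the degree-$m$ polynomial structure of $T_m$ and linearity in the coefficients. Your observation that only the implication ``nonzero $\Rightarrow$ adjacent'' is needed, rather than the paper's no-cancellation ``iff'', is a minor sharpening and not a different route.
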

\begin{proof}
The off-diagonal part of the normalized Laplacian arises solely from its second
term, whose $(v,i)$ entry, for $v\neq i$, is the negative of a nonnegative sum,
\begin{equation}
   \Delta_{vi}
   =-\frac{1}{\sqrt{d_v d_i}}\sum_{e} H_{ve}\,
     \frac{(W_e)_{ee}}{(D_e)_{ee}}\,H_{ie}.
   \label{eq:app_p1_offdiag}
\end{equation}
Since every summand is nonnegative, no cancellation is possible, so this entry
is nonzero exactly when some hyperedge contains both $i$ and $v$; that is,
\begin{equation}
   \Delta_{vi}\neq 0 \iff d_{\mathcal{G}}(i,v)=1 ,
   \label{eq:app_p1_adj}
\end{equation}
and the affine rescaling
$\widetilde{\Delta}=(2/\lambda_{\max})\Delta-I$ inherits this off-diagonal
pattern. Hence one factor of $\widetilde{\Delta}$ propagates a signal by at
most one hop. We claim, by induction on $m$, that
\begin{equation}
   (\widetilde{\Delta}^{\,m})_{vi}=0
   \qquad\text{whenever}\qquad d_{\mathcal{G}}(i,v)>m .
   \label{eq:app_p1_power}
\end{equation}
The base case $\widetilde{\Delta}^{0}=I$ is immediate. Assuming the claim for
$m-1$ and expanding the matrix power,
\begin{equation}
   (\widetilde{\Delta}^{\,m})_{vi}
   =\sum_{w}(\widetilde{\Delta}^{\,m-1})_{vw}\,
     (\widetilde{\Delta})_{wi},
   \label{eq:app_p1_expand}
\end{equation}
a summand is nonzero only if $d_{\mathcal{G}}(v,w)\le m-1$ and
$d_{\mathcal{G}}(w,i)\le 1$, whence $d_{\mathcal{G}}(i,v)\le m$ by the
triangle inequality, proving Eq.~\eqref{eq:app_p1_power}. Finally, $T_m$ has
degree $m$, so by Eq.~\eqref{eq:app_p1_power} the term
$T_m(\widetilde{\Delta})$ is supported within $m$ hops of $i$. The wavelet in
Eq.~\eqref{eq:truncated_wavelet_operator} superposes only terms with $m\le M$,
and the coefficients $c_{s,m}$ reweight them without enlarging their support.
Therefore,
$\bigl(\widehat{\mathcal{W}}_{s}^{(M)}\delta_i\bigr)_v=0$ whenever
$d_{\mathcal{G}}(i,v)>M$.
\end{proof}

\subsubsection*{Base and Delta Wavelet Transforms}
As shown in Figure~\ref{fig:awb}, each scale $s_j^{(\ell)}$
produces two complementary outputs that together constitute the
per-scale wavelet response.

\noindent\textit{Base: Spectral Wavelet Transform.}
The shared Chebyshev basis is directly combined with the
scale-specific coefficients, realizing the fixed spectral wavelet
response of Eq.~\eqref{eq:cheby_approx} at scale $s_j^{(\ell)}$:
\begin{equation}
    \mathbf{\Psi}^{\mathrm{base},(\ell)}_{j}
    = \frac{1}{2}\,c_{s_j^{(\ell)},0}\,\mathbf{T}_0
    + \sum_{m=1}^{M} c_{s_j^{(\ell)},m}\,\mathbf{T}_m
    \in \mathbb{R}^{n \times d_h}.
    \label{eq:base_path}
\end{equation}

\noindent\textit{Delta: Polynomial-Order Trainable Wavelet Transform.}
To inject additional learnable spectral interactions at reduced cost,
the Delta path passes the Chebyshev basis through a
\emph{down-projection} $\mathbf{P}_{\downarrow}^{(\ell)}
\in \mathbb{R}^{d_h \times d_c}$, where $d_c \ll d_h$, applies per-order
trainable kernels $\Theta_{j,m}^{(\ell)} \in
\mathbb{R}^{d_c \times d_c}$ initialized at zero, and restores the
feature dimension through an independent \emph{up-projection}
$\mathbf{P}_{\uparrow}^{(\ell)} \in \mathbb{R}^{d_c \times d_h}$:
\begin{equation}
\begin{split}
    \mathbf{\Psi}^{\mathrm{delta},(\ell)}_{j}
    &= \tfrac{1}{2}\,c_{s_j^{(\ell)},0}\,
       \bigl(\mathbf{T}_0\mathbf{P}_{\downarrow}^{(\ell)}\bigr)
       \Theta_{j,0}^{(\ell)}\mathbf{P}_{\uparrow}^{(\ell)}\\
    &\quad + \sum_{m=1}^{M} c_{s_j^{(\ell)},m}\,
       \bigl(\mathbf{T}_m\mathbf{P}_{\downarrow}^{(\ell)}\bigr)
       \Theta_{j,m}^{(\ell)}\mathbf{P}_{\uparrow}^{(\ell)}.
\end{split}
    \label{eq:delta_path}
\end{equation}
Zero initialization of $\{\Theta_{j,m}^{(\ell)}\}$ ensures the Delta
transform is silent at startup, letting the Base transform dominate
early training while the Delta path gradually learns corrective
higher-order spectral interactions.
The combined per-scale wavelet output is
$\mathbf{\Psi}_j^{(\ell)} = \mathbf{\Psi}^{\mathrm{base},(\ell)}_{j} +
\mathbf{\Psi}^{\mathrm{delta},(\ell)}_{j}
\in \mathbb{R}^{n \times d_h}$.

\subsubsection*{Cross-Scale Mixing and Skip Connection}
The $J$ per-scale outputs are concatenated along the feature dimension and
passed through a learned mixing layer
$\mathbf{W}_{\mathrm{mix}}^{(\ell)} \in \mathbb{R}^{Jd_h \times d_h}$, giving
the discrete approximation
$\left.\mathcal{K}_{\phi}^{(\ell)}v_\ell\right|_{\mathcal{V}}
\approx \mathbf{V}^{(\ell)}$ on the sampled nodes:
\begin{equation}
    \mathbf{V}^{(\ell)}
    = \bigl[
        \mathbf{\Psi}_{0}^{(\ell)} \;\|\; \cdots \;\|\; \mathbf{\Psi}_{J-1}^{(\ell)}
      \bigr]\mathbf{W}_{\mathrm{mix}}^{(\ell)}
    \in \mathbb{R}^{n \times d_h}.
    \label{eq:scale_mix}
\end{equation}
A skip connection $\mathbf{h}^{(\ell)}\,W_{\text{skip}}$ (with
$W_{\text{skip}} \in \mathbb{R}^{d_h \times d_h}$) preserves low-frequency
information bypassing the wavelet branches, and the AWB update applies Layer
Normalization ($\mathrm{LN}$), which stabilizes training during long
autoregressive rollouts:
\begin{equation}
    \mathbf{h}^{(\ell+1)}
    = \mathrm{GELU}\!\left(
        \mathrm{LN}\!\left(
            \mathbf{V}^{(\ell)} + \mathbf{h}^{(\ell)}\,W_{\text{skip}}
        \right)
      \right).
    \label{eq:wib_output}
\end{equation}

\subsection{Output Projection}

After $L$ AWB layers, the final hidden state $\mathbf{h}^{(L)}$ is summed
with the preserved input uplift $\mathbf{h}^{(0)}$ via a \emph{global input
residual} connection---a direct path across all $L$ AWB layers---and a
linear projection decodes it to the target output dimension:
\begin{equation}
    \hat{u}_i
    = \left(\mathbf{h}^{(L)}_i + \mathbf{h}^{(0)}_i\right) W_{\text{out}},
    \label{eq:output_proj}
\end{equation}
with $W_{\text{out}} \in \mathbb{R}^{d_h \times d_{\text{out}}}$ and
\emph{no} output nonlinearity, appropriate for regression tasks.
The same decoder serves both regimes: for steady-state problems
Eq.~\eqref{eq:output_proj} yields the target field in a single forward pass,
whereas for time-dependent problems HALO is applied autoregressively over a
window of fixed width. Let
$X^{(r)} = \bigl[\,u^{(r-T_{\text{in}})};\dots;u^{(r-1)}\,\bigr]$ denote the
history consumed at rollout step $r$, so that $X^{(1)}$ is the observation
$a$ of Eq.~\eqref{eq:input_window}. The network predicts the next snapshot
$\hat{u}^{(r)}$ from $[X^{(r)};\,x_i;\,z_i]$, and the window then advances by
discarding its oldest snapshot and appending that prediction,
\begin{equation}
    X^{(r+1)}
    = \bigl[\,u^{(r-T_{\text{in}}+1)};\dots;u^{(r-1)};\,\hat{u}^{(r)}\,\bigr],
    \label{eq:window_shift}
\end{equation}
so the window width---and with it $d_{\text{in}}$ and $W_{\text{in}}$---is
unchanged across the rollout. The window is seeded with observed snapshots and
turns progressively synthetic: after $r$ steps it holds
$\max(0,\,T_{\text{in}}-r)$ observed and $\min(r,\,T_{\text{in}})$ predicted
snapshots, so beyond $r = T_{\text{in}}$ the model forecasts entirely from its
own output. Training must therefore expose the network to its own error
distribution, which is the role of the pushforward noise described in the
Appendix.

\subsection{Tight-Frame Regularization}

The trainable scales are optimized jointly with the prediction objective. To
prevent them from collapsing onto a narrow part of the spectrum, we regularize
the coverage of the resulting wavelet bank.

\begin{definition}[Hypergraph wavelet frame coverage and defect]
\label{def:frame}
For scales $\mathbf{s}=\{s_j\}_{j=0}^{J-1}$, the \emph{total spectral
coverage} of the bank is
$G_{\mathbf{s}}(\lambda):=\sum_{j=0}^{J-1} g(s_j\lambda)^2$.
The scales form a \emph{tight frame} on the nonzero spectrum
$\sigma_+(\Delta)=\{\lambda_k:\lambda_k>0\}$ if $G_{\mathbf{s}}(\lambda)=C$
for all $\lambda\in\sigma_+(\Delta)$ and some $C>0$, and the
\emph{tight-frame defect} is
\begin{equation}
    \mathcal{D}_{\mathrm{TF}}(\mathbf{s})
    := \mathrm{Var}_{\lambda\in\sigma_+(\Delta)}\!\big[G_{\mathbf{s}}(\lambda)\big].
    \label{eq:tf_defect}
\end{equation}
\end{definition}

The restriction to $\sigma_+(\Delta)$ reflects that the band-pass kernel
vanishes at $\lambda=0$ (i.e.\ $g(0)=0$), so the DC mode is excluded by
construction. A vanishing defect is the condition under which the wavelet bank
preserves signal energy and admits a stable inverse on the non-DC subspace.

\begin{proposition}[Tight frame $\Rightarrow$ energy conservation and stable inversion]
\label{prop:frame}
Let $\mathcal{W}_{s_j}$ be the wavelet analysis operator of scale $s_j$
(Eq.~\eqref{eq:wavelet_atom}; so that, at layer $\ell$,
$\mathbf{\Psi}^{\mathrm{base},(\ell)}_j=
\mathcal{W}_{s_j^{(\ell)}}\mathbf{h}^{(\ell)}$ in
Eq.~\eqref{eq:base_path}), and let $P_+$ be the orthogonal projector onto the
eigenvectors of $\Delta$ with nonzero eigenvalue. If the scales $\mathbf{s}$
form a tight frame with constant $C$ (Definition~\ref{def:frame}), then every
signal $x\in\mathbb{R}^{n}$ satisfies the Parseval identity
\[
   \sum_{j=0}^{J-1}\bigl\|\mathcal{W}_{s_j}x\bigr\|_2^{2}=C\,\|P_+x\|_2^{2},
\]
the frame operator collapses to
$S:=\sum_{j}\mathcal{W}_{s_j}^{\top}\mathcal{W}_{s_j}=C\,P_+$, and the non-DC
component of $x$ is recovered by the stable inversion
$P_+x=C^{-1}\sum_{j}\mathcal{W}_{s_j}^{\top}\bigl(\mathcal{W}_{s_j}x\bigr)$.
\end{proposition}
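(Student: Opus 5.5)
The argument is a direct spectral computation in the orthonormal eigenbasis of $\Delta$. Since $\Delta$ is symmetric positive semidefinite, we write $\Delta=U\Lambda U^\top$ with $U$ orthogonal. Each $\mathcal{W}_{s_j}=Ug(s_j\Lambda)U^\top$ is then symmetric, because $g(s_j\Lambda)$ is a real diagonal matrix. We also expand $P_+=U\,\mathbf{1}_{\{\Lambda>0\}}U^\top$, where $\mathbf{1}_{\{\Lambda>0\}}$ is the diagonal indicator of the nonzero eigenvalues. With these expressions, all three claims reduce to identities between diagonal matrices.

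\textbf{Step 1 (frame operator).} First compute
\[
S=\sum_j\mathcal{W}_{s_j}^\top\mathcal{W}_{s_j}=U\Bigl(\sum_j g(s_j\Lambda)^2\Bigr)U^\top=U\,G_{\mathbf{s}}(\Lambda)\,U^\top,
\]
where $G_{\mathbf{s}}$ acts entrywise on the diagonal. Next, check the diagonal entries of $G_{\mathbf{s}}(\Lambda)$ one eigenvalue at a time.
\begin{itemize}
\item For $\lambda_k>0$, the tight-frame hypothesis of Definition~\ref{def:frame} gives $G_{\mathbf{s}}(\lambda_k)=C$.
\item For $\lambda_k=0$, the kernel in Eq.~\eqref{eq:meyer_kernel} satisfies $g(0)=0$, so $g(s_j\cdot 0)=0$ for every $j$ and hence $G_{\mathbf{s}}(0)=0$.
\end{itemize}
Therefore $G_{\mathbf{s}}(\Lambda)=C\,\mathbf{1}_{\{\Lambda>0\}}$, which gives $S=CP_+$.

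\textbf{Step 2 (Parseval).} The energy identity follows from Step 1:
\[
\sum_j\|\mathcal{W}_{s_j}x\|_2^2=x^\top Sx=C\,x^\top P_+x=C\|P_+x\|_2^2.
\]
The last equality uses that $P_+$ is an orthogonal projector, so $P_+=P_+^\top=P_+^2$.

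\textbf{Step 3 (inversion).} Apply $S$ to $x$ and use Step 1 to get $\sum_j\mathcal{W}_{s_j}^\top(\mathcal{W}_{s_j}x)=CP_+x$. Since $C>0$, dividing by $C$ recovers $P_+x$. This inversion is stable: its operator norm on the non-DC subspace is $C^{-1/2}$ relative to the coefficient energy, as read off from Step 2.

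The argument has no substantive obstacle. The only points that need care are two details. First, the DC eigenspace is handled correctly only because $g(0)=0$ holds exactly, which makes the zero eigenvalues contribute nothing to $S$. Second, the proposition concerns the exact operator $\mathcal{W}_{s_j}$, not the Chebyshev surrogate in Eq.~\eqref{eq:truncated_wavelet_operator}. The identification $\mathbf{\Psi}^{\mathrm{base},(\ell)}_j=\mathcal{W}_{s_j^{(\ell)}}\mathbf{h}^{(\ell)}$ holds only up to the truncation error treated in the Appendix. The identities above are then applied column-wise to the feature matrix $\mathbf{h}^{(\ell)}$.
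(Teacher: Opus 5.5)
Your proposal is correct and matches the paper's proof essentially step for step: diagonalize in the eigenbasis of $\Delta$ and use self-adjointness to get $S=U\,G_{\mathbf{s}}(\Lambda)\,U^{\top}$, split the spectrum into $\lambda_k>0$ (tight-frame hypothesis) and $\lambda_k=0$ (via $g(0)=0$) to conclude $S=CP_+$, then read off Parseval and the inversion from that identity. Your caveat that $\mathbf{\Psi}^{\mathrm{base},(\ell)}_j$ equals $\mathcal{W}_{s_j^{(\ell)}}\mathbf{h}^{(\ell)}$ only up to the Chebyshev truncation error is a fair sharpening of the statement's parenthetical, not a gap in the argument.
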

\begin{proof}
Because the kernel $g$ is real-valued and $\Delta=U\Lambda U^{\top}$ is
symmetric, each analysis operator is self-adjoint, giving
\begin{equation}
   \mathcal{W}_{s_j}^{\top}\mathcal{W}_{s_j}
   =\mathcal{W}_{s_j}^{2}
   =U\,g(s_j\Lambda)^{2}\,U^{\top}.
   \label{eq:app_p2_self}
\end{equation}
Summing over the $J$ scales, the frame operator is
\begin{equation}
   S=\sum_{j=0}^{J-1}\mathcal{W}_{s_j}^{\top}\mathcal{W}_{s_j}
    =U\Bigl(\sum_{j=0}^{J-1} g(s_j\Lambda)^{2}\Bigr)U^{\top},
   \label{eq:app_p2_S}
\end{equation}
so $S$ shares the eigenbasis $U$ and acts on the $k$-th mode by multiplication
with the total spectral coverage
\begin{equation}
   G_{\mathbf{s}}(\lambda_k)=\sum_{j=0}^{J-1} g(s_j\lambda_k)^{2}.
   \label{eq:app_p2_cov}
\end{equation}
For every $\lambda_k>0$ the tight-frame hypothesis
(Definition~\ref{def:frame}) gives $G_{\mathbf{s}}(\lambda_k)=C$, while for
$\lambda_k=0$ the assumption $g(0)=0$ gives $G_{\mathbf{s}}(0)=0$. Hence $S$
multiplies by $C$ on the span of the nonzero-eigenvalue eigenvectors and by $0$
on the null space of $\Delta$, i.e.
\begin{equation}
   S=C\,P_+ .
   \label{eq:app_p2_CP}
\end{equation}
The energy identity now follows from Eq.~\eqref{eq:app_p2_CP}: for any
$x\in\mathbb{R}^{n}$, using $P_+=P_+^{\top}=P_+^{2}$,
\begin{equation}
\begin{split}
   \sum_{j=0}^{J-1}\|\mathcal{W}_{s_j}x\|_2^{2}
   &=x^{\top}Sx=C\,x^{\top}P_+x\\
   &=C\,\|P_+x\|_2^{2}.
\end{split}
   \label{eq:app_p2_parseval}
\end{equation}
Finally, applying the identity $C^{-1}S=P_+$ to $x$ yields the stable
reconstruction
\begin{equation}
   P_+x=C^{-1}\sum_{j=0}^{J-1}\mathcal{W}_{s_j}^{\top}
   \bigl(\mathcal{W}_{s_j}x\bigr),
   \label{eq:app_p2_recon}
\end{equation}
whose analysis--synthesis gain is the constant $C^{-1}$ on every nonzero mode,
independent of the eigenvalue.
\end{proof}

\subsubsection*{Eigendecomposition-Free Evaluation}
Although Definition~\ref{def:frame} states the frame condition on the exact
spectrum $\sigma_+(\Delta)$, we never form it. The coverage
$G_{\mathbf{s}}(\lambda)=\sum_j g(s_j\lambda)^2$ is a smooth function of the
scales alone, so it can be sampled at any frequency without eigenvectors, and the
spectral band is fixed a priori since $\sigma(\Delta)\subseteq[0,1]$ for every
hypergraph (Appendix~\ref{sec:theory}, Lemma~\ref{lem:spectral_bound}). We
therefore flatten
$G_{\mathbf{s}}$ across this band by minimizing a cheap surrogate defect,
\begin{equation}
    \widehat{\mathcal{D}}_{\mathrm{TF}}(\mathbf{s})
    := \mathrm{Var}_{r=1,\dots,R}\!\big[G_{\mathbf{s}}(\lambda_r)\big],
    \label{eq:tf_defect_surrogate}
\end{equation}
evaluated on a fixed frequency grid $\{\lambda_r\}$ over the band.
The only spectral quantity required is $\lambda_{\max}$, estimated once by sparse
power iteration~\cite{golub2013matrix} at $\mathcal{O}(\mathrm{nnz}(\Delta))$ cost
($\lambda_{\max}\approx0.997$ on our hypergraphs); driving
$\widehat{\mathcal{D}}_{\mathrm{TF}}$ to zero thus flattens $G_{\mathbf{s}}$ with
no eigendecomposition at any stage.

Because trainable scales can drift from their dyadic initialization and
break this property, we penalize the per-layer surrogate defect
$\widehat{\mathcal{D}}_{\mathrm{TF}}(\mathbf{s}^{(\ell)})$, averaged over all $L$
AWB layers to jointly regularize the scales in each block:
\begin{equation}
    \mathcal{L}_{\mathrm{TF}}
    = \frac{1}{L}\sum_{\ell=0}^{L-1}
      \widehat{\mathcal{D}}_{\mathrm{TF}}\!\bigl(\mathbf{s}^{(\ell)}\bigr).
    \label{eq:tight_frame_loss}
\end{equation}
The total training objective is
\begin{equation}
    \mathcal{L}
    = \mathcal{L}_{\mathrm{data}}
    + \alpha_{\mathrm{grad}}\,\mathcal{L}_{\mathrm{grad}}
    + \beta_{\mathrm{TF}}\,\mathcal{L}_{\mathrm{TF}},
    \label{eq:total_loss}
\end{equation}
where $\mathcal{L}_{\mathrm{data}}$ is the data-fidelity term,
$\mathcal{L}_{\mathrm{grad}}$ is a periodic finite-difference gradient
loss that penalizes spatial smoothness errors, and
$\alpha_{\mathrm{grad}}, \beta_{\mathrm{TF}} \geq 0$ are hyperparameters.
On steady tasks $\mathcal{L}_{\mathrm{data}}$ is the relative $L_2$ error of the
single forward pass. On time-dependent tasks it is accumulated over the rollout
of Eq.~\eqref{eq:window_shift} under a per-step weight,
\begin{equation}
    \mathcal{L}_{\mathrm{data}}
    = \sum_{r=1}^{T_{\mathrm{roll}}} w_r\,
      \frac{\lVert \hat{u}^{(r)} - u^{(r)} \rVert_2}
           {\lVert u^{(r)} \rVert_2},
    \qquad
    w_r = 1 + \frac{r-1}{T_{\mathrm{roll}}-1}\bigl(w_{\mathrm{end}}-1\bigr),
    \label{eq:rollout_loss}
\end{equation}
which ramps linearly along the rollout from $w_1 = 1$ at the first step to
$w_{T_{\mathrm{roll}}} = w_{\mathrm{end}}$ at the last, and reduces to uniform
weighting at $w_{\mathrm{end}} = 1$. Late steps predict from a window that is
largely self-generated and carry compounded error; left unweighted they are
dominated by the easy first step, and the optimizer trades long-horizon accuracy
for one-step accuracy. The weights are constants inside the forward loss
graph---not a curriculum annealed over epochs, and not a modification of the
gradients---so they propagate as ordinary coefficients. The Appendix lists the
per-benchmark $w_{\mathrm{end}}$ and gives the complete training and inference
procedure as pseudocode, from hypergraph construction through the AWB stack to
the autoregressive rollout.

\subsection{Permutation Equivariance}

Because HALO builds the hypergraph directly from the node coordinates and
processes nodes through shared, row-wise operations, its prediction cannot
depend on how the nodes happen to be ordered---a prerequisite for a
discretization-consistent operator~\cite{kovachki2023neural}.

\begin{proposition}[Permutation equivariance]
\label{prop:equiv}
Let $P\in\mathbb{R}^{n\times n}$ be any permutation matrix. If the nodes are
relabeled so that $\mathbf{h}^{(0)}\mapsto P\mathbf{h}^{(0)}$ and
$\Delta\mapsto P\Delta P^{\top}$ (equivalently
$\widetilde{\Delta}\mapsto P\widetilde{\Delta}P^{\top}$), then the HALO output
transforms equivariantly, $\hat{u}\mapsto P\hat{u}$.
\end{proposition}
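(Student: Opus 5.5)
We argue by induction over the layers of the network. The claim we carry through is that the hidden state satisfies $\mathbf{h}^{(\ell)}\mapsto P\mathbf{h}^{(\ell)}$ for every $\ell$ under the relabeling. Two elementary facts do all the work. First, any map acting row-wise with shared weights commutes with left-multiplication by $P$: right-multiplication by a weight matrix, bias addition, $\mathrm{GELU}$, and $\mathrm{LN}$ (which normalizes each node's feature row separately). Second, for any polynomial $p$, we have $p(P\widetilde{\Delta}P^{\top})=P\,p(\widetilde{\Delta})\,P^{\top}$, because $P^{\top}P=I$ makes every power satisfy $(P\widetilde{\Delta}P^{\top})^m=P\widetilde{\Delta}^mP^{\top}$.

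The base case is the uplift. The relabeling sends $\tilde{x}\mapsto P\tilde{x}$, and $\mathbf{h}^{(0)}=\mathrm{GELU}(\tilde{x}W_{\text{in}}+\mathbf{1}b_{\text{in}}^{\top})$ is row-wise, so $\mathbf{h}^{(0)}\mapsto P\mathbf{h}^{(0)}$. This is also given directly as a hypothesis.

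For the inductive step, fix an AWB. We first check that every scalar quantity is invariant.
\begin{itemize}
\item The spectrum of $P\Delta P^{\top}$ equals that of $\Delta$, so $\lambda_{\max}$ is unchanged.
\item Hence the rescaling $\widetilde{\Delta}=2\Delta/\lambda_{\max}-I$ transforms as $P\widetilde{\Delta}P^{\top}$.
\item The learned scales $s_j^{(\ell)}$ do not depend on node order. Therefore the coefficients $c_{s_j^{(\ell)},m}$ of Eq.~\eqref{eq:quadrature_coeff} are unchanged, since they depend only on $s_j^{(\ell)}$, $\lambda_{\max}$ and the fixed quadrature nodes.
\end{itemize}
Next we treat the Chebyshev basis using the recurrence \eqref{eq:cheby_recurrence}. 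Under the relabeling, $\mathbf{T}_0\mapsto P\mathbf{T}_0$ and
\[
\mathbf{T}_1\mapsto P\widetilde{\Delta}P^{\top}P\mathbf{h}^{(\ell)}=P\mathbf{T}_1 .
\]
A two-step induction on $m$ then gives $\mathbf{T}_m\mapsto P\mathbf{T}_m$ for all $m$.

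The remaining steps follow in order. The Base output \eqref{eq:base_path} is a scalar-weighted sum of the $\mathbf{T}_m$, so it maps to $P\mathbf{\Psi}^{\mathrm{base}}_j$. The Delta output \eqref{eq:delta_path} right-multiplies each $\mathbf{T}_m$ by node-independent matrices, so $P$ factors out on the left. The cross-scale mixing \eqref{eq:scale_mix} has the form $[P A_0\|\cdots\|P A_{J-1}]\mathbf{W}_{\mathrm{mix}}=P[A_0\|\cdots]\mathbf{W}_{\mathrm{mix}}$. The skip term gives $P\mathbf{h}^{(\ell)}W_{\text{skip}}$. Finally, $\mathrm{LN}$ and $\mathrm{GELU}$ in \eqref{eq:wib_output} are row-wise. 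Together these give $\mathbf{h}^{(\ell+1)}\mapsto P\mathbf{h}^{(\ell+1)}$.

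The output projection \eqref{eq:output_proj} is linear and row-wise. Applied to $P(\mathbf{h}^{(L)}+\mathbf{h}^{(0)})$ it yields $P\hat{u}$, which is the claim.

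For time-dependent tasks we apply the result once per rollout step. The window update \eqref{eq:window_shift} concatenates node-wise snapshots, so a permuted prediction produces a permuted window, and induction over $r$ extends equivariance to the full rollout.

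The main subtlety is not algebraic. It is the need to confirm that no component secretly depends on node indexing. Two points need care: that $\lambda_{\max}$ from power iteration is treated as the exact permutation-invariant spectral radius (or that the iteration is started from a permutation-consistent vector such as $\mathbf{1}$), and that LayerNorm acts over features per node rather than over nodes. Once these are settled, every step reduces to the two commutation facts stated at the outset.
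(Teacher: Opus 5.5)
Your proof is correct and follows essentially the paper's route: the conjugation identity $T_m(P\widetilde{\Delta}P^{\top})=P\,T_m(\widetilde{\Delta})P^{\top}$ (which you reach through the recurrence, an equivalent step) gives $\mathbf{\Psi}\mapsto P\mathbf{\Psi}$ on both the Base and Delta paths, and every other component is row-wise, so induction over the AWB layers and the output projection finishes the argument. Your explicit checks that $\lambda_{\max}$ and $c_{s_j^{(\ell)},m}$ are relabeling-invariant, and your rollout extension, are sound additions that the paper leaves implicit. One correction to your closing remark: seeding power iteration with $\mathbf{1}$ is not safe. Whenever vertex degrees are uniform (e.g.\ the periodic $k$-NN grids), $\mathbf{1}$ is a multiple of the null vector $D_v^{1/2}\mathbf{1}$ of Lemma~\ref{prop:dc}, so the iteration would return eigenvalue $0$ rather than $\lambda_{\max}$. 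Your first option, treating $\lambda_{\max}$ as the exact spectral quantity as the hypothesis $\widetilde{\Delta}\mapsto P\widetilde{\Delta}P^{\top}$ already does, is the right reading.
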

\begin{proof}
Since $P$ is a permutation matrix, $P^{\top}P=I$, so for every $k\ge0$
\begin{equation}
   (P\widetilde{\Delta}P^{\top})^{k}
   =P\,\widetilde{\Delta}^{k}\,P^{\top},
   \label{eq:app_p3_power}
\end{equation}
and therefore, for every Chebyshev polynomial,
\begin{equation}
   T_m(P\widetilde{\Delta}P^{\top})
   =P\,T_m(\widetilde{\Delta})\,P^{\top}.
   \label{eq:app_p3_cheby}
\end{equation}
Applying the shared basis to the relabeled features and using
Eq.~\eqref{eq:app_p3_cheby},
\begin{equation}
\begin{split}
   T_m(P\widetilde{\Delta}P^{\top})
   \bigl(P\mathbf{h}^{(\ell)}\bigr)
   &=P\,T_m(\widetilde{\Delta})\,P^{\top}P\mathbf{h}^{(\ell)}\\
   &=P\,T_m(\widetilde{\Delta})\mathbf{h}^{(\ell)},
\end{split}
   \label{eq:app_p3_apply}
\end{equation}
so both the Base and Delta wavelet responses
(Eqs.~\eqref{eq:base_path}--\eqref{eq:delta_path}) transform as
$\Psi\mapsto P\Psi$. The remaining components of the AWB update---the input
uplift, the cross-scale mixing $\mathbf{W}_{\mathrm{mix}}$
(Eq.~\eqref{eq:scale_mix}), the skip map $W_{\text{skip}}$, the GELU
activation, and Layer Normalization (Eq.~\eqref{eq:wib_output})---all act
identically and independently on each node, i.e.\ row-wise, and therefore
commute with the row permutation $P$:
\begin{equation}
   \mathbf{h}^{(\ell)}\mapsto P\mathbf{h}^{(\ell)}
   \quad\Longrightarrow\quad
   \mathbf{h}^{(\ell+1)}\mapsto P\mathbf{h}^{(\ell+1)}.
   \label{eq:app_p3_layer}
\end{equation}
By induction over the $L$ AWB layers and the row-wise output projection
(Eq.~\eqref{eq:output_proj}),
\begin{equation}
   \hat{u}\mapsto P\hat{u}.
   \label{eq:app_p3_out}
\end{equation}
Finally, the $k$-NN/$k$-ring builder depends only on the node coordinates, so
relabeling the nodes produces exactly the relabeled
$\Delta\mapsto P\Delta P^{\top}$ assumed above.
\end{proof}
\section{Experiments}
\label{sec:experiments}

We evaluate HALO on a suite of eight PDE benchmarks spanning regular grids (Darcy, Navier--Stokes, Allen--Cahn~\cite{li2021fourier,tripura2023wavelet}), unstructured meshes (Airfoil, Cylinder~\cite{pfaff2021learning}), 3D volumes (Supernova~\cite{ohana2024well}), and 3D aerodynamic geometries (ShapeNet-Car~\cite{wu2024transolver}, Aircraft~\cite{luo2025transolverpp}). All baselines are \textit{reproduced} in our setup under identical train/test splits and resolutions, and we report the mean relative $L_2$ error. Every HALO result is the mean over three independent training runs; its accompanying standard deviation is reported directly in Tables~\ref{tab:benchmark} and~\ref{tab:large_geom_results}. HALO is trained with the composite objective in Eq.~\eqref{eq:total_loss}; time-dependent problems additionally use autoregressive rollout training with a step-weighted loss and pushforward noise~\cite{brandstetter2022message}. All experiments use two NVIDIA~A40 48\,GB GPUs. Full dataset specifications, dataset-specific optimization and architecture settings, and representative held-out qualitative predictions are collected in the Appendix. Table~\ref{tab:benchmark} summarizes the primary results.

\subsection{Results}
Tables~\ref{tab:benchmark} and~\ref{tab:large_geom_results} show that HALO is the only
operator ranked in the top two on the field error of all eight benchmarks, and it
is best or tied-best on seven, trailing only on Navier--Stokes. It retains
the mesh flexibility of graph operators while narrowing the accuracy gap with
fixed-discretization frequency and state-space models.

\begin{table}[htbp]
\centering
{\fontsize{8.5}{10.2}\selectfont
\renewcommand{\arraystretch}{1.20}
\setlength{\tabcolsep}{4.75pt}
\begin{tabular*}{\linewidth}{@{\extracolsep{\fill}}llccccccc}
\toprule
\multicolumn{3}{c}{\textbf{Model Specification}} &
\multicolumn{3}{c}{\textbf{2D Regular Grid}} &
\multicolumn{2}{c}{\textbf{2D Unstructured Grid}} &
\multicolumn{1}{c}{\textbf{3D Volume}} \\
\cmidrule(lr){1-3}\cmidrule(lr){4-6}\cmidrule(lr){7-8}\cmidrule(lr){9-9}
\textbf{Family} & \textbf{Operator} &
{\fontsize{8}{9.5}\selectfont\bfseries Res.-Eq.} &
\textbf{Darcy} &
{\fontsize{8}{9.5}\selectfont\bfseries Navier--Stokes} &
{\fontsize{8}{9.5}\selectfont\bfseries Allen--Cahn} &
\textbf{Airfoil} &
\textbf{Cylinder} &
\textbf{Supernova} \\
\midrule
\multicolumn{3}{@{}l}{\textit{Problem regime}} &
Steady-state & Time-dep. & Steady-state &
Time-dep. & Time-dep. & Time-dep. \\
\midrule
\multirow{4}{*}{\textsc{Spectral}}
 & FNO~\cite{li2021fourier} & \cmark & 1.08 & \secondbase{13.67} & 2.39 & --     & --     & \thirdbase{55.76} \\
 & Geo-FNO~\cite{li2022geofno} & \cmark & 1.08 & \secondbase{13.67} & 2.39 & 2.80 & 59.06 & \thirdbase{55.76} \\
 & WNO~\cite{tripura2023wavelet} & \cmark & 0.84 & 38.49 & 6.43 & --     & --     & \secondbase{55.31} \\
 & LSM~\cite{wu2023latent} & \xmark & \thirdbase{0.65} & 22.25 & 4.43 & 1.38 & 59.55$^\ast$ & \bestbase{52.64} \\
\midrule
\multirow{2}{*}{\textsc{Transformer}}
 & Transolver~\cite{wu2024transolver} & \xmark & \secondbase{0.57} & 17.20 & \secondbase{1.73} & \bestbase{\textbf{1.14}} & \bestbase{3.28}$^\ast$ & 69.06 \\
 & GNOT~\cite{hao2023gnot} & \xmark & 1.05 & \thirdbase{13.80} & 64.53 & \thirdbase{1.21} & 53.14$^\ast$ & 73.35 \\
\midrule
\textsc{DeepONet} & POD-DeepONet~\cite{lu2021deeponet} & \xmark & 4.17 & 25.85 & 63.85 & 74.78 & --     & 80.58 \\
\midrule
\textsc{State-space} & LaMO~\cite{tiwari2025latent} & \xmark & \bestbase{\textbf{0.41}} & \bestbase{\textbf{3.68}} & \bestbase{0.74} & \secondbase{1.20} & \secondbase{3.80}$^\ast$ & 78.44 \\
\midrule
\multirow{2}{*}{\textsc{Graph}}
 & GNO~\cite{li2020neural} & \cmark & 3.46 & 45.35 & \thirdbase{2.15} & 2.55 & \thirdbase{7.95} & 65.24 \\
 & \ourscell{\textbf{HALO (Ours)}} & \ourscell{\cmark} & \ourscell{\textbf{0.41}} & \ourscell{12.14} & \ourscell{\textbf{0.46}} & \ourscell{\textbf{1.14}} & \ourscell{\textbf{2.50}} & \ourscell{\textbf{40.04}} \\
 & \ourscell{} & \ourscell{} & \ourscell{$\pm 0.0115$} & \ourscell{$\pm 0.2000$} & \ourscell{$\pm 0.0058$} & \ourscell{$\pm 0.0135$} & \ourscell{$\pm 0.0289$} & \ourscell{$\pm 0.4000$} \\
\bottomrule
\end{tabular*}

\vspace{3pt}
\begin{minipage}{\linewidth}
\small
\textit{Legend.}\enspace
\colorbox{TableBest}{\strut\hspace{0.9em}} strongest baseline;\enspace
\colorbox{TableSecond}{\strut\hspace{0.9em}} second baseline;\enspace
\colorbox{TableThird}{\strut\hspace{0.9em}} third baseline;\enspace
\colorbox{TableTint}{\strut\hspace{0.9em}} HALO.
\par\smallskip
\cmark/\xmark: resolution-equivariant/not resolution-equivariant;
``--'': unsupported; $^\ast$: dynamic node-set wrapper. Bold marks the best
overall result, including ties.
\end{minipage}
}
\caption{Mean relative $L_2$ error across the six field-prediction benchmarks
($\times 10^{-2}$); lower is better. For HALO, the second line gives the
standard deviation over three independent training runs. Baseline ranks are
computed independently for each dataset.}
\label{tab:benchmark}
\end{table}

\begin{table}[htbp]
\centering
{\fontsize{8.5}{10.2}\selectfont
\renewcommand{\arraystretch}{1.20}
\setlength{\tabcolsep}{3.0mm}
\begin{tabular}{lccccccccc}
\toprule
\multicolumn{2}{c}{\textbf{Model Specification}} &
\multicolumn{4}{c}{\textbf{ShapeNet-Car}} &
\multicolumn{4}{c}{\textbf{Aircraft}} \\
\cmidrule(lr){1-2}\cmidrule(lr){3-6}\cmidrule(lr){7-10}
\textbf{Model} & {\fontsize{8.5}{10}\selectfont\bfseries Res.-Eq.} &
\textbf{Vol.} $\downarrow$ & \textbf{Surf.} $\downarrow$ &
$C_D$ $\downarrow$ & $\rho_D$ $\uparrow$ &
\textbf{Field} $\downarrow$ & $C_l$ $\downarrow$ & $R_l^2$ $\uparrow$ &
\textbf{Surf.} $\downarrow$ \\
\midrule
PointNet~\cite{qi2017pointnet}        & \xmark & 4.94 & 11.04 & 2.98 & 95.83 & 15.20 & 9.50 & 98.20 & 16.90 \\
Graph U-Net~\cite{gao2019graphunets}  & \xmark & 4.71 & 11.02 & 2.26 & 97.25 & --    & 6.30 & 95.30 & 16.10 \\
MGN~\cite{pfaff2021learning}          & \xmark & 3.54 & \thirdbase{7.81} & 1.68 & 98.40  & 10.20 & 3.80 & \thirdbase{99.30} & 11.30 \\
\midrule
GNO~\cite{li2020neural}               & \cmark & 3.83 &  8.15 & 1.72 & 98.34 & 11.40 & \thirdbase{3.10} & 99.10 & 12.90 \\
Geo-FNO~\cite{li2022geofno}           & \cmark &16.70 & 23.78 & 6.64 & 82.80 & --    & --   & --    & --    \\
Galerkin~\cite{cao2021galerkin}       & \xmark & 3.39 &  8.78 & 1.79 & 97.64 & 10.10 & 6.90 & 87.90 & 11.80 \\
GNOT~\cite{hao2023gnot}               & \xmark & 3.29 &  7.98 & 1.78 & 98.33 & \thirdbase{8.11} & 3.30 & 99.10 & \thirdbase{9.30} \\
GINO~\cite{li2023gino}                & \cmark & 3.86 &  8.10 & 1.84 & 98.26 & 11.90 & 4.70 & 98.30 & 13.30 \\
3D-GeoCA~\cite{deng2024geoca}         & \xmark & \thirdbase{3.19} & \secondbase{7.79} & \thirdbase{1.59} & \thirdbase{98.42} & -- & \secondbase{2.20} & \thirdbase{99.30} & 9.70 \\
Transolver~\cite{wu2024transolver}    & \xmark & \secondbase{2.17} & 7.86 & \secondbase{1.36} & \bestbase{\textbf{99.08}} & \secondbase{5.23} & 3.70 & \secondbase{99.42} & \secondbase{9.20} \\
Transolver++~\cite{luo2025transolverpp} & \xmark & \bestbase{\textbf{2.15}} & \bestbase{7.72} & \bestbase{\textbf{1.33}} & \secondbase{99.00} & \bestbase{4.98} & \bestbase{1.40} & \bestbase{99.95} & \bestbase{6.40} \\
\midrule
\ourscell{\textbf{HALO (Ours)}} & \ourscell{\cmark} & \ourscell{\textbf{2.15}} & \ourscell{\textbf{7.70}} & \ourscell{1.39} & \ourscell{98.92} & \ourscell{\textbf{3.36}} & \ourscell{\textbf{1.38}} & \ourscell{\textbf{99.97}} & \ourscell{\textbf{4.93}} \\
\ourscell{} & \ourscell{} & \ourscell{$\pm 0.04$} & \ourscell{$\pm 0.16$} & \ourscell{$\pm 0.02$} & \ourscell{$\pm 0.02$} & \ourscell{$\pm 0.04$} & \ourscell{$\pm 0.02$} & \ourscell{$\pm 0.02$} & \ourscell{$\pm 0.16$} \\
\bottomrule
\end{tabular}
}
\caption{Aerodynamic prediction on two large 3D unstructured geometries
($\times10^{-2}$). HALO entries are mean $\pm$ standard deviation over three
independent runs. Cell colors follow the baseline-rank and HALO legend in
Table~\ref{tab:benchmark}; bold marks the best overall result, including ties.
``--'' denotes an unsupported setting. MGN: MeshGraphNet.}
\label{tab:large_geom_results}
\end{table}

\subsubsection*{Run-to-Run Stability}
Every HALO entry is the mean of three independent runs, with its standard
deviation shown directly beneath the mean in Tables~\ref{tab:benchmark}
and~\ref{tab:large_geom_results}. Steady-state and short-horizon mesh tasks vary
only in the third decimal of the unscaled relative error, whereas the longer
Navier--Stokes and Supernova rollouts show larger first-decimal variation in the
$\times10^{-2}$ reporting scale. Where HALO leads outright, this spread remains
well below the gap to the runner-up; for the reported Darcy--LaMO and
Airfoil--Transolver ties, the spread is comparable to the method difference, so
we make no finer ranking claim.

The benefit is clearest on unstructured meshes. Fixed-discretization models
remain competitive on Airfoil, whose samples share a mesh, but require a dynamic
node-set adapter for the variable-mesh Cylinder task. HALO instead constructs
its hypergraph directly from each sample and remains permutation-equivariant,
reducing Cylinder error by approximately $23.8\%$ relative to the strongest
adapted baseline and by $68.6\%$ relative to GNO without assuming a common
discretization; the corresponding native-mesh rollouts appear in the Appendix.

\subsubsection*{Large-Geometry Aerodynamics}
To assess scalability, we evaluate HALO on ShapeNet-Car~\cite{wu2024transolver}
($32$k-point meshes) and the larger Aircraft dataset~\cite{luo2025transolverpp},
whose CFD meshes reach roughly $330$k points per case.
Table~\ref{tab:large_geom_results} reports the complete comparison, including
the design-level drag and lift metrics as well as the underlying field errors.

On ShapeNet-Car, HALO obtains the lowest surface error ($7.70$) and ties the
best volume error ($2.15$). Transolver and Transolver++ remain slightly ahead on
$C_D$ and $\rho_D$, but among resolution-equivariant operators HALO is strongest
on all four metrics.

\begin{figure}[htbp]
\centering
\includegraphics[width=0.66\linewidth]{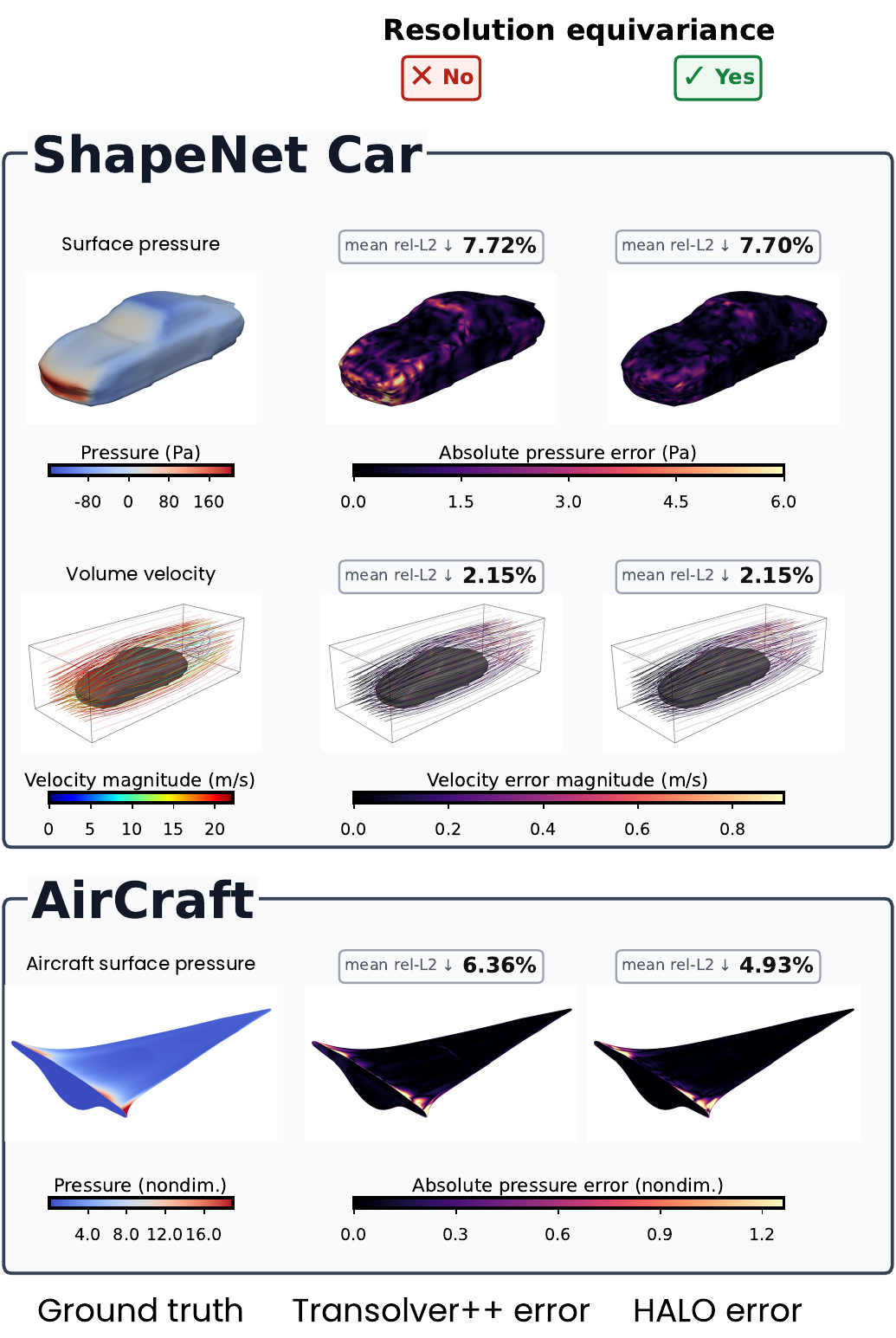}
\caption{Large-geometry prediction. Columns show the reference and
absolute errors of Transolver++ and HALO. ShapeNet-Car includes surface pressure
and volume velocity; Aircraft shows surface pressure. Labels report mean
relative-$L_2$ error, and $\times$/$\checkmark$ marks resolution equivariance.}
\label{fig:large_geom}
\end{figure}

The distinction becomes clearer on Aircraft: HALO leads every metric, reducing
Transolver++ field error from $4.98$ to $3.36$ and surface-pressure error from
$6.40$ to $4.93$, improvements of $32.5\%$ and $23.0\%$, respectively. This
widening advantage on a mesh roughly ten times larger supports the linear node
scaling of the sparse hypergraph operator (Remark~\ref{rem:cost}). Because each
sample here carries its own mesh, and these are the largest meshes in the suite,
both benchmarks keep the Laplacian in factored incidence form rather than
assembling it, which is what makes holding one operator per geometry feasible
(Appendix~\ref{app:factored_laplacian}).
Figure~\ref{fig:large_geom} shows that the quantitative gains coincide with
faithful pressure and velocity fields rather than only improved aggregate
coefficients. Dataset specifications and additional held-out predictions are
provided in the Appendix.

The two datasets also separate local field reconstruction from design-level
fidelity: surface and volume errors assess the predicted CFD state, whereas
$C_D$, $\rho_D$, $C_l$, and $R_l^2$ assess the induced aerodynamic statistics.
Agreement across both views is therefore more informative than a field-error
improvement alone.
\begin{figure}[htbp]
\centering
\includegraphics[width=\linewidth]{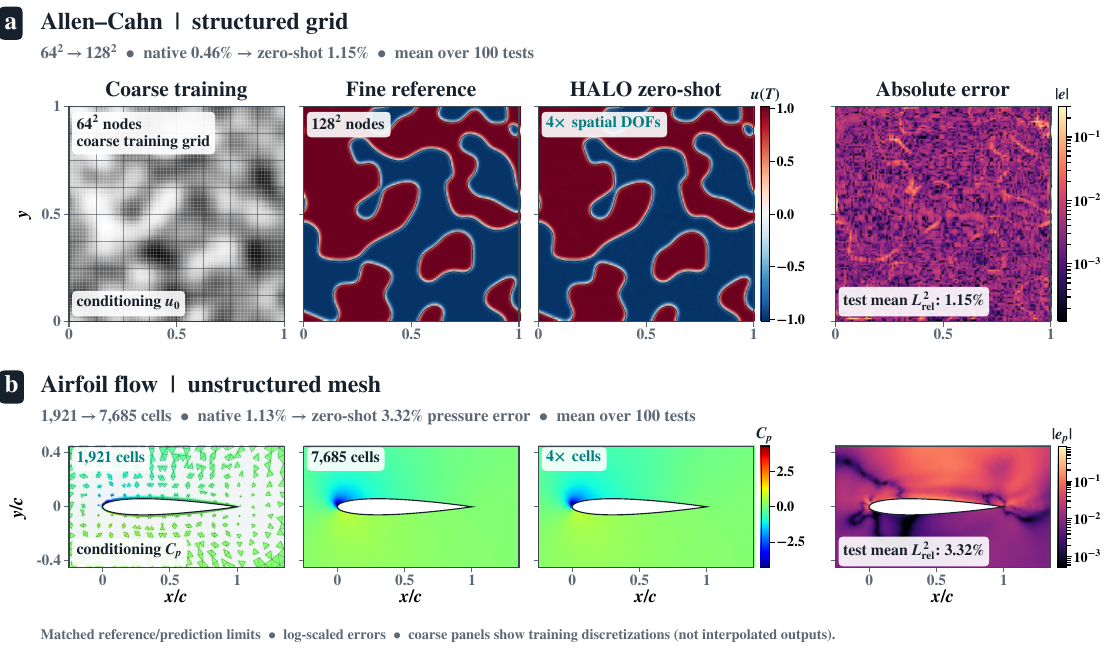}
\vspace{5pt}

{\fontsize{9}{10.8}\selectfont
\setlength{\fboxsep}{5pt}
\noindent\colorbox{TableSecond}{%
\begin{minipage}{\dimexpr\linewidth-2\fboxsep\relax}
\textbf{Structured-grid transfer}\hfill
\textit{mean relative $L_2$ error; lower is better}\par\smallskip
\renewcommand{\arraystretch}{1.08}
\begin{tabular*}{\linewidth}{@{\extracolsep{\fill}}lccc@{}}
\textbf{Dataset} & \textbf{Base $\rightarrow$ target} & \textbf{Base} & \textbf{Zero-shot} \\
Allen--Cahn          & $64^2 \rightarrow 128^2$ & 0.0046 & \textbf{0.0115} \\
Darcy                & $85^2 \rightarrow 421^2$ & 0.0041 & \textbf{0.0082} \\
NS $(\nu{=}10^{-4})$ & $64^2 \rightarrow 256^2$ & 0.0442 & \textbf{0.0556} \\
Supernova            & $32^3 \rightarrow 64^3$  & 0.4004 & \textbf{0.5849}
\end{tabular*}
\end{minipage}}

\vspace{4pt}
\noindent\colorbox{TableTint}{%
\begin{minipage}{\dimexpr\linewidth-2\fboxsep\relax}
\textbf{Unstructured-mesh transfer}\hfill
\textit{training uses ${\sim}25\%$ of target cells}\par\smallskip
\renewcommand{\arraystretch}{1.08}
\begin{tabular*}{\linewidth}{@{\extracolsep{\fill}}lccc@{}}
\textbf{Dataset} & \textbf{Base $\rightarrow$ target} & \textbf{Base} & \textbf{Zero-shot} \\
Airfoil  & $1{,}921 \rightarrow 7{,}685$ & 0.0113 & \textbf{0.0332} \\
Cylinder & $884 \rightarrow 3{,}512$     & 0.0664 & \textbf{0.0902}
\end{tabular*}
\end{minipage}}
}
\caption{Zero-shot super-resolution on structured grids and unstructured
meshes. The field montage contrasts the coarse training discretization, fine
reference, HALO prediction, and absolute error for representative Allen--Cahn
and Airfoil tests. The benchmark ledger reports all six base- and target-resolution
errors. Reference and prediction share color
limits; error maps use logarithmic scales.}
\label{fig:superresolution_collage}
\end{figure}

\subsubsection*{Zero-Shot Transfer}
Every model is trained once on a coarse base discretization and evaluated on a
finer target without fine-tuning (Figure~\ref{fig:superresolution_collage}).
HALO attains the
lowest target-grid error on three of the four structured benchmarks, with FNO
ahead only on Navier--Stokes; the per-baseline breakdown is in the Appendix. On
unstructured meshes it sees just
$25\%$ of the mesh cells in training, yet on the full mesh stays within
$13.5\%$ (Cylinder) and $30.2\%$ (Airfoil) of a GNO trained directly at full
resolution, so the hypergraph structure survives aggressive coarsening.
Relative to the strongest zero-shot baseline at each structured target
resolution, HALO lowers error by $52.9\%$ on Allen--Cahn, $24.8\%$ on Darcy,
and $13.3\%$ on Supernova. Navier--Stokes is the sole exception, where FNO is
$15.6\%$ better. Thus HALO's advantage is not merely retention of its
coarse-grid accuracy; it remains the most accurate transferred operator on
three of four structured targets.

These results point to a core mechanism: resolution equivariance alone does not
guarantee transfer, since the coarse latent must still preserve the higher-order
correlations needed to reconstruct the fine-grid solution. HALO does this by
encoding group-wise geometry into its hyperedges, which sidesteps expensive
fine-resolution retraining at the linear inference cost of
Remark~\ref{rem:cost}.

\subsection{Ablation Studies}
\label{sec:ablation}

The benchmark comparisons above establish end-to-end accuracy across
heterogeneous PDEs. We next isolate which architectural choices account for
that behavior through controlled ablations of (i) higher-order hypergraph
connectivity, (ii) hyperedge neighborhood size, (iii) the learned dyadic scale
bank, and (iv) the low-rank Delta path. Each entry reports a mean and standard
deviation, with errors as relative-$L_2$ percentages; lower is better.
Percentage differences are relative to the control configuration named in each
subsection. Unless stated otherwise, the data split, optimizer, training
schedule, and remaining HALO components are fixed. Each study varies one factor,
so the results provide directional evidence rather than a formal significance
test.

\subsubsection{Effect of Higher-Order Connectivity}

The first ablation tests whether the gain comes from the higher-order incidence
structure itself rather than simply from enlarging a local neighborhood. The
pairwise control uses exactly the same local support as HALO---periodic $k{=}8$
for Allen--Cahn and non-periodic $k{=}24$ for Darcy---but replaces the
hypergraph spectral operator based on Eq.~\eqref{eq:hg_laplacian} with its
pairwise counterpart. Thus, the comparison holds the geometric reach fixed
while changing how local nodes are grouped and normalized spectrally.

Table~\ref{tab:ablation_connectivity} shows that the higher-order bias is not
uniformly beneficial across PDEs. On the periodic Allen--Cahn field, the
pairwise control wins consistently, lowering the error by $10.2\%$, indicating
that the local pairwise support already captures the relevant interface
dynamics at this radius. On Darcy, however, the hypergraph wins consistently,
with the pairwise error $21.5\%$ higher. This is consistent with overlapping
multi-node groups providing a useful inductive bias for aggregating
heterogeneous local permeability structure before predicting the pressure
field. Because the support is matched, the contrast cannot be attributed to a
larger direct neighborhood; it compares the full higher-order spectral geometry
with a pairwise spectral backbone. In particular, it does not claim that
hyperedges dominate pairwise interactions on every smooth, regular-grid
problem.

\begin{table}[H]
\centering
{\small
\setlength{\tabcolsep}{4mm}
\begin{tabular}{@{}lcc@{}}
\toprule
\textbf{Graph type} & \textbf{Allen--Cahn} & \textbf{Darcy} \\
\midrule
Hypergraph & $0.4614\%{\pm}0.0058\%$ & $0.4125\%{\pm}0.0115\%$ \\
Pairwise   & $0.4142\%{\pm}0.0179\%$ & $0.5011\%{\pm}0.0170\%$ \\
\bottomrule
\end{tabular}
}
\caption{Relative-$L_2$ error of the hypergraph operator versus a pairwise
graph with matched local support, on Allen--Cahn ($k{=}8$) and Darcy
($k{=}24$). Entries are mean $\pm$ standard deviation in percent.}
\label{tab:ablation_connectivity}
\end{table}

\subsubsection{Effect of Hyperedge Neighborhood Size}

We next vary only the number of neighbors per hyperedge on Darcy. The parameter
count is exactly the same in both rows, so this is a connectivity-only
comparison rather than a capacity ablation.

Increasing the neighborhood from $k{=}16$ to $k{=}24$ lowers the error
consistently, with the $k{=}16$ error $16.0\%$ higher
(Table~\ref{tab:ablation_receptive_field}). This changes the local groups
entering the incidence matrix and hence the spectrum seen by the shared
Chebyshev basis; it does not add learned weights. It also does not change the
formal $M$-hop locality set by the Chebyshev order
(Proposition~\ref{prop:local}); instead, it changes the connectivity, sparsity,
and geometric coverage within each hop. The result suggests that Darcy benefits
from jointly resolving a broader local coefficient neighborhood. This accuracy
gain has a structural cost: for the $k$-NN construction, the sparse operator has
$\mathcal{O}(nk^2)$ nonzeros (Remark~\ref{rem:cost}), so a larger $k$ increases
the cost of the sparse spectral products even though the trainable parameter
count is unchanged.

\begin{table}[H]
\centering
{\small
\setlength{\tabcolsep}{4mm}
\begin{tabular}{@{}lc@{}}
\toprule
\textbf{Neighborhood size $k$} & \textbf{Relative $L_2$ error} \\
\midrule
$k{=}24$ & $0.4125\%{\pm}0.0115\%$ \\
$k{=}16$ & $0.4786\%{\pm}0.0109\%$ \\
\bottomrule
\end{tabular}
}
\caption{Effect of hyperedge neighborhood size on Darcy at fixed capacity
($1{,}797{,}023$ parameters), so only connectivity changes. Entries are mean
$\pm$ standard deviation.}
\label{tab:ablation_receptive_field}
\end{table}

\subsubsection{Effect of Trainable Wavelet Scales}

The third study isolates the scale bank in the Allen--Cahn configuration. The
control uses three trainable dyadic scales. Freezing those scales retains the
same three spectral bands and nearly the same number of parameters, whereas the
single-scale model removes the additional scale-dependent branches.

The fixed-scale comparison (Table~\ref{tab:ablation_scales}) is the cleanest
test of spectral adaptation: it removes only 12 parameters, so its $2.8\%$
higher error is not explained by capacity. The learned scales can move the
peaks of the Meyer-like filters in Eq.~\eqref{eq:meyer_kernel} to frequencies
emphasized by the phase-field operator while the tight-frame loss in
Eq.~\eqref{eq:tight_frame_loss} keeps aggregate coverage from collapsing. The
effect is modest relative to run-to-run variation, but the aggregate result
favors adaptation. The single-scale model is worse throughout ($14.0\%$ higher
error), showing that multiple bands and their learned cross-scale combination
carry useful information. Since that variant also reduces the parameter count
substantially, it is evidence for the combined multiscale design rather than a
capacity-matched estimate of the effect of $J$ alone.

\begin{table}[H]
\centering
{\small
\setlength{\tabcolsep}{4mm}
\begin{tabular}{@{}lcc@{}}
\toprule
\textbf{Scale setting} & \textbf{Parameters} & \textbf{Relative $L_2$ error} \\
\midrule
Trainable scales & $575{,}629$ & $0.4614\%{\pm}0.0058\%$ \\
Fixed scales     & $575{,}617$ & $0.4741\%{\pm}0.0195\%$ \\
Single scale     & $280{,}709$ & $0.5258\%{\pm}0.0023\%$ \\
\bottomrule
\end{tabular}
}
\caption{Learned multiscale wavelet filtering on Allen--Cahn, comparing the
trainable-scale control with frozen and single-scale banks. Entries are mean
$\pm$ standard deviation.}
\label{tab:ablation_scales}
\end{table}

\subsubsection{Effect of the Delta Path}

Finally, we disable the Delta path of Eq.~\eqref{eq:delta_path} while retaining
the Base wavelet transform, shared Chebyshev recurrence, cross-scale mixing,
and the remaining training protocol. This removes the per-(scale, order)
low-rank correction that is zero-initialized at the start of optimization.

Table~\ref{tab:ablation_delta} shows a consistent loss when the Delta path is
removed ($62.3\%$ higher error). Technically, the Base path remains a
scale-adapted realization of the fixed wavelet response, whereas the Delta path
can learn scale- and polynomial-order-specific corrections through the
down-projection, kernels $\Theta_{j,m}$, and up-projection in
Eq.~\eqref{eq:delta_path}. The ablation therefore supports the utility of that
corrective mechanism, especially after its zero initialization lets the stable
Base response dominate early training. It also removes 311,296 trainable
parameters, however, so the experiment should be interpreted as the
contribution of the Delta path together with its associated capacity; a
capacity-matched no-Delta control would be needed to separate those effects
completely.

\begin{table}[H]
\centering
{\small
\setlength{\tabcolsep}{4mm}
\begin{tabular}{@{}lcc@{}}
\toprule
\textbf{Model} & \textbf{Parameters} & \textbf{Relative $L_2$ error} \\
\midrule
w/ Delta path  & $575{,}629$ & $0.4614\%{\pm}0.0058\%$ \\
w/o Delta path & $264{,}333$ & $0.7488\%{\pm}0.0028\%$ \\
\bottomrule
\end{tabular}
}
\caption{Delta-path ablation on Allen--Cahn over two seeds; entries are mean
$\pm$ standard deviation.}
\label{tab:ablation_delta}
\end{table}

\subsubsection*{Summary}
The four studies identify complementary rather than interchangeable sources of
performance. Higher-order grouping is most helpful on the heterogeneous Darcy
map, whereas a matched pairwise representation is sufficient for the tested
periodic Allen--Cahn setting. At fixed parameter count, Darcy benefits from a
larger hyperedge support; at nearly fixed parameter count, Allen--Cahn benefits
from adapting a multi-band wavelet bank. The Delta path then supplies a learned
low-rank correction beyond the analytic wavelet response. Together, these
controls support the design rationale of HALO while making clear that its
higher-order and multiscale biases are task dependent rather than universal.

\section{Conclusion}

We introduced HALO, a neural operator that represents a PDE domain as a
hypergraph and learns kernels in its spectral wavelet domain. This construction
captures set-valued couplings directly rather than recovering them indirectly
through pairwise graph interactions. Trainable wavelet scales, regularized
toward a tight frame, provide adaptive multi-scale filtering, while the
Chebyshev approximation avoids eigendecomposition and retains a per-layer cost
that is linear in the number of nodes. Across all eight benchmarks, HALO ranks
first or among the strongest methods from every operator family considered. Its
largest gains occur on unstructured problems and the largest geometries, without
sacrificing resolution equivariance.

\subsubsection*{Limitations and Outlook}
HALO remains accurate over the rollout horizons represented during training,
but errors accumulate when inference extends substantially beyond that window;
reliable long-range forecasting therefore remains an open problem. Scale poses
a different unanswered question. The relative gain over the baselines was
largest on the largest geometry evaluated, but our experiments have not yet
reached million-point industrial meshes or coupled multiphysics systems. These
settings are the natural next test of whether the observed advantage persists.

\FloatBarrier

\clearpage
\appendix
\renewcommand{\theequation}{\thesection.\arabic{equation}}
\renewcommand{\thetable}{\thesection.\arabic{table}}
\renewcommand{\thefigure}{\thesection.\arabic{figure}}
\renewcommand{\thealgorithm}{\thesection.\arabic{algorithm}}
\numberwithin{equation}{section}
\numberwithin{table}{section}
\numberwithin{figure}{section}
\numberwithin{algorithm}{section}

\section*{Technical Appendix}
\section{Notation}
\label{sec:notation}

Table~\ref{tab:notation} collects the symbols used throughout the paper.

\begin{table}[H]
\centering
{\fontsize{9}{10}\selectfont
\setlength{\tabcolsep}{1mm}
\begin{tabularx}{\textwidth}{@{}l@{\ }>{\raggedright\arraybackslash}X@{\quad}l@{\ }>{\raggedright\arraybackslash}X@{}}
\toprule
\textbf{Symbol} & \textbf{Meaning} & \textbf{Symbol} & \textbf{Meaning} \\
\midrule
$\mathcal{N}:\mathcal{A}\!\to\!\mathcal{U}$ & Solution operator to be learned & $T_m,\,\mathbf{T}_m$ & Chebyshev polynomial and basis tensor \\
$a^{(i)},\,u^{(i)}$ & Input / output functions of sample $i$ & $c_{s,m}$ & Chebyshev coefficients of $g(s\cdot)$ \\
$D\subset\mathbb{R}^{d}$ & Bounded physical domain, dimension $d$ & $Q,\,\theta_q$ & Quadrature-node count and angles \\
$N$ & Number of training pairs & $x^{*}\!\approx\!1.423$ & Peak location of the kernel $g$ \\
$\mathcal{K}_{\phi}^{(\ell)},\,\kappa_{\phi}$ & Kernel-integration operator and kernel & $s_j^{\mathrm{init}},\,\lambda_j^{*}$ & Dyadic-ladder initial scale, peak eigenvalue \\
$v_{\ell},\,W_{\ell},\,\sigma$ & Hidden state, pointwise map, nonlinearity & $\rho_j$ & Softplus pre-activation for scale $s_j$ \\
$\mathcal{G}=(\mathcal{V},\mathcal{E},W_e)$ & Weighted hypergraph & $G_{\mathbf{s}}(\lambda)$ & Total spectral coverage of the bank \\
$n=|\mathcal{V}|$ & Number of nodes (vertices) & $\sigma_+(\Delta),\,C$ & Nonzero spectrum, tight-frame constant \\
$e\in\mathcal{E}$ & Hyperedge & $S$ & Frame operator $\sum_j\mathcal{W}_{s_j}^{\top}\mathcal{W}_{s_j}$ \\
$H$ & Incidence matrix ($H_{ve}\!=\!1$ iff $v\!\in\!e$) & $\mathcal{D}_{\mathrm{TF}},\,\widehat{\mathcal{D}}_{\mathrm{TF}}$ & Tight-frame defect and surrogate ($R\!=\!64$) \\
\cmidrule(lr){3-4}
$W_e,\,D_v,\,D_e$ & Hyperedge-weight and degree matrices & $\mathbf{h}^{(\ell)}\!\in\!\mathbb{R}^{n\times d_h}$ & Node feature matrix at layer $\ell$ \\
$d_v$ & Vertex degree & $L,\,\ell$ & Number of AWB layers, layer index \\
$\Delta,\,\tilde{\Delta}$ & Normalized, rescaled Laplacian & $d_h$ & Latent (hidden) width \\
$\tilde{H}$ & Normalized incidence matrix & $d_{\mathrm{in}},\,d_{\mathrm{out}}$ & Input / output feature dimensions \\
$\mathrm{nnz}(\tilde{\Delta})$ & Number of nonzeros of $\tilde{\Delta}$ & $d_a,\,c,\,c_f$ & Observation, snapshot, and per-group channels \\
$U,\,\Lambda$ & Eigenvectors and eigenvalues of $\Delta$ & $T_{\mathrm{in}},\,T_{\mathrm{roll}}$ & History window length; rollout horizon \\
$\lambda_k,\,\lambda_{\max}$ & Eigenvalue, spectral upper bound & $\tilde{x}_i,\,z_i$ & Node feature $[a;x;z]$; conditioning channels \\
$\hat{x}=U^{\top}x$ & Hypergraph Fourier transform of $x$ & $W_{\mathrm{in}},\,b_{\mathrm{in}},\,\mathbf{h}^{(0)}$ & Uplift weight, bias, and uplifted input \\
$P_+$ & Projector onto nonzero-$\lambda$ eigenspace & $\Psi^{\mathrm{base}}_{j},\,\Psi^{\mathrm{delta}}_{j},\,\Psi_j$ & Base, Delta, combined per-scale response \\
$d_{\mathcal{G}}(i,v)$ & Hop distance on the hypergraph & $\mathbf{P}_{\downarrow},\,\mathbf{P}_{\uparrow}$ & Down- / up-projections (Delta path) \\
$\delta_i,\,x_i$ & One-hot indicator, coordinate of node $i$ & $d_c$ & Compressed channel width ($d_c\!\ll\!d_h$) \\
$\mathcal{N}_k(i),\,k$ & $k$-NN set and neighborhood size & $\Theta_{j,m}^{(\ell)}$ & Trainable per-order Delta kernels \\
$e_i,\,x_e,\,\sigma_e$ & Per-node hyperedge, anchor, Gaussian width & $\mathbf{V}^{(\ell)}$ & Cross-scale mixed AWB output \\
\cmidrule(lr){1-2}
$g$ & Meyer-like band-pass spectral kernel & $\mathbf{W}_{\mathrm{mix}}^{(\ell)}$ & Cross-scale linear mixing matrix \\
$\mathcal{W}_s=U g(s\Lambda)U^{\top}$ & Wavelet analysis operator at scale $s$ & $W_{\mathrm{skip}},\,W_{\mathrm{out}}$ & Skip-connection and output projections \\
$\psi_{s,i}$ & Wavelet atom at scale $s$, node $i$ & $\hat{u}$ & Model prediction \\
$\hat{\psi}^{(M)}_{s,i}$ & Order-$M$ Chebyshev approximation & $\mathrm{GELU},\,\mathrm{LN}$ & Activation, Layer Normalization \\
$C_{s,i}$ & Wavelet coefficient of a signal & $P$ & Permutation matrix \\
$s,\,s_j$ & Wavelet scale; $j$-th scale & $\mathcal{L}_{\mathrm{data}},\,\mathcal{L}_{\mathrm{grad}}$ & Data ($L_2$) and gradient-penalty losses \\
$J,\,M$ & Number of dyadic scales; Chebyshev order & $\mathcal{L}_{\mathrm{TF}},\,\mathcal{L}$ & Tight-frame and total training loss \\
 &  & $\alpha_{\mathrm{grad}},\,\beta_{\mathrm{TF}}$ & Loss-term weights \\
 &  & $X^{(r)},\,w_r,\,w_{\mathrm{end}}$ & History window; step weight and endpoint \\
\bottomrule
\end{tabularx}
}
\caption{Notation used throughout the paper and this appendix.}
\label{tab:notation}
\end{table}

\section{Theoretical Insights}
\label{sec:theory}

\subsection{Supporting Approximation-Theoretic Results}
\label{sec:approx_results}

The following standard results underpin the spectral rescaling, the tight-frame
surrogate, and the Chebyshev coefficient computation and truncation used in the
main text; we state them with proofs and references for completeness.

\begin{lemma}[\citealp{zhou2006learning}]
\label{lem:spectral_bound}
For every hypergraph, the normalized Laplacian of Eq.~\eqref{eq:hg_laplacian}
satisfies $\sigma(\Delta)\subseteq[0,1]$.
\end{lemma}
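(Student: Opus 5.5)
The plan is to split $\Delta = I - \Theta$ with
\[
\Theta := D_v^{-1/2} H W_e D_e^{-1} H^\top D_v^{-1/2},
\]
and to show $\sigma(\Theta)\subseteq[0,1]$; then $\sigma(\Delta)=1-\sigma(\Theta)\subseteq[0,1]$. Throughout I use Zhou et al.'s degree conventions: $(D_v)_{vv}=d_v=\sum_e (W_e)_{ee}H_{ve}$, assumed positive (no isolated vertices, which holds here because every node anchors its own hyperedge), and $(D_e)_{ee}=\sum_v H_{ve}=|e|$.

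\textbf{Step 1 (lower bound $0$ for $\Theta$).} I will factor $\Theta = BB^\top$ with
\[
B := D_v^{-1/2} H W_e^{1/2} D_e^{-1/2}.
\]
This is legitimate because the hyperedge weights are nonnegative (our Gaussian-averaged weights are positive) and the degree matrices are positive diagonal. Hence $\Theta$ is symmetric positive semidefinite, so $\sigma(\Theta)\subseteq[0,\infty)$. This gives the upper bound $\sigma(\Delta)\subseteq(-\infty,1]$.

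\textbf{Step 2 (upper bound $1$ for $\Theta$).} I will observe that $\Theta$ is similar to
\[
P := D_v^{-1}HW_eD_e^{-1}H^\top = D_v^{-1/2}\,\Theta\, D_v^{1/2}.
\]
The matrix $P$ has nonnegative entries, and its row sums are
\[
\sum_u P_{vu} = \frac{1}{d_v}\sum_e H_{ve}\,\frac{(W_e)_{ee}}{|e|}\sum_u H_{ue} = \frac{1}{d_v}\sum_e H_{ve}(W_e)_{ee} = 1.
\]
So $P$ is row-stochastic, and by Gershgorin (or $\|P\|_\infty=1$) every eigenvalue of $P$, hence of $\Theta$, has modulus at most $1$. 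Combining this with Step 1 gives $\sigma(\Theta)\subseteq[0,1]$, and therefore $\sigma(\Delta)\subseteq[0,1]$.

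\textbf{Main obstacle.} The only delicate point is that the bound depends on the degree conventions. The row-sum identity in Step 2 requires $d_v$ to be the \emph{weighted} vertex degree and $\delta(e)$ the \emph{unweighted} cardinality, exactly as in \cite{zhou2006learning}. I will state these conventions explicitly and note that the $k$-NN and $k$-ring constructions meet them. If instead an unweighted $d_v$ were used, I would fall back on bounding $\|\Theta\|_2\le \|P\|_\infty$ via the similarity, which needs $\max_e (W_e)_{ee}\le 1$. This holds for our Gaussian-averaged weights, which lie in $(0,1]$.

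Finally, I would remark that $0\in\sigma(\Delta)$ with eigenvector $D_v^{1/2}\mathbf{1}$, since $P\mathbf{1}=\mathbf{1}$. This gives the DC mode excluded in Definition~\ref{def:frame}.
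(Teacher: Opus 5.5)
Your proof is correct, and your Step~1 is exactly the paper's argument. The paper writes $\Delta=I-\tilde H\tilde H^\top$ with the same normalized incidence matrix $\tilde H=D_v^{-1/2}HW_e^{1/2}D_e^{-1/2}$ and reads off $\Delta\preceq I$.

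The two proofs differ on the other half. The paper only asserts ``the positive semi-definiteness of $\Delta$ from the symmetric normalization.'' Your Step~2 actually proves it: $\Theta$ is similar to the nonnegative row-stochastic matrix $D_v^{-1}HW_eD_e^{-1}H^\top$, which forces $\rho(\Theta)\le 1$. This is not a cosmetic addition, because symmetric normalization alone does not give $\Delta\succeq 0$. For example, take a single hyperedge of weight $w>1$ covering all vertices, with unweighted vertex degrees; then $\Theta$ has the eigenvalue $w$. Your argument isolates exactly what is needed: $d_v=\sum_e (W_e)_{ee}H_{ve}$ and $(D_e)_{ee}=|e|$. These are the conventions of Algorithm~\ref{alg:laplacian} and Lemma~\ref{prop:dc}.

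The stochastic-matrix route is also robust to the $\varepsilon$ floor in the implemented $S=\mathrm{diag}(1/\sqrt{D_v+\varepsilon})$. That floor merely lowers the row sums to $d_v/(d_v+\varepsilon)<1$.

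An equally standard alternative for this half is Zhou et al.'s Dirichlet-form identity
$x^\top\Delta x=\tfrac12\sum_e\frac{(W_e)_{ee}}{|e|}\sum_{u,v\in e}\bigl(x_u/\sqrt{d_u}-x_v/\sqrt{d_v}\bigr)^2\ge 0$,
with the inner sum over ordered pairs. It rests on the same degree conventions.
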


\noindent\textbf{\textit{Proof.}}
Define the normalized incidence matrix by
$\tilde{H}=D_v^{-1/2}HW_e^{1/2}D_e^{-1/2}$.
The affinity term of
Eq.~\eqref{eq:hg_laplacian} factors as the Gram matrix
$\tilde{H}\tilde{H}^{\top}$, giving $\Delta = I - \tilde{H}\tilde{H}^{\top}$.
Since $\tilde{H}\tilde{H}^{\top}\succeq 0$, we have $\Delta\preceq I$; combined
with the positive semi-definiteness of $\Delta$ from the symmetric
normalization, this pins the spectrum to $\sigma(\Delta)\subseteq[0,1]$. On a
$2$-uniform hypergraph (an ordinary graph) $\Delta$ reduces to one-half the
normalized graph Laplacian, mapping the familiar bound $[0,2]$ onto $[0,1]$.
\qed

\subsubsection*{Eigendecomposition-Free Surrogate Evaluation}
Because the coverage $G_{\mathbf{s}}(\lambda)=\sum_j g(s_j\lambda)^2$ is a
closed-form, smooth function of the scales, it can be sampled at any frequency,
and by Lemma~\ref{lem:spectral_bound} the nonzero spectrum lies in
$\sigma_+(\Delta)\subset(0,\lambda_{\max}]$ with $\lambda_{\max}\le 1$; flattening
$G_{\mathbf{s}}$ across this fixed band therefore forces it flat on the embedded
spectrum. We accordingly evaluate the surrogate defect
$\widehat{\mathcal{D}}_{\mathrm{TF}}$ of Eq.~\eqref{eq:tf_defect_surrogate} on
$R=64$ points $\{\lambda_r\}$ spaced uniformly on
$[\,10^{-3}\lambda_{\max},\,\lambda_{\max}]$, excluding the DC mode. The lower
cutoff leaves any near-null eigenvalues below it unconstrained; this is
deliberate, since those modes are the ones the band-pass kernel is designed to
suppress, and $G_{\mathbf{s}}$ is continuous, so flatness on the sampled band
controls the coverage up to the cutoff regardless.

\subsubsection*{Estimating $\lambda_{\max}$}
The only spectral quantity the surrogate requires is $\lambda_{\max}$, which also
rescales the operator in Eq.~\eqref{eq:cheby_approx}. We obtain it once by the
power method~\citep{mises1929praktische,golub2013matrix}: from a random unit
$v^{(0)}$, iterate
\begin{equation}
   v^{(t+1)}=\frac{\Delta v^{(t)}}{\lVert\Delta v^{(t)}\rVert_2},
   \qquad
   \widehat{\lambda}^{(t+1)}=\bigl(v^{(t+1)}\bigr)^{\!\top}\!\Delta\,v^{(t+1)},
   \label{eq:app_power_iter}
\end{equation}
until $\lvert\widehat{\lambda}^{(t+1)}-\widehat{\lambda}^{(t)}\rvert$ falls below
tolerance. Each step is one sparse product, so the whole estimate costs
$\mathcal{O}(\mathrm{nnz}(\Delta))$ per iteration and $\mathcal{O}(n)$ memory.
Two properties of $\Delta$ make this the right tool here. It is symmetric
positive semi-definite (Lemma~\ref{lem:spectral_bound}), so the eigenvalue of
largest modulus \emph{is} $\lambda_{\max}$ and no spectral shift is needed to
separate it from a negative extreme; and reading the estimate off the Rayleigh
quotient rather than $\lVert\Delta v^{(t)}\rVert_2$ makes its error quadratic,
not linear, in the eigenvector error. The iteration recovers the dominant
eigenpair only. It never forms the remaining $n-1$ eigenvectors, which is the
sense in which no stage of $\widehat{\mathcal{D}}_{\mathrm{TF}}$ requires an
eigendecomposition. Finally, Rayleigh quotients of a symmetric matrix never
exceed its largest eigenvalue, so Eq.~\eqref{eq:app_power_iter} approaches
$\lambda_{\max}$ from below; the certified upper bound needed to keep the
Chebyshev argument inside $[-1,1]$ is supplied instead by
Lemma~\ref{lem:spectral_bound}, $\lambda_{\max}\le 1$. The measured values in
Table~\ref{tab:learned_scales} ($0.987$ to $0.998$) sit within $1.3\%$ of that
ceiling, so the rescaling leaves almost none of the Chebyshev domain unused.

\begin{lemma}[\citealp{mason2002chebyshev}]
\label{lem:quad}
Let $\theta_q=\pi(q+\tfrac12)/Q$, $q=0,\dots,Q-1$. For any $f(\cos\theta)$ with
$f$ a polynomial of degree $\le 2Q-1$,
\begin{equation}
   \int_0^\pi f(\cos\theta)\,d\theta
   =\frac{\pi}{Q}\sum_{q=0}^{Q-1}f(\cos\theta_q).
   \label{eq:lem_quad}
\end{equation}
Consequently $\frac{2}{\pi}\int_0^\pi \cos(m\theta)\,h(\cos\theta)\,d\theta$ is
computed exactly by $\frac{2}{Q}\sum_q\cos(m\theta_q)\,h(\cos\theta_q)$ whenever
$m+\deg h\le 2Q-1$.
\end{lemma}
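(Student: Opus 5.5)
The plan is to reduce the claim to the exactness of the $Q$-point Gauss--Chebyshev rule on Chebyshev polynomials, and then pass to general polynomials by linearity. Since $\{T_k\}_{k=0}^{p}$ spans the polynomials of degree $\le p$, and both sides of Eq.~\eqref{eq:lem_quad} are linear in $f$, it suffices to verify the identity for $f=T_k$ with $0\le k\le 2Q-1$. Because $T_k(\cos\theta)=\cos(k\theta)$, the left-hand side equals $\pi$ for $k=0$ and $\int_0^\pi\cos(k\theta)\,d\theta=0$ for $1\le k\le 2Q-1$. For $k=0$ the right-hand side is $\frac{\pi}{Q}\cdot Q=\pi$, so that case is immediate.

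For $1\le k\le 2Q-1$ the task is to show $\sum_{q=0}^{Q-1}\cos\bigl(k\pi(q+\tfrac12)/Q\bigr)=0$. I would write the sum as
\[
\operatorname{Re}\Bigl(e^{ik\pi/(2Q)}\sum_{q=0}^{Q-1}\omega^{q}\Bigr),\qquad \omega=e^{ik\pi/Q}.
\]
\emph{Case $k\neq Q$.} Here $\omega\ne1$ because $0<k<2Q$, so the geometric series gives $\frac{\omega^Q-1}{\omega-1}=\frac{(-1)^k-1}{\omega-1}$. If $k$ is even this vanishes. If $k$ is odd the full expression is $\frac{-2e^{ik\pi/(2Q)}}{e^{ik\pi/Q}-1}=\frac{-2}{e^{ik\pi/(2Q)}-e^{-ik\pi/(2Q)}}=\frac{-2}{2i\sin(k\pi/(2Q))}$, which is purely imaginary, so its real part is $0$.
\emph{Case $k=Q$.} Each term is $\cos\bigl(\pi(q+\tfrac12)\bigr)=0$ directly.

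The only delicate point is keeping the half-integer phase shift in the bookkeeping, because the shift is exactly what makes the odd-$k$ case purely imaginary. The restriction $k\le 2Q-1$ is sharp: at $k=2Q$ every term is $\cos\bigl(2\pi(q+\tfrac12)\bigr)=-1$, so the rule fails.

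For the consequence, I would note that $\cos(m\theta)h(\cos\theta)=T_m(\cos\theta)h(\cos\theta)=f(\cos\theta)$ with $f=T_mh$, a polynomial of degree $m+\deg h$. Applying the identity and multiplying both sides by $2/\pi$ then yields the stated formula whenever $m+\deg h\le 2Q-1$. This matches Eq.~\eqref{eq:quadrature_coeff} after the reindexing $q\mapsto q+1$.
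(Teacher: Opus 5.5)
Your proof is correct, but it takes a different route from the paper.

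The paper substitutes $x=\cos\theta$ to rewrite the left side as $\int_{-1}^{1}f(x)(1-x^2)^{-1/2}\,dx$. It then identifies the right side as the $Q$-point Gauss--Chebyshev rule of the first kind, with nodes $\cos\theta_q$ (the zeros of $T_Q$) and equal weights $\pi/Q$. From there it simply cites Mason and Handscomb for the classical fact that a $Q$-point Gauss rule is exact up to degree $2Q-1$. The consequence is then obtained exactly as you do, with $f=T_mh$.

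You instead stay in the $\theta$ variable and prove the exactness from scratch. Linearity reduces the claim to $f=T_k$. Your geometric-series computation then gives the discrete orthogonality $\sum_{q}\cos\bigl(k\pi(q+\tfrac12)/Q\bigr)=0$ for $1\le k\le 2Q-1$, with the half-node phase shift making the odd-$k$ case purely imaginary.

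What your version buys is self-containment and a sharpness check. It needs no Gaussian-quadrature theory, and your observation that the rule fails at $k=2Q$ shows the degree bound $2Q-1$ cannot be improved. What the paper's version buys is brevity and context: it explains why these particular nodes are optimal (they are the zeros of the degree-$Q$ orthogonal polynomial for the weight $(1-x^2)^{-1/2}$), which your computation confirms but does not explain.

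One cosmetic point: your separate case $k=Q$ is unnecessary. There $\omega=e^{i\pi}=-1\neq 1$, so the geometric-series argument already covers it.
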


\noindent\textbf{\textit{Proof.}}
Substituting $x=\cos\theta$, with $dx=-\sin\theta\,d\theta$ and
$\sin\theta=\sqrt{1-x^2}$ on $[0,\pi]$, maps the integral to
\begin{equation}
   \int_0^\pi f(\cos\theta)\,d\theta
   =\int_{-1}^{1}\frac{f(x)}{\sqrt{1-x^2}}\,dx.
   \label{eq:lem_quad_sub}
\end{equation}
The right-hand side is the Gauss--Chebyshev quadrature of the first kind, whose
$Q$ nodes $x_q=\cos\theta_q$ carry equal weights $\pi/Q$ and integrate every
polynomial of degree $\le 2Q-1$ exactly~\citep{mason2002chebyshev}, which is
Eq.~\eqref{eq:lem_quad}. The second claim follows by taking $f(x)=T_m(x)h(x)$,
of degree $m+\deg h$. \qed

\medskip
\noindent\textit{Remark (applicability to HALO).}
Lemma~\ref{lem:quad} requires a \emph{global} polynomial in the Chebyshev
variable $x=\cos\theta\in[-1,1]$. To keep that variable distinct from the
argument of the spectral kernel, write $\xi:=s\lambda$ for the latter throughout
this subsection, so that the peak of Eq.~\eqref{eq:meyer_kernel} sits at
$\xi^{*}=2-1/\sqrt{3}\approx1.423$. In $\xi$ the kernel is only piecewise
smooth: polynomial on $[0,2]$, rational ($4/\xi^{2}$) beyond it, with breakpoints
at $\xi=1,2$ where $g\in C^{1}$ but not $C^{2}$. Every scale in
Table~\ref{tab:learned_scales} satisfies $s>1$, so the $\xi=1$ breakpoint maps to
an interior point of the rescaled band. The mapped integrand is therefore
piecewise, not globally, polynomial in $x$, and the coefficient quadrature
converges at an algebraic, not exponential, rate in $Q$, consistent with the main
text.

\begin{theorem}[\citealp{hammond2011wavelets}]
\label{thm:cheb-err}
Let $\mathcal{W}_s=U\,g(s\Lambda)\,U^{\top}$, let $\psi_{s,i}=\mathcal{W}_s\delta_i$,
and let $\hat{\psi}^{(M)}_{s,i}=q_M(\Delta)\delta_i$ be its order-$M$ Chebyshev
approximation, where $q_M$ is the degree-$M$ Chebyshev partial sum of
$\lambda\mapsto g(s\lambda)$ on $[0,\lambda_{\max}]$. Then
\begin{equation}
   \bigl\|\psi_{s,i}-\hat{\psi}^{(M)}_{s,i}\bigr\|_2
   \;\le\; \bigl(1+\Lambda_M\bigr)\,E_M\bigl(g(s\cdot)\bigr),
   \label{eq:thm_bound}
\end{equation}
where $E_M(\cdot)$ is the best degree-$M$ uniform approximation error and
$\Lambda_M=\mathcal{O}(\log M)$ is the Lebesgue constant of the degree-$M$
Chebyshev projection. If
$g(s\cdot)\in C^{r}$ on $[0,\lambda_{\max}]$ then $E_M=\mathcal{O}(M^{-r})$; if
$g(s\cdot)$ is analytic in a Bernstein ellipse then $E_M=\mathcal{O}(\rho^{-M})$
for some $\rho>1$.
\end{theorem}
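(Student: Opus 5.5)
The proof reduces the vector error to a scalar sup-norm error on the spectrum, then bounds that scalar error by the usual near-best property of Chebyshev projection.

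\emph{Step 1: reduction to the spectrum.} Both $\psi_{s,i}$ and $\hat{\psi}^{(M)}_{s,i}$ are functions of the same symmetric matrix $\Delta=U\Lambda U^{\top}$ applied to $\delta_i$. Since $q_M$ is a polynomial, $q_M(\Delta)=U\,q_M(\Lambda)\,U^{\top}$. Hence the error equals $U\bigl(g(s\Lambda)-q_M(\Lambda)\bigr)U^{\top}\delta_i$. Because $U$ is orthogonal and $\|\delta_i\|_2=1$, its norm is bounded by the operator norm of the diagonal middle factor. This gives
\[
\bigl\|\psi_{s,i}-\hat{\psi}^{(M)}_{s,i}\bigr\|_2\le \max_{\lambda\in\sigma(\Delta)}\bigl|g(s\lambda)-q_M(\lambda)\bigr|\le \bigl\|g(s\cdot)-q_M\bigr\|_{\infty,[0,\lambda_{\max}]}.
\]
For the last inequality we need $\sigma(\Delta)\subseteq[0,\lambda_{\max}]$. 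Lemma~\ref{lem:spectral_bound} gives $\sigma(\Delta)\subseteq[0,1]$, and we take $\lambda_{\max}$ to be the true top eigenvalue, or any certified upper bound for it.

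\emph{Step 2: near-best approximation.} Map $[0,\lambda_{\max}]$ affinely to $[-1,1]$. Let $S_M$ be the degree-$M$ Chebyshev projection (truncated Chebyshev series), which fixes every polynomial of degree $\le M$. Let $p^\ast$ be a best uniform approximant of degree $\le M$. Then
\[
f-S_Mf=(f-p^\ast)-S_M(f-p^\ast),
\]
so
\[
\|f-S_Mf\|_\infty\le(1+\|S_M\|_{\infty\to\infty})\,E_M(f).
\]
The operator norm $\|S_M\|_{\infty\to\infty}$ is the Lebesgue constant $\Lambda_M$. Writing $S_Mf(\cos\theta)$ as the convolution of $f(\cos\theta)$ with the Dirichlet kernel gives $\Lambda_M=\frac{1}{2\pi}\int_{-\pi}^{\pi}|D_M|=\frac{4}{\pi^2}\log M+\mathcal{O}(1)$. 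Combining this with Step~1 yields Eq.~\eqref{eq:thm_bound}.

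\emph{Step 3: rates for $E_M$.} If $g(s\cdot)\in C^r$, Jackson's theorem on the interval gives $E_M=\mathcal{O}(M^{-r})$. The constant involves $\lambda_{\max}^r\|\,(g(s\cdot))^{(r)}\|_\infty$ after rescaling. If $g(s\cdot)$ extends analytically to a Bernstein ellipse $E_\rho$ and is bounded there by $B$, the Chebyshev coefficients satisfy $|a_k|\le 2B\rho^{-k}$. Truncating the series then gives $E_M\le\|f-S_Mf\|\le 2B\rho^{-M}/(\rho-1)$. For the Meyer-like kernel this needs a caveat: $g\in C^1$ only, so the bound applies with $r=1$. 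This is consistent with the Remark after Lemma~\ref{lem:quad}.

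The main obstacle is a notational mismatch in the statement. The paper's actual coefficients are computed by quadrature, Eq.~\eqref{eq:quadrature_coeff}, which gives an interpolant rather than an exact partial sum. Quadrature aliasing therefore adds an extra term of the same order, controlled by the interpolation Lebesgue constant, which is also $\mathcal{O}(\log M)$. I will prove the theorem exactly as stated, for the partial sum, and note the interpolation variant separately.
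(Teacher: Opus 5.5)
Your proposal is correct and follows essentially the same route as the paper's proof. Both reduce the error via the spectral decomposition and $\|\delta_i\|_2=1$ to $\max_k|g(s\lambda_k)-q_M(\lambda_k)|\le\|g(s\cdot)-q_M\|_{\infty}$, apply the near-best bound $(1+\Lambda_M)E_M$ for Chebyshev truncation, and take the rates from Jackson and Bernstein. The paper simply cites the near-best inequality and the Bernstein-ellipse coefficient bound, which you prove (the former via $f-S_Mf=(f-p^\ast)-S_M(f-p^\ast)$), and you state explicitly the requirement $\sigma(\Delta)\subseteq[0,\lambda_{\max}]$ that the paper's first step uses without comment, which matters because the power-iteration estimate approaches $\lambda_{\max}$ from below.
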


\noindent\textbf{\textit{Proof.}}
Write the approximation as $\hat{\psi}^{(M)}_{s,i}=q_M(\Delta)\delta_i$, so the
error is governed by the operator $g(s\Delta)-q_M(\Delta)$. Throughout the
proof, let
$\|h\|_{\infty}:=\sup_{\lambda\in[0,\lambda_{\max}]}\lvert h(\lambda)\rvert$
denote the uniform norm over the spectral band. Since
$\Delta=U\Lambda U^{\top}$ is symmetric, this operator is symmetric and its
spectral norm equals its largest eigenvalue magnitude,
\begin{equation}
\begin{split}
   \bigl\|g(s\Delta)-q_M(\Delta)\bigr\|_2
   &=\max_k\bigl|g(s\lambda_k)-q_M(\lambda_k)\bigr|\\
   &\le\bigl\|g(s\cdot)-q_M\bigr\|_{\infty}.
\end{split}
   \label{eq:thm_specnorm}
\end{equation}
Therefore, using $\|\delta_i\|_2=1$,
\begin{equation}
\begin{split}
   \bigl\|\psi_{s,i}-\hat{\psi}^{(M)}_{s,i}\bigr\|_2
   &=\bigl\|(g(s\Delta)-q_M(\Delta))\delta_i\bigr\|_2\\
   &\le\bigl\|g(s\cdot)-q_M\bigr\|_{\infty}.
\end{split}
   \label{eq:thm_bound1}
\end{equation}
The degree-$M$ truncated Chebyshev series is near-best in the uniform norm,
\begin{equation}
   \bigl\|g(s\cdot)-q_M\bigr\|_{\infty}
   \le \bigl(1+\Lambda_M\bigr)\,E_M\bigl(g(s\cdot)\bigr),
   \label{eq:thm_nearbest}
\end{equation}
where $E_M$ is the best degree-$M$ uniform error and $\Lambda_M$ is the Lebesgue
constant of the Chebyshev projection, which grows only logarithmically,
$\Lambda_M\sim(4/\pi^2)\log M$~\citep{mason2002chebyshev}. At the orders used
here ($M\le 8$, Tables~\ref{tab:app_cfg_structured}
and~\ref{tab:app_cfg_unstructured}) this prefactor is below $3.2$.
Combining Eqs.~\eqref{eq:thm_bound1}--\eqref{eq:thm_nearbest} gives
Eq.~\eqref{eq:thm_bound}. The rates $E_M=\mathcal{O}(M^{-r})$ for
$g(s\cdot)\in C^{r}$ and $E_M=\mathcal{O}(\rho^{-M})$ for analytic $g(s\cdot)$
are the classical Jackson and Bernstein estimates~\citep{mason2002chebyshev}.
\qed

\medskip
\noindent\textit{Remark (which rate applies).}
The Meyer-like kernel is piecewise smooth and not analytic across $\xi=1,2$, so
the honest guarantee is the algebraic branch $E_M=\mathcal{O}(M^{-r})$ with small
$r$; the coefficients are moreover computed by quadrature
$($Lemma~\ref{lem:quad}$)$, which is only near-best. Exponential or ``spectral''
convergence should therefore not be claimed for this kernel.

\begin{lemma}[\citealp{hammond2011wavelets}]
\label{prop:dc}
If $g(0)=0$, then the wavelet analysis operator
$\mathcal{W}_s=U\,g(s\Lambda)\,U^{\top}$ annihilates the null space of $\Delta$.
In particular the (unnormalized) constant mode $\phi_0:=D_v^{1/2}\mathbf{1}$
satisfies $\Delta\phi_0=0$ and $\mathcal{W}_s\phi_0=0$, and
$\phi_0^{\top}\mathcal{W}_s x=0$ for every signal $x$.
\end{lemma}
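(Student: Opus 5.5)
The argument works directly in the eigenbasis of $\Delta$, using only the spectral calculus already set up in Eq.~\eqref{eq:wavelet_atom}. Write $\Delta=U\Lambda U^{\top}$ with $U$ orthogonal. The first claim is that $\mathcal{W}_s$ annihilates $\ker\Delta$. Take any $y\in\ker\Delta$; its expansion $U^{\top}y$ is supported only on the coordinates $k$ with $\lambda_k=0$. On those coordinates the diagonal factor $g(s\Lambda)$ has entry $g(s\cdot 0)=g(0)=0$, so $g(s\Lambda)U^{\top}y=0$ and therefore $\mathcal{W}_s y=0$. This uses no property of $g$ beyond $g(0)=0$, and it holds for every scale $s>0$ (indeed for any $s$).

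The second step is to check directly that $\phi_0=D_v^{1/2}\mathbf{1}$ lies in the null space. We have $D_v^{-1/2}\phi_0=\mathbf{1}$, and $H^{\top}\mathbf{1}$ is the vector of hyperedge cardinalities, which is $D_e\mathbf{1}$. Hence $D_e^{-1}H^{\top}\mathbf{1}=\mathbf{1}$. Next, $HW_e\mathbf{1}$ is the vector of weighted vertex degrees $d_v=\sum_e H_{ve}(W_e)_{ee}$, so $HW_e\mathbf{1}=D_v\mathbf{1}$. Applying the final factor gives $D_v^{-1/2}D_v\mathbf{1}=\phi_0$. So the affinity term maps $\phi_0$ to itself, and $\Delta\phi_0=\phi_0-\phi_0=0$.

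The main obstacle is that this computation depends on the degree convention. It requires $D_v$ to be the \emph{weighted} vertex degree, as in Zhou et al., rather than the raw count $\sum_e H_{ve}$. I would state this convention explicitly, as the one under which Eq.~\eqref{eq:hg_laplacian} is defined. As a fallback, one can note that the proof of Lemma~\ref{lem:spectral_bound} writes $\Delta=I-\tilde H\tilde H^{\top}$. Since $\tilde H\tilde H^{\top}$ has entries $\sum_e H_{ve}(W_e)_{ee}/(D_e)_{ee}\cdot H_{ie}/\sqrt{d_vd_i}$, the row computation above is exactly the statement that its diagonal normalisation matches $d_v$. We also need every vertex to have positive degree so that $D_v^{-1/2}$ exists. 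This holds automatically here, because each node anchors its own hyperedge and the Gaussian weights are positive.

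Combining the two steps gives $\mathcal{W}_s\phi_0=0$. For the final identity, use the self-adjointness of $\mathcal{W}_s$ noted in Eq.~\eqref{eq:app_p2_self}:
\[
\phi_0^{\top}\mathcal{W}_s x=(\mathcal{W}_s\phi_0)^{\top}x=0\quad\text{for every signal }x.
\]
Finally, one remark is worth adding. Since $\phi_0$ spans the DC direction excluded from $\sigma_+(\Delta)$ in Definition~\ref{def:frame}, the lemma is consistent with the factor $P_+$ in Proposition~\ref{prop:frame}.
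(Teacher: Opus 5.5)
Your proposal is correct and takes essentially the same route as the paper. The paper likewise checks $D_e^{-1}H^{\top}\mathbf{1}=\mathbf{1}$ and $HW_eD_e^{-1}H^{\top}\mathbf{1}=D_v\mathbf{1}$ to obtain $\Delta\phi_0=0$, annihilates the null space through the expansion $\mathcal{W}_s=\sum_k g(s\lambda_k)u_ku_k^{\top}$ with $g(0)=0$, and finishes with the symmetry of $\mathcal{W}_s$; the only difference is that it does your two steps in the reverse order. Your explicit remark that $D_v$ must be the weighted degree $d_v=\sum_e H_{ve}(W_e)_{ee}$ is a useful clarification: the paper relies on this convention implicitly, and its hyperedge-weighting algorithm does adopt it.
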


\noindent\textbf{\textit{Proof.}}
Let $\Pi:=D_v^{-1/2}HW_eD_e^{-1}H^{\top}D_v^{-1/2}$ denote the affinity operator,
so that $\Delta=I-\Pi$. The unweighted edge-degree identity
$H^{\top}\mathbf{1}=\mathrm{diag}(D_e)$ gives
\begin{equation}
   D_e^{-1}H^{\top}\mathbf{1}=\mathbf{1}_{|\mathcal{E}|}.
   \label{eq:dc_edge}
\end{equation}
Left-multiplying Eq.~\eqref{eq:dc_edge} by $HW_e$ and reading off entry $v$,
\begin{equation}
\begin{split}
   \bigl(HW_eD_e^{-1}H^{\top}\mathbf{1}\bigr)_v
   &=\sum_e (W_e)_{ee}H_{ve}=d_v,\\
   \text{so}\quad HW_eD_e^{-1}H^{\top}\mathbf{1}
   &=D_v\mathbf{1},
\end{split}
   \label{eq:dc_vertex}
\end{equation}
since $d_v$ is by definition the $v$-th diagonal entry of $D_v$. Applying $\Pi$
to the constant mode $\phi_0=D_v^{1/2}\mathbf{1}$ and using
Eq.~\eqref{eq:dc_vertex},
\begin{equation}
\begin{split}
   \Pi\phi_0
   &=D_v^{-1/2}\bigl(HW_eD_e^{-1}H^{\top}\mathbf{1}\bigr)\\
   &=D_v^{-1/2}D_v\mathbf{1}
   =D_v^{1/2}\mathbf{1}
   =\phi_0,
\end{split}
   \label{eq:dc_fixed}
\end{equation}
so $\Delta\phi_0=(I-\Pi)\phi_0=0$; hence $\phi_0$ lies in the null space of
$\Delta$. Expanding the analysis operator in the eigenbasis,
\begin{equation}
   \mathcal{W}_s=\sum_{k}g(s\lambda_k)\,u_k u_k^{\top},
   \label{eq:dc_spectral}
\end{equation}
every term with $\lambda_k=0$ carries the factor $g(0)=0$, so $\mathcal{W}_s$
annihilates the null space; in particular $\mathcal{W}_s\phi_0=0$. Finally, by
symmetry of $\mathcal{W}_s$,
\begin{equation}
   \phi_0^{\top}\mathcal{W}_s x=(\mathcal{W}_s\phi_0)^{\top}x=0,
   \label{eq:dc_final}
\end{equation}
for every signal $x\in\mathbb{R}^{n}$. \qed

\section{Benchmark Dataset Details}
\label{sec:data}

Our evaluation comprises eight PDE families (ten settings when the three
Navier--Stokes viscosities are counted separately). Table~\ref{tab:app_data_2d}
summarizes the five 2D benchmarks, while Table~\ref{tab:app_data_3d} groups the
three 3D benchmarks by structured or unstructured discretization and identifies
whether their prediction fields occupy a volume, a surface, or both. Both tables
report the learning task, discretization, and split of each dataset. Tensor
shapes are ordered as (time, spatial nodes/grid, fields), and ``--'' denotes no
temporal axis. The regular-grid Allen--Cahn, Darcy, and Navier--Stokes tasks
follow the Fourier and wavelet neural-operator benchmarks
\citep{li2021fourier,tripura2023wavelet}; Airfoil and Cylinder use the
MeshGraphNet flow benchmarks~\citep{pfaff2021learning}; Supernova is drawn from
the Well collection~\citep{ohana2024well}; ShapeNet-Car follows the
aerodynamic design benchmark of~\citet{wu2024transolver}; and Aircraft follows
the large-scale aerodynamic benchmark of~\citet{luo2025transolverpp}.

\subsubsection*{Allen--Cahn}
The Allen--Cahn equation is a reaction--diffusion PDE modelling phase separation
in alloys with multiple components. In two spatial dimensions it evolves a phase
field $u$ from an initial condition to a final state. We take the problem
setting unchanged from~\citet{tripura2023wavelet}: \emph{periodic boundary
conditions} on $\partial(0,3)^2$, and an initial condition drawn from the
Gaussian random field specified there. Because the dynamics are chaotic, small
perturbations of $u_0$ diverge significantly, so the learned operator
$u_0(x,y)\mapsto u(x,y,t{=}20\,\mathrm{s})$ is a stringent test. All benchmark
results reported in Table~\ref{tab:benchmark} are trained and compared at a
resolution of $64\times64$, with a $128\times128$ discretization held out for the
zero-shot super-resolution study. We use $1000$ training and $100$ test samples.

\subsubsection*{Darcy Flow}
Darcy flow is the steady state of a second-order linear elliptic PDE on the unit
square that models pressure in a porous medium. The input permeability
(diffusion coefficient) $a$ determines the pressure $u$. We take the problem
setting unchanged from~\citet{li2021fourier}: a homogeneous \emph{Dirichlet
boundary condition} on $\partial(0,1)^2$, a fixed uniform forcing, and a
two-phase piecewise-constant permeability sampled as specified there, so that
different samples correspond to different medium structures. The operator to be
learned is $a\mapsto u$. Reference solutions are
obtained with a second-order finite-difference scheme on a $421\times421$ grid
and \emph{subsampled to $85\times85$} for training, with the native
$421\times421$ resolution retained for zero-shot super-resolution. We use $1000$
training and $200$ test samples.

\subsubsection*{Navier--Stokes}
We use the 2D incompressible Navier--Stokes equation in vorticity form on the
unit torus. We take the problem setting unchanged from~\citet{li2021fourier}: a
spatially \emph{periodic} domain, initial vorticity sampled from the Gaussian
random field specified there, and the fixed forcing used in that benchmark. The
one condition we vary is the viscosity $\nu$, which sets how chaotic the
dynamics are and therefore how difficult the forecast is. The main benchmark in
Table~\ref{tab:benchmark} uses $\nu=10^{-5}$, where the dynamics are strongly
chaotic, and Table~\ref{tab:app_cfg_structured} reports separate configurations
for $\nu=10^{-3}$, $10^{-4}$, and $10^{-5}$. Trajectories are recorded every
unit of time up to a final time $T=20$.
The data are simulated on a $256\times256$ grid and \emph{downsampled to
$64\times64$} for training. The task forecasts ten future vorticity frames from
ten past frames, using $1000$ training and $100$ test trajectories. For zero-shot
super-resolution we additionally simulate a $\nu{=}10^{-4}$ set, training at
$64\times64$ and transferring to the $256\times256$ grid.
\begin{table}[t]
\centering
{\small
\begin{tabular}{clccccc}
\toprule
 & & \multicolumn{3}{c}{\textbf{Regular Grid}} &
 \multicolumn{2}{c}{\textbf{Unstructured Mesh}} \\
\cmidrule(lr){3-5}\cmidrule(lr){6-7}
 & \textbf{Configuration} & \textbf{Allen--Cahn} & \textbf{Darcy} &
 \textbf{Navier--Stokes} & \textbf{Airfoil} & \textbf{Cylinder} \\
\midrule
\multirow{3}{*}{\rotatebox[origin=c]{90}{\textsc{Physics}}}
 & Task   & Final state   & Pressure     & Flow forecast & Flow forecast & Flow forecast \\
 & Input  & Initial field & Permeability & Past states   & Past fields   & Past fields \\
 & Output & Final field   & Pressure     & Future states & Future fields & Future fields \\
\midrule
\multirow{6}{*}{\rotatebox[origin=c]{90}{\textsc{Data}}}
 & Geometry      & Periodic grid & Regular grid & Periodic grid & Shared mesh          & Variable mesh \\
 & Dimension     & 2D            & 2D           & 2D + time     & 2D + time            & 2D + time \\
 & Train / test  & 1000 / 100    & 1000 / 200   & 1000 / 100    & 800 / 100            & 1000 / 100 \\
 & Validation    & --            & --            & --            & 100                   & 100 \\
 & Input tensor  & $(-,64^2,1)$ & $(-,85^2,1)$ & $(10,64^2,1)$ & $(15,3982,4)$ & $(15,N_{\rm mesh},3)$ \\
 & Output tensor & $(-,64^2,1)$ & $(-,85^2,1)$ & $(10,64^2,1)$ & $(15,3982,4)$ & $(25,N_{\rm mesh},3)$ \\
\bottomrule
\end{tabular}
}
\caption{Benchmark dataset details and operator-learning tasks for the five 2D
structured-grid and unstructured-mesh settings.}
\label{tab:app_data_2d}
\end{table}

\subsubsection*{Airfoil}
The Airfoil dataset is a 2D compressible flow over an airfoil, taken from the
MeshGraphNet benchmark~\citep{pfaff2021learning}, where it is generated with the
SU2 solver on a triangular mesh around the wing. It is a time-dependent problem on
an unstructured mesh. Each sample stores a fixed mesh---the node positions, the
triangle connectivity, and a node-type label---together with the flow fields
$(u,v,p,\rho)$, i.e.\ the two velocity components, the pressure, and the density,
which change over time. The original data has $601$ time steps on a mesh of $5233$
nodes and $10216$ cells. We keep the mesh as it is, instead of turning it into a
grid, and reduce the data in two ways so that it is cheaper to train on. In space,
we crop the region around the airfoil and its wake: from half a chord ahead of the
leading edge to two chords behind the trailing edge, and $1.2$ chords above and
below. We keep every triangle that has at least one node inside this box, which
retains the important part of the flow---the airfoil surface, the leading edge, and
the near wake---and brings the mesh down to $3982$ nodes and $7685$ cells, with a
$200$-point airfoil boundary. In time, we keep every $12$th frame ($0,12,\dots,588$),
which takes each trajectory from $601$ down to $50$ frames with a time step of
$0.0024$. We only drop frames and do not interpolate, so the frames we keep are
exact solver outputs. We use $800$ trajectories for training, $100$ for validation,
and $100$ for testing, chosen so that the three splits cover a similar range of
Mach number (about $0.25$ to $0.86$) and angle of attack (about $-25^{\circ}$ to
$+25^{\circ}$). The task is to predict $(u,v,p,\rho)$ for $15$ steps ahead on this
cropped mesh.

\subsubsection*{Cylinder}
The Cylinder dataset is a 2D incompressible flow past a cylinder, also from the
MeshGraphNet benchmark~\citep{pfaff2021learning}, generated with the COMSOL solver
on a triangular mesh. Like Airfoil, it is a time-dependent problem on an
unstructured mesh, and each sample stores a fixed mesh (node positions, triangle
connectivity, node type) and the fields $(u,v,p)$, the two velocity components and
the pressure. The difference is that here the mesh is not the same across samples:
the number of nodes changes from one trajectory to another (about $1732$ to $2059$
nodes in the training set). The original data has $600$ time steps with a time step
of $0.01$. We keep every $6$th frame ($0,6,\dots,594$), which takes each trajectory
from $600$ down to $100$ frames (time step $0.06$) and still covers almost the whole
simulation (from $0$ to about $5.94$). As with Airfoil, we only drop frames and do
not interpolate. From the inflow speed, the cylinder diameter, and the viscosity
$\nu=10^{-3}$, the Reynolds number of the trajectories ranges from about $56.7$ to
$466.6$, with a median near $237.7$. We use $1000$ trajectories for training, $100$
for validation, and $100$ for testing. The task is to predict $(u,v,p)$ for $25$
steps ahead on each sample's own mesh, with the node type and Reynolds number given
as extra inputs. Because the mesh changes between samples, this dataset tests
whether the model can handle different meshes without needing a shared set of nodes.

\begin{table}[t]
\centering
{\small
\setlength{\tabcolsep}{2.8mm}
\begin{tabular}{clccc}
\toprule
 & & \textbf{Structured Grid} & \multicolumn{2}{c}{\textbf{Unstructured Grid}} \\
\cmidrule(lr){3-3}\cmidrule(lr){4-5}
 & \textbf{Configuration} & \textbf{Supernova} & \textbf{ShapeNet-Car} & \textbf{Aircraft} \\
\midrule
\multirow{3}{*}{\rotatebox[origin=c]{90}{\textsc{Phys.}}}
 & Task   & 3D forecast & Steady flow & Steady flow \\
 & Input  & Past fields & Geometry & Geometry \\
 & Output & Future fields & $u,v,w,p$ & $C_p,\rho,u,v,w,p$ \\
\midrule
\multirow{7}{*}{\rotatebox[origin=c]{90}{\textsc{Data}}}
 & Geometry          & Cubic simulation domain & Car + wind-tunnel domain & Aircraft exterior \\
 & Prediction support & Volume & Volume + surface & Surface \\
 & Dimension         & 3D + time & 3D & 3D \\
 & Train / test      & 592 / 20 & 789 / 100 & 140 / 10 \\
 & Validation        & 74 & -- & -- \\
 & Input tensor      & $(10,32^3,6)$ & $(-,32186,7)$ & $(-,331971,6)$ \\
 & Output tensor     & $(10,32^3,6)$ & $(-,32186,4)$ & $(-,331971,6)$ \\
\bottomrule
\end{tabular}
}
\caption{Dataset details for the three 3D benchmarks, grouped by structured and
unstructured discretization. Prediction support states whether the target fields
occupy the volume, the surface, or both.}
\label{tab:app_data_3d}
\end{table}

\begin{table}[t]
\centering
{\fontsize{9.5}{11}\selectfont
\setlength{\tabcolsep}{1mm}
\begin{tabular}{@{}clcccccc@{}}
\toprule
 & & \multicolumn{5}{c}{\textbf{Regular Grid}} & \textbf{3D Volume} \\
\cmidrule(lr){3-7}\cmidrule(lr){8-8}
 & \textbf{Configuration} & \textbf{Allen--Cahn} & \textbf{Darcy} &
 \multicolumn{3}{c}{\textbf{Navier--Stokes}} & \textbf{Supernova} \\
\cmidrule(lr){5-7}
 & & & & $\nu{=}10^{-3}$ & $\nu{=}10^{-4}$ & $\nu{=}10^{-5}$ & \\
\midrule
\multirow{12}{*}{\rotatebox[origin=c]{90}{\textsc{Training}}}
 & Normalization & \multicolumn{5}{c}{Pointwise Gaussian} & $\log(1{+}x)$ \\
\cmidrule(lr){2-8}
 & Loss & $L_2{+}.1\nabla{+}.02\mathrm{TF}$ & $L_2{+}.1\nabla{+}.1\mathrm{TF}$ & $L_2{+}.1\mathrm{TF}$ & $L_2{+}.1(\nabla{+}\mathrm{TF})$ & $L_2{+}.1(\nabla{+}\mathrm{TF})$ & $L_2{+}.05\nabla{+}.1\mathrm{TF}$ \\
 & Epochs & 2000 & 500 & 600 & 700 & 500 & 300 \\
 & LR / WD & $10^{-3}/10^{-6}$ & $5{\times}10^{-4}/10^{-5}$ & $5{\times}10^{-4}/10^{-4}$ & $7{\times}10^{-4}/10^{-4}$ & $10^{-3}/10^{-4}$ & $10^{-3}/10^{-6}$ \\
 & Scale LR/WD & $10^{-4}/0$ & $5{\times}10^{-5}/0$ & $5{\times}10^{-5}/0$ & $10^{-4}/0$ & $10^{-4}/0$ & $10^{-4}/0$ \\
 & Optimizer & AdamW & AdamW & AdamW & AdamW & AdamW & AdamW \\
 & Batch & 20 & 4 & 8 & 8 & 20 & 20 \\
 & Scheduler & OneCycleLR & OneCycleLR & OneCycleLR & OneCycleLR & OneCycleLR & OneCycleLR \\
 & History $T_{\mathrm{in}}$ & -- & -- & 10 & 10 & 10 & 10 \\
 & Rollout $T_{\mathrm{roll}}$ & -- & -- & 10 & 10 & 10 & 10 \\
 & Step weight $w_{\mathrm{end}}$ & -- & -- & 4 & 5 & 5 & 1 \\
 & Noise Inj. & -- & -- & 0.005 & 0.01 & 0.05 & 0.01 \\
\midrule
\multirow{7}{*}{\rotatebox[origin=c]{90}{\textsc{Architecture}}}
 & Layers $L$ & 4 & 6 & 4 & 5 & 5 & 4 \\
 & $d_h$ & 128 & 128 & 192 & 256 & 192 & 64 \\
 & $J/M$ & 3/4 & 5/8 & 5/4 & 5/4 & 5/8 & 5/6 \\
 & $d_c/Q$ & 64/64 & 64/64 & 64/64 & 96/64 & 64/64 & 32/64 \\
 & $k$ & 8 & 24 & 8 & 8 & 8 & 8 \\
 & Hypergraph & Periodic & Euclidean & Periodic & Periodic & Periodic & Euclidean \\
 & Parameters & 575,629 & 1,797,023 & 1,397,653 & 3,371,290 & 2,155,994 & 267,610 \\
\bottomrule
\end{tabular}
}
\caption{HALO configurations for 2D and 3D regular-grid benchmarks. ``--''
denotes no temporal rollout; $w_{\mathrm{end}}$ is the endpoint of the linear
per-step ramp of Eq.~\eqref{eq:rollout_loss}, so $w_{\mathrm{end}}{=}1$ means
uniform weighting; Supernova is normalized by a $\log(1{+}x)$ transform
followed by a channel-wise $z$-score computed over $128$ trajectories.}
\label{tab:app_cfg_structured}
\end{table}

\begin{table}[t]
\centering
{\small
\begin{tabular}{@{}clcccc@{}}
\toprule
 & \textbf{Configuration} & \textbf{Airfoil} & \textbf{Cylinder} & \textbf{ShapeNet-Car} & \textbf{Aircraft} \\
\midrule
\multirow{12}{*}{\rotatebox[origin=c]{90}{\textsc{Training}}}
 & Normalization & \multicolumn{2}{c}{Nondim.\ + ch.~$z$-score} & \multicolumn{2}{c}{ch.~$z$-score} \\
\cmidrule(lr){2-6}
 & Loss & $L_2{+}.1\mathrm{TF}$ & $L_{\rm data}{+}\nabla{+}.1\mathrm{TF}$ & $L_{\mathrm{MSE}}{+}.1\mathrm{TF}$ & $L_{\rm bal}{+}.1\mathrm{TF}$ \\
 & Epochs & 500 & 600 & 500 & 200 \\
 & LR / WD & $10^{-3}/10^{-5}$ & $10^{-3}/10^{-6}$ & $10^{-3}/10^{-5}$ & $10^{-3}/10^{-5}$ \\
 & Scale LR/WD & $10^{-4}/0$ & $10^{-4}/0$ & $10^{-3}/0$ & $10^{-3}/0$ \\
 & Optimizer & AdamW & AdamW & AdamW & AdamW \\
 & Batch & 4 & 10 & 1 & 1 \\
 & Scheduler & OneCycleLR & OneCycleLR & OneCycleLR & OneCycleLR \\
 & History $T_{\mathrm{in}}$ & 15 & 15 & -- & -- \\
 & Rollout $T_{\mathrm{roll}}$ & 15 & 25 & -- & -- \\
 & Step weight $w_{\mathrm{end}}$ & 1 & 3 & -- & -- \\
 & Noise Inj. & 0 & 0.02 & -- & -- \\
\midrule
\multirow{6}{*}{\rotatebox[origin=c]{90}{\textsc{Architecture}}}
 & Layers $L$ & 4 & 4 & 6 & 6 \\
 & $d_h$ & 192 & 192 & 192 & 128 \\
 & $J/M$ & 5/4 & 5/4 & 5/8 & 5/8 \\
 & $d_c/Q$ & 64/64 & 64/64 & 96/64 & 64/64 \\
 & Hypergraph & ring-2 & ring-2 & ring-1 & ring-1 \\
 & Parameters & 1,449,560 & 1,457,367 & 4,116,130 & 1,847,716 \\
\bottomrule
\end{tabular}}
\caption{HALO training and architecture configurations for the unstructured 2D and 3D mesh
benchmarks. ``--'' denotes no temporal rollout; $w_{\mathrm{end}}$ is the
endpoint of the linear per-step ramp of Eq.~\eqref{eq:rollout_loss}, so
$w_{\mathrm{end}}{=}1$ means uniform weighting.
$L_{\rm data}=L_v+1.25L_p$ weights Cylinder's velocity and pressure
channels directly, in place of a single relative-$L_2$ term.
$L_{\rm bal}$ is the channel-balanced relative $L_2$ of
Eq.~\eqref{eq:app_aircraft_bal}, which normalizes each Aircraft field separately.}
\label{tab:app_cfg_unstructured}
\end{table}

\subsubsection*{Supernova}
Supernova is a 3D compressible-gas benchmark from the Well
collection~\citep{ohana2024well}. It models an explosion in a monatomic ideal
gas, governed by the conservation laws for mass, momentum, and energy with
self-gravity and radiative heating and cooling, closed by an ideal-gas equation
of state. We use the released data unchanged. It is
generated with the $N$-body/SPH code ASURA-FDPS, using a density-independent SPH
(DISPH) scheme so that the shock front of the supernova shell is resolved. The gas
is set to one solar metallicity to match the environment around the solar system,
which gives strong radiative cooling. Each sample stores pressure, density,
temperature, and the three velocity components---six fields in all---and we use
$592$ training, $74$ validation, and $20$ test samples. The dataset is released at
a resolution of $64^3$, which is the only size available. Because of compute and
memory limits, we subsample it to $32^3$ for training and keep the full $64^3$ grid
for zero-shot super-resolution. The task forecasts the six fields for ten steps
ahead.

\subsubsection*{ShapeNet-Car}
ShapeNet-Car is a 3D steady-state aerodynamics benchmark whose end goal is to
estimate the drag coefficient of a driving car, the quantity that governs
automotive shape design~\citep{wu2024transolver}. It contains $889$ distinct car
geometries drawn from the ``car'' category of ShapeNet, each placed in a
simulated wind tunnel at a constant inlet speed of $72\,\mathrm{km/h}$
(Figure~\ref{fig:app_shapenet_car}a); the
surrounding space is discretized into a single unstructured mesh of $32{,}186$
points that carries both the air velocity in the volume and the pressure over the
car surface. We adopt the standard split of $789$ meshes for training and $100$
for testing, and follow the released preprocessing in which each node is described
by its position, the signed distance to the surface, and the outward normal
vector. The operator maps this static geometry to the per-node velocity and
pressure fields, from which the drag coefficient is recovered by the surface
integral
\begin{equation}
C_D=\frac{2}{v^2 A}\!\left(
\int_{\partial\Omega} p(\xi)\,\bigl(\hat{n}(\xi)\!\cdot\!\hat{\imath}\bigr)\,d\xi
+\int_{\partial\Omega}\tau(\xi)\!\cdot\!\hat{\imath}\,d\xi\right),
\label{eq:app_shapenet_drag}
\end{equation}
where $v$ is the inlet speed, $\partial\Omega$ the car surface, $p$ the
pressure, $\hat{n}$ the outward unit normal, $\tau$ the wall shear stress,
$\hat{\imath}=(-1,0,0)$ the inlet-flow direction, and $A$ the reference area,
taken as the area of the smallest rectangle enclosing the front of the car; the
fluid is treated as unit density. Unlike the other
benchmarks this task has no temporal axis; performance is reported as the relative
$L_2$ error of the volume and surface fields, the relative error of the drag
coefficient $C_D$, and the Spearman rank correlation $\rho_D$ between the
predicted and reference drag rankings. The rank correlation measures how
faithfully a model orders competing designs. Following~\citet{wu2024transolver},
we define it as the Pearson correlation between the rank variables,
\begin{equation}
\rho_D=\frac{\operatorname{cov}\!\bigl(R(C_D),\,R(\widehat{C}_D)\bigr)}
            {\sigma_{R(C_D)}\,\sigma_{R(\widehat{C}_D)}},
\label{eq:app_shapenet_rho}
\end{equation}
where $C_D=\{C_D^{1},\dots,C_D^{N}\}$ and
$\widehat{C}_D=\{\widehat{C}_D^{1},\dots,\widehat{C}_D^{N}\}$ are the
ground-truth and predicted drag coefficients over the $N$ test samples, $R$ is
the ranking function, $\operatorname{cov}$ the covariance, and $\sigma$ the
standard deviation of the rank variables. We use this form rather than the
$1-6\sum_i d_i^2/\bigl(N(N^2-1)\bigr)$ shortcut, which is only equivalent when
no two coefficients share a rank, so that our numbers are computed exactly as in
the benchmarks we compare against (Table~\ref{tab:large_geom_results}).

\begin{figure}[H]
    \centering
    \includegraphics[width=0.7\linewidth,height=0.4\textheight,keepaspectratio]{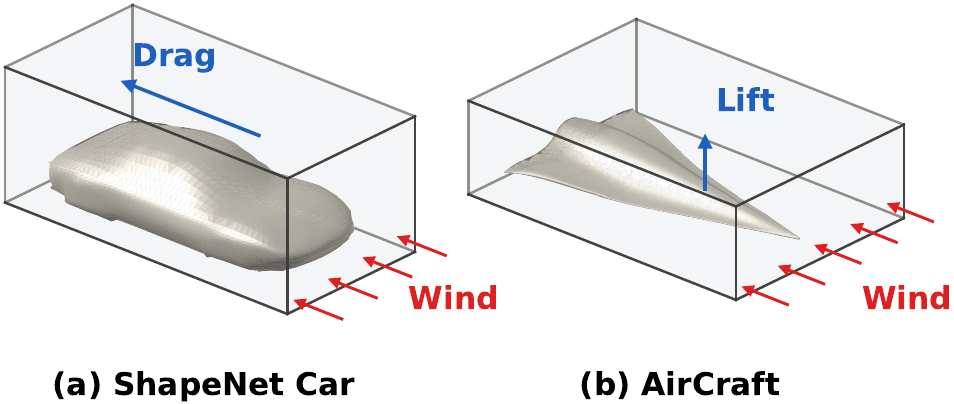}
    \caption{The two large 3D unstructured aerodynamic benchmarks, each immersed
    in an incoming wind: \textbf{(a)} ShapeNet-Car (drag) and \textbf{(b)}
    Aircraft (lift).}
    \label{fig:app_shapenet_car}
\end{figure}

\subsubsection*{Aircraft}
The Aircraft dataset~\citep{luo2025transolverpp} is a high-fidelity, large-scale
aerodynamics benchmark introduced alongside Transolver++, in which each geometry
is discretized into $331{,}971$ surface mesh points, roughly an order of
magnitude denser than ShapeNet-Car. It comprises over $30$ aircraft designs
simulated under $5$ incoming flow conditions that vary the Mach number, angle of
attack, and sideslip angle (Figure~\ref{fig:app_shapenet_car}b), with the CFD solutions produced by aerodynamicists at
an aircraft-design institution. We follow the released split of $140$ training and
$10$ test cases. Each node is described by its coordinates $(x,y,z)$ and outward
surface normal, and the reference solution provides six physical quantities: the
pressure coefficient $C_p$, density $\rho$, the velocity components $(u,v,w)$, and
pressure $p$. Performance is reported as the relative $L_2$ error over all six
quantities (Field) and over the surface pressure alone (Surf.), together with the
relative error of the lift coefficient $C_l$ and the coefficient of determination
$R_l^2$ between the predicted and reference lift across the test cases,
\begin{equation}
R_l^2
= 1 - \frac{\sum_{i=1}^{N}\bigl(C_l^{i}-\hat{C}_l^{i}\bigr)^2}
            {\sum_{i=1}^{N}\bigl(C_l^{i}-\bar{C}_l\bigr)^2},
\label{eq:app_aircraft_r2}
\end{equation}
where $\hat{C}_l^{i}$ is the predicted lift coefficient of case $i$ and
$\bar{C}_l$ the mean reference lift. As with drag in
Eq.~\eqref{eq:app_shapenet_drag}, $C_l$ follows from integrating the predicted
surface pressure and shear stress, here along the lift direction, so it tests
whether a model resolves the surface field accurately enough to recover the
design quantity itself (Table~\ref{tab:large_geom_results}).
At this resolution the benchmark bridges standard academic tasks and
industrial million-scale geometries, stressing a model's ability to build and
process very large meshes.


\section{Training Details}
\label{sec:training}
Throughout, we report the mean relative $L_2$ error~\cite{li2021fourier},
$\|\mathcal{N}_\theta(a^{(i)})-u^{(i)}\|_2/\|u^{(i)}\|_2$, averaged over the test
set. All models minimize Eq.~\eqref{eq:total_loss}, abbreviated in the
configuration tables as
$\mathcal{L}=\mathcal{L}_{\rm data}+\alpha_{\rm grad}\mathcal{L}_{\rm grad}
+\beta_{\rm TF}\mathcal{L}_{\rm TF}$, with $\nabla$ denoting the gradient penalty
and TF the tight-frame regularizer. Time-dependent tasks use full autoregressive
rollout training: each prediction is fed back into the sliding window of
Eq.~\eqref{eq:window_shift} and the loss is accumulated over the complete
horizon $T_{\mathrm{roll}}$ listed in Tables~\ref{tab:app_cfg_structured}
and~\ref{tab:app_cfg_unstructured}. The per-step weights $w_r$ of
Eq.~\eqref{eq:rollout_loss} ramp linearly along that horizon, from $w_1 = 1$ at
the first step to the endpoint $w_{\mathrm{end}}$ tabulated for each benchmark;
they are fixed for the whole of training rather than annealed over epochs.
Navier--Stokes uses $w_{\mathrm{end}}=4$ at $\nu=10^{-3}$ and
$w_{\mathrm{end}}=5$ at $\nu=10^{-4}$ and $10^{-5}$, and Cylinder uses
$w_{\mathrm{end}}=3$ across its $25$ steps, while Airfoil and Supernova weight
their steps uniformly ($w_{\mathrm{end}}=1$). Cylinder is also the one benchmark
whose horizon exceeds its history, $T_{\mathrm{roll}}=25$ against
$T_{\mathrm{in}}=15$, so by construction its final ten predictions are made from
a window containing no observed snapshot at all. We additionally inject Gaussian
noise into the fed-back history at the standard deviation listed in
Tables~\ref{tab:app_cfg_structured} and~\ref{tab:app_cfg_unstructured}
(pushforward regularization~\cite{brandstetter2022message}), which exposes the
model to the off-manifold distribution it meets at inference time. The
perturbation applies only from the second rollout step onward, so the first
prediction of every trajectory is made from clean observed data, and the loss is
always evaluated against the clean target throughout. Airfoil is the one rollout benchmark trained without
it, at the noise level of zero listed in
Table~\ref{tab:app_cfg_unstructured}. Trainable dyadic scales use a separate parameter
group with one-tenth of the main learning rate and zero weight decay. Every run
fixes the random seed for weight initialization, data shuffling, and any
stochastic regularization. Experiments use two NVIDIA A40 48\,GB GPUs.
Algorithm~\ref{alg:halo_full} gives the complete HALO training and inference
procedure; the per-layer AWB computation it invokes is specified in
Eqs.~\eqref{eq:cheby_recurrence}--\eqref{eq:wib_output}.

\begin{algorithm}[htbp]
\caption{HALO: Training and Inference}
\label{alg:halo_full}
\begin{algorithmic}[1]
\fontsize{10}{11.5}\selectfont
\STATE \textbf{--- Preprocessing (once) ---}
\REQUIRE Training set $\{(a^{(i)}, u^{(i)})\}_{i=1}^{N}$, coordinates $\{x_i\}$, neighborhood size $k$, Chebyshev order $M$, scales $J$, AWB layers $L$
\STATE Build hypergraph: $e_i \leftarrow \{i\} \cup \mathcal{N}_k(i)$ for each node \hfill $\triangleright$ $k$-NN or $k$-ring
\STATE Compute $\Delta = I - D_v^{-1/2}HW_eD_e^{-1}H^{\top}D_v^{-1/2}$
\STATE Estimate $\lambda_{\max}$ by sparse power iteration \hfill Eq.~\eqref{eq:app_power_iter}
\STATE $\tilde{\Delta} \leftarrow 2\Delta/\lambda_{\max} - I$ \hfill $\triangleright$ rescale spectrum to $[-1,1]$
\STATE Fix the surrogate grid $\{\lambda_r\}_{r=1}^{R}$, uniform on $[10^{-3}\lambda_{\max},\lambda_{\max}]$, $R{=}64$ \hfill $\triangleright$ no eigendecomposition
\STATE Initialize scales: $s_j^{\mathrm{init}} \leftarrow x^{*}\,2^{j}/\lambda_{\max}$, \quad $\rho_j^{(\ell)} \leftarrow \mathrm{softplus}^{-1}(s_j^{\mathrm{init}})$ for all $\ell,j$
\STATE
\STATE \textbf{--- Training loop ---}
\FOR{each epoch}
  \FOR{each mini-batch $\{(a, u)\}$}
    \STATE \textbf{Uplift:} $\mathbf{h}^{(0)} \leftarrow \mathrm{GELU}([a(x_i);\, x_i;\, z_i]^{\top} W_{\mathrm{in}} + b_{\mathrm{in}})$ \hfill $\mathbb{R}^{d_{\mathrm{in}}} \!\to\! \mathbb{R}^{d_h}$
    \FOR{$\ell = 1$ \TO $L$}
      \STATE $s_j^{(\ell)} \leftarrow \mathrm{softplus}(\rho_j^{(\ell)})$ for $j=0,\dots,J{-}1$
      \STATE Shared basis: $\mathbf{T}_0\!=\!\mathbf{h}^{(\ell-1)}$, $\mathbf{T}_1\!=\!\tilde{\Delta}\mathbf{h}^{(\ell-1)}$, $\mathbf{T}_m\!=\!2\tilde{\Delta}\mathbf{T}_{m-1}-\mathbf{T}_{m-2}$ \hfill $m=2,\dots,M$
      \FOR{$j = 0$ \TO $J{-}1$}
        \STATE Coefficients $\{c_{s_j,m}\}$ via Chebyshev--Gauss quadrature \hfill Eq.~\eqref{eq:quadrature_coeff}
        \STATE $\Psi_j^{\mathrm{base}} \leftarrow \tfrac{1}{2}c_{s_j,0}\mathbf{T}_0 + \sum_{m=1}^{M} c_{s_j,m}\mathbf{T}_m$
        \STATE $\Psi_j^{\mathrm{delta}} \leftarrow \tfrac{1}{2}c_{s_j,0}(\mathbf{T}_0 \mathbf{P}_{\downarrow})\Theta_{j,0}^{(\ell)}\mathbf{P}_{\uparrow} + \sum_{m=1}^{M} c_{s_j,m}(\mathbf{T}_m \mathbf{P}_{\downarrow})\Theta_{j,m}^{(\ell)}\mathbf{P}_{\uparrow}$
        \STATE $\Psi_j \leftarrow \Psi_j^{\mathrm{base}} + \Psi_j^{\mathrm{delta}}$
      \ENDFOR
      \STATE $\mathbf{V}^{(\ell)} \leftarrow [\Psi_0\!\parallel\!\cdots\!\parallel\!\Psi_{J-1}]\,\mathbf{W}_{\mathrm{mix}}^{(\ell)}$ \hfill cross-scale mixing
      \STATE $\mathbf{h}^{(\ell)} \leftarrow \mathrm{GELU}\!\bigl(\mathrm{LN}(\mathbf{V}^{(\ell)} + \mathbf{h}^{(\ell-1)} W_{\mathrm{skip}})\bigr)$
    \ENDFOR
    \STATE \textbf{Output:} $\hat{u} \leftarrow (\mathbf{h}^{(L)} + \mathbf{h}^{(0)})\,W_{\mathrm{out}}$ \hfill global residual
    \STATE
    \STATE \textit{--- Autoregressive rollout (time-dependent only) ---}
    \STATE $X^{(1)} \leftarrow [\,u^{(1-T_{\mathrm{in}})};\dots;u^{(0)}\,]$ \hfill ground-truth seed window, width $T_{\mathrm{in}}$
    \FOR{$r = 1$ \TO $T_{\mathrm{roll}}$}
      \STATE $\hat{u}^{(r)} \leftarrow$ uplift, $L$ AWB layers, and output projection applied to $[X^{(r)};\, x_i;\, z_i]$
      \STATE $\hat{u}^{(r)}_{\mathrm{noisy}} \leftarrow \hat{u}^{(r)} + \epsilon$, \quad $\epsilon \sim \mathcal{N}(0,\, \sigma_{\mathrm{noise}}^2 I)$, \quad $\epsilon\!=\!0$ for $r\!=\!1$ \hfill pushforward noise; first step is noise-free
      \STATE $X^{(r+1)} \leftarrow [\,X^{(r)}_{2:T_{\mathrm{in}}};\, \hat{u}^{(r)}_{\mathrm{noisy}}\,]$ \hfill drop oldest, append prediction \quad Eq.~\eqref{eq:window_shift}
    \ENDFOR
    \STATE
    \STATE \textit{--- Loss computation ---}
    \STATE $\mathcal{L}_{\mathrm{data}} \leftarrow \sum_{r} w_r\,\| \hat{u}^{(r)} - u^{(r)} \|_2 / \| u^{(r)} \|_2$ \hfill step-weighted relative $L_2$; single term with $w\!=\!1$ if steady \quad Eq.~\eqref{eq:rollout_loss}
    \STATE $\mathcal{L}_{\mathrm{grad}} \leftarrow$ finite-difference gradient penalty
    \STATE $\mathcal{L}_{\mathrm{TF}} \leftarrow \frac{1}{L}\sum_{\ell=1}^{L} \mathrm{Var}_{r=1,\dots,R}\bigl[\sum_j g(s_j^{(\ell)}\lambda_r)^2\bigr]$ \hfill tight-frame loss on the fixed grid
    \STATE $\mathcal{L} \leftarrow \mathcal{L}_{\mathrm{data}} + \alpha_{\mathrm{grad}}\,\mathcal{L}_{\mathrm{grad}} + \beta_{\mathrm{TF}}\,\mathcal{L}_{\mathrm{TF}}$
    \STATE Backpropagate $\nabla_{\theta,\,\{\rho_j\}}\,\mathcal{L}$ and update
  \ENDFOR
\ENDFOR
\STATE
\STATE \textbf{--- Inference ---}
\STATE Given $a$, run the uplift, the $L$ AWB layers, and the output projection; for time-dependent tasks iterate the rollout noise-free ($\epsilon\!=\!0$) for $T_{\mathrm{roll}}$ steps. No loss is evaluated.
\end{algorithmic}
\end{algorithm}

\subsubsection*{Hypergraph Construction}
Structured domains use $k$-nearest-neighbor hyperedges, made periodic where the
boundary conditions are, while the mesh benchmarks use ring-based mesh
neighborhoods: two-ring for Airfoil and Cylinder, and one-ring for the denser
ShapeNet-Car and Aircraft meshes. Algorithms~\ref{alg:knn}--\ref{alg:edgering}
give the three neighbor rules and the weighting step they share, which together
produce the operator of Eq.~\eqref{eq:hg_laplacian}. Allen--Cahn, Darcy,
Navier--Stokes, and Supernova use the $k$-NN rule ($k=8$, or $24$ for Darcy);
Airfoil and Cylinder use the cell-based two-ring rule; ShapeNet-Car and Aircraft
receive their mesh as an edge list and use the one-ring rule of
Algorithm~\ref{alg:edgering}.
\begin{algorithm}[htbp]
\caption{$k$-nearest-neighbor hyperedges. One hyperedge per node,
$e_i=\{i\}\cup\mathcal{N}_k(i)$. Squared distances are accumulated one axis at a
time: on a regular lattice many neighbors are exactly equidistant, and a fused
reduction breaks those ties differently, which changes the graph and hence
$\lambda_{\max}$.}
\label{alg:knn}
\begin{algorithmic}[1]
\fontsize{10}{11.5}\selectfont
\REQUIRE Coordinates $X\in\mathbb{R}^{N\times d}$, neighbors $k$, period
$P\in\mathbb{R}^{d}$ or $\varnothing$, block size $B$
\ENSURE Hyperedges $\mathcal{E}=(e_1,\dots,e_N)$, each anchor-first
\STATE $k \leftarrow \min(k,\,N-1)$; \quad $\mathcal{E} \leftarrow ()$
\FOR{$s = 0$ \TO $N-1$ in steps of $B$}
  \STATE $t \leftarrow \min(s+B,\,N)$; \quad
         $D \leftarrow \mathbf{0}\in\mathbb{R}^{(t-s)\times N}$
         \hfill $\triangleright$ row block, memory $\mathcal{O}(BN)$
  \FOR{$a = 1$ \TO $d$}
    \STATE $\delta \leftarrow \lvert X_{s:t,a}\otimes\mathbf{1}
           - \mathbf{1}\otimes X_{:,a}\rvert$
    \IF{$P \neq \varnothing$}
      \STATE $\delta \leftarrow \min(\delta,\,P_a-\delta)$
             \hfill $\triangleright$ periodic wrap
    \ENDIF
    \STATE $D \leftarrow D + \delta^{2}$
  \ENDFOR
  \STATE $D_{i-s,\,i} \leftarrow \infty$ for $i=s,\dots,t-1$
         \hfill $\triangleright$ exclude the anchor itself
  \FOR{$i = s$ \TO $t-1$}
    \STATE $\mathcal{N}_i \leftarrow$ indices of the $k$ smallest entries of
           $D_{i-s,\,:}$
    \STATE Append $e_i \leftarrow (i,\,\mathcal{N}_i)$ to $\mathcal{E}$
  \ENDFOR
\ENDFOR
\RETURN $\mathcal{E}$
\end{algorithmic}
\end{algorithm}

\begin{algorithm}[htbp]
\caption{Mesh $k$-ring hyperedges. Every cell is a hyperedge, and around every
node the patch reachable within $r$ mesh hops. Rings follow mesh connectivity,
so they never join nodes that are spatially close but geodesically distant, such
as the two sides of a thin trailing edge.}
\label{alg:kring}
\begin{algorithmic}[1]
\fontsize{10}{11.5}\selectfont
\REQUIRE Cells $\mathcal{C}$, coordinates $X$, ring radius $r$, flag
\textsc{IncludeCells}
\ENSURE Hyperedges $\mathcal{E}$ with centers $c$
\STATE $\mathrm{nbr}[v] \leftarrow \varnothing$ for all $v$
\FORALL{$\mathrm{cell}\in\mathcal{C}$}
  \FORALL{ordered pairs $(a,b)$ of distinct vertices of $\mathrm{cell}$}
    \STATE $\mathrm{nbr}[a] \leftarrow \mathrm{nbr}[a]\cup\{b\}$
           \hfill $\triangleright$ mesh adjacency from cell incidence
  \ENDFOR
\ENDFOR
\STATE $\mathcal{E} \leftarrow ()$; \quad $c \leftarrow ()$
\IF{\textsc{IncludeCells}}
  \FORALL{$\mathrm{cell}\in\mathcal{C}$}
    \STATE Append $\mathrm{vertices}(\mathrm{cell})$ to $\mathcal{E}$; \quad
           append $\mathrm{centroid}(X_{\mathrm{cell}})$ to $c$
  \ENDFOR
\ENDIF
\FOR{$j = 0$ \TO $N-1$}
  \STATE $S \leftarrow \{j\}$; \quad $F \leftarrow \{j\}$
  \FOR{$1$ \TO $r$}
    \STATE $F \leftarrow \bigl(\bigcup_{u\in F}\mathrm{nbr}[u]\bigr)\setminus S$;
           \quad $S \leftarrow S\cup F$
  \ENDFOR
  \STATE Append $\mathrm{sorted}(S)$ to $\mathcal{E}$; \quad append $X_j$ to $c$
         \hfill $\triangleright$ center is the seed $j$
\ENDFOR
\RETURN $\mathcal{E},\,c$
\end{algorithmic}
\end{algorithm}

\begin{algorithm}[htbp]
\caption{Edge-ring hyperedges, for meshes supplied as an edge list rather than
as cells (ShapeNet-Car, Aircraft). Rings grow by sparse boolean adjacency
products; only the sparsity pattern is kept, since membership is reachability
within $r$ hops and not the number of walks that reach a node.}
\label{alg:edgering}
\begin{algorithmic}[1]
\fontsize{10}{11.5}\selectfont
\REQUIRE Mesh edges $(\mathrm{src},\mathrm{dst})$, node count $N$, ring radius
$r$
\ENSURE Membership pairs $(\mathrm{row},\mathrm{col})$, one hyperedge per anchor
\STATE Drop self-loops from $(\mathrm{src},\mathrm{dst})$
\STATE $\mathrm{row} \leftarrow [0..N{-}1]\,\Vert\,\mathrm{src}$; \quad
       $\mathrm{col} \leftarrow [0..N{-}1]\,\Vert\,\mathrm{dst}$
       \hfill $\triangleright$ one-ring membership, anchors included
\STATE $(\mathrm{row},\mathrm{col}) \leftarrow
       \mathrm{unique}(\mathrm{row},\mathrm{col})$
\IF{$r > 1$}
  \STATE $A \leftarrow$ sparse adjacency of $(\mathrm{src},\mathrm{dst})$ with
         self-loops
  \STATE $\mathcal{M} \leftarrow$ sparse indicator of
         $(\mathrm{row},\mathrm{col})$
  \FOR{$1$ \TO $r-1$}
    \STATE $\mathcal{M} \leftarrow \mathrm{pattern}(A\,\mathcal{M})$
           \hfill $\triangleright$ one more hop, values discarded
  \ENDFOR
  \STATE $(\mathrm{row},\mathrm{col}) \leftarrow
         \mathrm{nonzeros}(\mathcal{M})$
\ENDIF
\RETURN $(\mathrm{row},\mathrm{col})$
\end{algorithmic}
\end{algorithm}

\begin{algorithm}[htbp]
\caption{Hyperedge weighting and the normalized hypergraph Laplacian of
Eq.~\eqref{eq:hg_laplacian}, shared by all three neighbor rules and run on every
benchmark. A hyperedge weighs near one when its members cluster tightly about
its center and less when they are diffuse. The final \textsc{return} is the only
assembly step: ShapeNet-Car and Aircraft stop before it and keep $S$, $H$, $Q$
factored, for the memory reason given in
Appendix~\ref{app:factored_laplacian}.}
\label{alg:laplacian}
\begin{algorithmic}[1]
\fontsize{10}{11.5}\selectfont
\REQUIRE Hyperedges $\mathcal{E}$, centers $c$, coordinates $X$, period $P$,
floor $\varepsilon$
\ENSURE $\Delta = I - D_v^{-1/2}HW_eD_e^{-1}H^{\top}D_v^{-1/2}$
\STATE $H\in\{0,1\}^{N\times M}$ with $H_{v,e}=1 \iff v\in e$
       \hfill $\triangleright$ incidence
\FORALL{$e\in\mathcal{E}$}
  \STATE $D_e \leftarrow \max(\lvert e\rvert,\,1)$
  \STATE $d_v \leftarrow \mathrm{dist}_P(x_v,\,c_e)$ for $v\in e$
         \hfill $\triangleright$ wraps if $P\neq\varnothing$
  \STATE $\sigma_e \leftarrow \tfrac{1}{D_e}\sum_{v\in e} d_v$
  \STATE $w_e \leftarrow \tfrac{1}{D_e}\sum_{v\in e}
         \exp\bigl(-d_v^{2}/(\sigma_e^{2}+\varepsilon)\bigr)$
\ENDFOR
\STATE $D_v(i) \leftarrow \sum_{e\ni i} w_e$
       \hfill $\triangleright$ weighted node degree
\STATE $S \leftarrow \mathrm{diag}\bigl(1/\sqrt{D_v+\varepsilon}\bigr)$; \quad
       $Q \leftarrow \mathrm{diag}(w_e/D_e)$
\RETURN $\Delta \leftarrow I - SHQH^{\top}S$
\end{algorithmic}
\end{algorithm}

Each ring is stored sorted, so its first member is its lowest-numbered node
rather than the node it was grown from. Passing the centers $c$ explicitly is
therefore what keeps every ring anchored on its seed; the weighting step falls
back to treating the first member as the anchor only when no centers are
supplied. A cell's center is the centroid of its vertices and is not a member
index at all. The two families give $M=\lvert\mathcal{C}\rvert+N$ hyperedges:
on the Airfoil mesh, $7{,}685$ cells plus $3{,}982$ rings, or $11{,}667$ in
total.

The axis-by-axis accumulation in Algorithm~\ref{alg:knn} is a correctness
requirement rather than an optimization. On a regular lattice many candidates
are exactly equidistant---on a $32^3$ grid with $k=8$, six face neighbors are
joined by twelve equidistant edge neighbors competing for the last two
slots---and the reduction order fixes which ties win. Blocking the rows bounds
memory to $\mathcal{O}(BN)$: the full $N\times N$ distance matrix is never
formed, which at $N=331{,}971$ would require $441$\,GB.

The final line of Algorithm~\ref{alg:laplacian} is the only place the operator
is assembled, and on ShapeNet-Car and Aircraft it is not executed: $H$, $S$ and
$Q$ are kept and applied through Eq.~\eqref{eq:app_factored_apply}, for the
reasons set out in Appendix~\ref{app:factored_laplacian}. Algorithm~\ref{alg:knn}
costs $\Theta(N^2 d)$ time, which is why the $k$-NN rule is used only on the
grids and the $32^3$ Supernova cube; the two mesh rules cost
$\mathcal{O}(N\bar{d}^{\,r})$ and
$\mathcal{O}(r\,\mathrm{nnz}(A)\,\bar{d})$ respectively, with $\bar{d}$ the mean
mesh degree, and are used on everything larger.

\FloatBarrier

\subsubsection*{Input Normalization}
Every field is standardized before it enters the network, and the prediction is
mapped back to physical units at the output; the ``Normalization'' rows of
Tables~\ref{tab:app_cfg_structured} and~\ref{tab:app_cfg_unstructured} name the
scheme used for each benchmark. On the regular grids (Allen--Cahn, Darcy,
Navier--Stokes) we use a pointwise Gaussian normalization: each grid location $i$
is centered and scaled by its own mean $\mu_i$ and standard deviation $\sigma_i$
estimated on the training set,
\begin{equation}
\hat{x}_i=\frac{x_i-\mu_i}{\sigma_i+\varepsilon},
\label{eq:app_norm_zscore}
\end{equation}
with a small $\varepsilon$ for numerical stability. Allen--Cahn and Darcy map
between different physical quantities, so their input and output fields use
separate statistics, whereas Navier--Stokes maps the vorticity field to itself and
shares a single joint statistic. Airfoil and Cylinder apply the same per-channel
$z$-score after a physical nondimensionalization, with $(\mu,\sigma)$ pooled over
all nodes and time frames of the training set.

Supernova needs one extra step, which is what the ``$\log(1{+}x)$'' entry in
Table~\ref{tab:app_cfg_structured} stands for. Its six fields (density, pressure,
temperature, and the three velocity components) span several orders of magnitude
and are strongly heavy-tailed at the shock front, so a plain $z$-score is swamped
by the extremes. We therefore first compress each field with a signed logarithm
and then $z$-score it per channel,
\begin{equation}
\tilde{x}=\operatorname{sign}(x)\,\log\!\bigl(1+|x|\bigr),
\qquad
\hat{x}=\frac{\tilde{x}-\mu}{\sigma+\varepsilon},
\label{eq:app_norm_log}
\end{equation}
where $\mu$ and $\sigma$ are per-channel statistics of the transformed field,
estimated over $128$ training trajectories. This map is almost linear for small
$|x|$ and logarithmic in the tails, which keeps the quiet ambient gas and the
violent shock on a comparable numeric scale.

\subsubsection*{Field-Aware Targets}
For the multi-field datasets, HALO uses a field-aware design that preserves the
identity of each physical quantity while modelling the fields jointly. On the
time-dependent mesh tasks (Airfoil, Cylinder) it applies this on both sides of the
network: separate per-field input projections map each quantity's history channels
into the shared trunk, and separate per-field output heads read the fields back
out. Concretely, the $c$ channels of the observation in
Eq.~\eqref{eq:input_window} are partitioned into $F$ groups of $c_f$ channels
each, with $\sum_{f=1}^{F} c_f = c$, and group $f$ is uplifted by its own
projection over $d_{\text{in}}^{(f)} = T_{\text{in}}\,c_f + d + d_{\text{cond}}$
inputs before the groups meet in the shared trunk. The coordinate and
conditioning channels are replicated across groups, so every projection sees its
own field history in full geometric context; $F = 1$ recovers the single shared
uplift used in the main text. Airfoil forecasts $(u,v,p,\rho)$ over 15 steps; Cylinder forecasts
$(u,v,p)$ over 25 steps; and Supernova forecasts density, pressure, temperature,
and three velocity components. For Airfoil and Supernova,
$\mathcal{L}_{\rm data}$ is field-wise relative $L_2$ error aggregated over
fields and rollout steps. Cylinder uses
$\mathcal{L}_{\rm data}=L_v+1.25L_p$, where $L_v$ is the velocity-pair error
and $L_p$ is the pressure error. ShapeNet-Car applies the field-aware split on
the decoder only. Because it is a
boundary-value problem (geometry $\mapsto$ field) with no velocity or pressure
history to consume, its encoder is a single shared input projection over the seven
geometry channels (node position, signed distance, and outward normal), while its
decoder keeps separate heads for the three velocity components and the pressure.
Its data term $\mathcal{L}_{\rm data}$ is a pointwise mean-squared-error (MSE) loss
rather than a relative-$L_2$ term. Its reported errors
(Table~\ref{tab:large_geom_results}) nonetheless use the relative-$L_2$ metric, so the
training objective differs from the evaluation metric but the numbers stay
comparable with the baselines.

Aircraft closes that gap: its objective \emph{is} the evaluation metric. With
$y,\hat{y}\in\mathbb{R}^{N\times 6}$ the reference and predicted surface fields
over the $N$ mesh points, let
$r_c = \lVert\hat{y}_c-y_c\rVert_2 / \lVert y_c\rVert_2$ be the per-channel
relative $L_2$ error in physical units. We train on
\begin{equation}
\mathcal{L}_{\rm bal}
= \frac{1}{6}\!\!\sum_{c\in\{C_p,\rho,u,v,w,p\}}\!\!r_c
\label{eq:app_aircraft_bal}
\end{equation}
plus $0.1\,\mathcal{L}_{\rm TF}$, with pressure from an independent head so the
$C_p$ identity $p=\tfrac{1}{2}M^2C_p+\tfrac{5}{7}$ is not enforced. The
per-channel denominators are what make this work. Sharing a single denominator
across the five auxiliary channels instead lets the streamwise velocity dominate
in physical units, at roughly $87\%$ of the auxiliary energy, which leaves $v$
and $w$ under $1\%$ of the auxiliary gradient each. In our runs that variant
ended with per-channel errors of $0.222$ and $0.183$ on those two channels,
against $0.106$ and $0.117$ under Eq.~\eqref{eq:app_aircraft_bal}. Because
$\mathcal{L}_{\rm bal}$ is evaluated in physical space, the reported six-field
error is exactly the quantity optimized.

\subsubsection*{Conditioning Variables}
Conditioning enters through the channels $z_i$ of
Eq.~\eqref{eq:input_window} wherever a benchmark supplies it. Airfoil appends the
freestream Mach number and angle of attack ($d_{\text{cond}}=2$), both computed
from the inflow nodes of the initial state and broadcast unchanged to every
node; Cylinder appends a seven-way node-type one-hot together with a $z$-scored
Reynolds number ($d_{\text{cond}}=8$), the first per-node and the second global
to the case. At $T_{\text{in}}=15$ this puts the per-group encoder widths at
$34/19/19$ for Airfoil's $(u,v)$, $p$, and $\rho$ groups and $40/25$ for
Cylinder's $(u,v)$ and $p$ groups. Aircraft conditions on its flight state---Mach
number, angle of attack, and sideslip angle---supplied alongside its six
geometry channels. Darcy, Allen--Cahn, Navier--Stokes, and ShapeNet-Car use no
conditioning.

\subsection{Sliced-ELLPACK Sparse Backend}
\label{app:sell_backend}

The shared Chebyshev recurrence in Eq.~\eqref{eq:cheby_recurrence} applies the
sparse operator $M$ times per AWB in the forward pass and again in the backward
pass. With $L$ blocks and an $R$-step rollout, this gives $LMR$ sparse
matrix--dense matrix products (SpMMs) in each direction. We fold batch and
feature dimensions into a single dense width $K=Bd_h$, rather than issuing one
product per sample, so the training operands are wide ($K=768$ for Airfoil and
$K=3{,}840$ for the longest Navier--Stokes configuration). The sparse storage
format therefore affects wall-clock time even though it does not change the
operator or its asymptotic cost.

We store $\widetilde{\Delta}$ in Sliced-ELLPACK without row sorting
(SELL-C-1). Consecutive rows are partitioned into slices of height
$C\in\{16,32\}$, selected from the operand width, and each slice $s$ is padded
only to its local maximum row length $W_s$. If $o_s$ is the slice offset, slot
$q$ of row $r$ is stored at
\begin{equation}
o_s+qC+r,
\qquad 0\leq r<C,\quad 0\leq q<W_s,
\label{eq:app_sell_layout}
\end{equation}
so adjacent GPU lanes read adjacent values and 32-bit column indices. The
hypergraphs used here are well suited to this organization: the fine Airfoil
operator has $57.8\pm8.8$ nonzeros per row and a $1.10\times$ padding ratio,
while the structured-grid operators pad by only $0.5$--$2.2\%$. We deliberately
retain mesh order ($\sigma=1$): sorting the Cylinder rows reduces padding but
degrades the gathered feature locality, increasing kernel time from
$0.467$\,ms to $0.554$\,ms. Highly irregular, small operators can be less
favorable to SELL; they are handled by the width-aware CSR fallback.

The custom Triton kernel assigns one program to a slice and a block of dense
columns, accumulates each output tile in registers, and writes it once, avoiding
the atomics of COO and the variable-length per-row reduction of CSR. Conversion
is performed lazily and cached. Since $\widetilde{\Delta}$ is symmetric, its
backward product is the same SpMM and requires neither a transposed copy nor a
second sparse layout. Automatic dispatch selects SELL on CUDA for $K\geq192$
and CSR otherwise. Every training configuration here is at or above that
crossover except Aircraft, whose $K=Bd_h=128$ falls below it; its row in
Table~\ref{tab:app_sell_epochs} therefore requests the SELL backend explicitly,
and the two backends give identical training curves to the printed digits. PyTorch and cuSPARSE do not provide a SELL SpMM path for this
workload, which is why the layout is paired with a custom kernel.

On the real Airfoil operator ($N=3{,}982$,
$\mathrm{nnz}=230{,}300$, dense operand $3982\times768$), back-to-back A40
kernel times are $1.799$\,ms for COO, $0.344$\,ms for cuSPARSE CSR, and
$0.171$\,ms for SELL-C-1. The corresponding operator storage is $4.39$,
$2.67$, and $1.94$\,MiB: narrowing the surviving column-index stream to 32 bits
more than offsets the slice padding. Table~\ref{tab:app_sell_epochs} reports the
end-to-end effect under the best-model configurations.

\begin{table}[H]
\centering
{\small
\renewcommand{\arraystretch}{1.08}
\begin{tabular}{@{}lrrr@{}}
\toprule
\textbf{Dataset} & \textbf{COO (s)} & \textbf{SELL-C-1 (s)} & \textbf{Speedup} \\
\midrule
Allen--Cahn ($64^2$) & 15.11 & 14.36 & $1.05\times$ \\
Darcy ($85^2$) & 135.71 & 112.86 & $1.20\times$ \\
Navier--Stokes ($\nu=10^{-5}$, $64^2$) & 459.12 & 434.88 & $1.06\times$ \\
Airfoil & 444.78 & \textbf{243.08} & $\mathbf{1.83\times}$ \\
Cylinder & 485.55 & \textbf{247.05} & $\mathbf{1.97\times}$ \\
Aircraft & 222.60 & 192.13 & $1.16\times$ \\
ShapeNet-Car ($32$k nodes) & 308.50 & 253.46 & $1.22\times$ \\
\midrule
\textbf{Suite total} & \textbf{2{,}071.4} & \textbf{1{,}497.8} & $\mathbf{1.38\times}$ \\
\bottomrule
\end{tabular}
}
\caption{Mean epoch time for the coordinate and SELL-C-1 backends. Each entry
is a two-epoch A/B measurement on an NVIDIA A40 at the dataset's best-model
configuration, with identical seed, batch order, and learning-rate schedule;
the sparse format is the only change. The gain is largest on the denser
unstructured-mesh operators.}
\label{tab:app_sell_epochs}
\end{table}

This substitution is an execution optimization, not a model change. On a
converged Cylinder checkpoint, COO, CSR, and SELL give raw-state relative
$L_2$ errors of $0.0250747$, $0.0250765$, and $0.0250790$, respectively, and
the physical velocity and pressure errors agree to five significant figures.
Direct checks on the real Laplacian give maximum forward and backward absolute
differences of $3.3\times10^{-6}$ and $2.4\times10^{-6}$ from cuSPARSE. Thus,
SELL-C-1 removes $28\%$ of the summed one-epoch suite time while preserving the
reported numerical accuracy.

\subsection{Factored Hypergraph Laplacian}
\label{app:factored_laplacian}

The Laplacian of Eq.~\eqref{eq:hg_laplacian} is a triple product,
$\Delta = I - SHQH^\top S$, in which the only non-diagonal factor is the
incidence matrix $H$; the node scaling $S=D_v^{-1/2}$ and the hyperedge scaling
$Q=W_eD_e^{-1}$ are diagonal. Assembling $\Delta$ evaluates $HQH^\top$, whose
sparsity pattern couples every pair of nodes that share a hyperedge, so a
one-hop incidence structure is replaced by a two-hop one: a hyperedge with $m$
members contributes $m$ entries to $H$ but $m^2$ to $\Delta$.
Table~\ref{tab:app_factored_density} measures the resulting densification
across the suite, at between $2.4\times$ and $4.4\times$ the nonzeros of the
incidence matrix.

The shared Chebyshev recurrence of Eq.~\eqref{eq:cheby_recurrence} never reads
the entries of $\widetilde{\Delta}$; it needs only the products
$\widetilde{\Delta}\,\mathbf{T}_{m-1}$. The operator can therefore be applied
through its factors,
\begin{equation}
\Delta x = x - s\odot\bigl(H\,(q\odot(H^\top(s\odot x)))\bigr),
\label{eq:app_factored_apply}
\end{equation}
with $s=\operatorname{diag}(S)$ and $q=\operatorname{diag}(Q)$ precomputed: two
sparse products against $H$ and four elementwise scalings, with nothing
allocated beyond the incidence factors themselves. This is the same operator
evaluated in a different association order, so the two forms differ only in
floating-point summation order, and both feed the power iteration of
Eq.~\eqref{eq:app_power_iter} unchanged.

\begin{table}[htbp]
\centering
{\small
\setlength{\tabcolsep}{8.5pt}
\setlength{\aboverulesep}{0pt}
\setlength{\belowrulesep}{0pt}
\setlength{\extrarowheight}{0.65ex}
\renewcommand{\arraystretch}{1.05}
\begin{tabular}{@{}lrrrrrrrl@{}}
\toprule
\textbf{Benchmark} & \textbf{Ops} & $n$ & $\mathrm{nnz}(H)$ &
$\mathrm{nnz}(\Delta)$ & \textbf{Density} & \textbf{MACs} &
\textbf{Asm./run} & \textbf{Form} \\
\midrule
\multicolumn{9}{@{}l}{\textit{Shared topology: one operator serves the whole run}} \\
Allen--Cahn, NS$\,{\times}3$ & $1$ & $4{,}096$ & $36{,}864$ & $102{,}400$ & $2.78\times$ & $0.72\times$ & $1.20$\,MiB & assembled \\
Darcy & $1$ & $7{,}225$ & $180{,}625$ & $568{,}139$ & $3.15\times$ & $0.64\times$ & $6.56$\,MiB & assembled \\
Airfoil & $1$ & $3{,}982$ & $96{,}321$ & $230{,}300$ & $2.39\times$ & $0.84\times$ & $2.67$\,MiB & assembled \\
Supernova & $1$ & $32{,}768$ & $294{,}912$ & $1{,}200{,}286$ & \flagcell{$\mathbf{4.07\times}$} & $0.49\times$ & $13.99$\,MiB & assembled \\
\midrule
\multicolumn{9}{@{}l}{\textit{Per-sample topology: one operator per mesh, all cached}} \\
Cylinder & \flagcell{$\mathbf{1{,}140}$} & $1{,}876$ & $44{,}176$ & $103{,}666$ & $2.35\times$ & $0.85\times$ & $1.34$\,GiB & assembled \\
\rowcolor{TableTint}
\textbf{ShapeNet-Car} & $889$ & $32{,}186$ & $766{,}506$ & $3{,}394{,}454$ & $\mathbf{4.43\times}$ & $0.45\times$ & $\mathbf{33.94}$\,\textbf{GiB} & \textbf{factored} \\
\rowcolor{TableTint}
\textbf{Aircraft} & $30$ & $331{,}971$ & $2{,}323{,}785$ & $6{,}349{,}663$ & $2.73\times$ & $0.73\times$ & $\mathbf{2.20}$\,\textbf{GiB} & \textbf{factored} \\
\bottomrule
\end{tabular}
}
\caption{The factored-versus-assembled decision, measured on the operators each
benchmark actually builds. \textbf{Ops} is the number of distinct operators a run
holds resident; \textbf{Density} is $\mathrm{nnz}(\Delta)/\mathrm{nnz}(H)$, how
much fatter assembly makes the operator; \textbf{MACs} is
$2\,\mathrm{nnz}(H)/\mathrm{nnz}(\Delta)$, the multiply--accumulates of one
factored apply relative to one assembled apply, the factor of two being the
second sparse product the factored form pays. MACs counts arithmetic only, and so
excludes the intermediate the factored path writes and the second kernel launch.
Assembly densifies on every benchmark, so the factored form is arithmetically
cheaper on every benchmark, yet neither column tracks the form used. Green marks
the two benchmarks kept factored; gray marks the two cells a reader would expect
to force that form and which do not---Supernova's densification, the second
highest in the suite, and Cylinder's operator count, the largest. What tracks the
form is \textbf{Asm./run}, the operator count times the assembled size of one.
Cylinder's meshes vary in size; its row reports the representative operator and
scales it by the cached count. Sizes are binary.}
\label{tab:app_factored_density}
\end{table}

Read Table~\ref{tab:app_factored_density} from the density column first and the
form column last. Assembly densifies on \emph{every} benchmark, and because the
factored form issues two sparse products where the assembled form issues one, it
is arithmetically cheaper wherever the densification exceeds $2\times$, which is
everywhere in the suite and most strongly on ShapeNet-Car at $0.45\times$ the
multiply--accumulates. The two forms are also numerically interchangeable.
Neither fact selects the form. Supernova is the counter-example that settles it:
it carries the second-highest densification in the suite at $4.07\times$, it is
three-dimensional, and it is still assembled, because a run holds exactly one of
it. What tracks the form is the resident operator set, and that is a product of
two quantities which must both be large before the choice matters.

\paragraph{How many operators a run holds.} The suite splits three ways rather
than two. The structured grids and the airfoil share a single operator across
every sample: it is built once, occupies at most $14$\,MiB, and its build cost
amortizes to nothing over training, so its size is irrelevant. Supernova belongs
to this group---its $32^3$ grid is identical for all $592$ trajectories---which
is why dimensionality is the wrong reading of the criterion. The remaining three
benchmarks carry a per-sample mesh, and all three memoize it: each loader caches
an operator on first use, so after one epoch the entire set is resident and
scales with the dataset.

\paragraph{How large each operator is.} This is what separates Cylinder from the
two 3D geometry benchmarks. Cylinder's meshes hold on the order of $1{,}900$
nodes, giving $1.20$\,MiB per assembled operator, whereas ShapeNet-Car and
Aircraft carry the largest meshes in the suite at $32{,}186$ and $331{,}971$
nodes, giving $39.1$ and $75.2$\,MiB---a gap of $33$ to $63\times$ on a single
operator.

\begin{table}[htbp]
\centering
{\small
\setlength{\tabcolsep}{6pt}
\setlength{\aboverulesep}{0pt}
\setlength{\belowrulesep}{0pt}
\setlength{\extrarowheight}{0.65ex}
\renewcommand{\arraystretch}{1.05}
\begin{tabular}{@{}l r >{\columncolor{TableTint}[\tabcolsep][\tabcolsep]}r >{\columncolor{TableTint}[\tabcolsep][\tabcolsep]}r}
\toprule
& \textbf{Cylinder} & \textbf{ShapeNet-Car} & \textbf{Aircraft} \\
\midrule
Distinct operators & $1{,}140$ & $889$ & $30$ \\
Factored, per operator & $0.69$\,MiB & $11.94$\,MiB & $37.99$\,MiB \\
Assembled, per operator & $1.20$\,MiB & $39.09$\,MiB & $75.20$\,MiB \\
Factored, whole run & $0.77$\,GiB & $\mathbf{10.37}$\,\textbf{GiB} & $\mathbf{1.11}$\,\textbf{GiB} \\
Assembled, whole run & $\mathbf{1.34}$\,\textbf{GiB} & $\mathbf{33.94}$\,\textbf{GiB} & $\mathbf{2.20}$\,\textbf{GiB} \\
\midrule
\textbf{Form used} & assembled & \textbf{factored} & \textbf{factored} \\
\bottomrule
\end{tabular}
}
\caption{Why Cylinder assembles and the two geometry benchmarks do not, with the
factored benchmarks shaded. All three carry a per-sample mesh and memoize one
operator per sample, so all three hold a resident set that scales with the
dataset; they differ by up to $63\times$ in the size of a single operator, and it
is the product that decides. Factored size is the cached blob---integer incidence
pairs and the two scaling vectors; assembled size is the corresponding CSR
operator.}
\label{tab:app_factored_memory}
\end{table}

Table~\ref{tab:app_factored_memory} gives that product. Cylinder has the
per-sample topology but not the size: its $1{,}140$ cached operators total
$1.34$\,GiB, which fits alongside the run, so it is assembled and keeps the
fused SELL path of Appendix~\ref{app:sell_backend} on one operator per apply.
ShapeNet-Car combines per-sample topology with large meshes, and assembling its
$889$ operators would raise the resident set from $10.4$ to $33.9$\,GiB,
alongside the raw geometry and field arrays the loader already holds, which is
the difference between a run that fits on the training machine and one that does
not. Aircraft's ratio is smaller at $1.98\times$, but its individual operators
are the largest in the suite, and assembling them means $30$ sparse triple
products over meshes of $331{,}971$ nodes.

The boundary between Cylinder at $1.34$\,GiB and Aircraft at $2.20$\,GiB is
practical rather than principled. Stated as a rule: factor where the resident
operator set would otherwise threaten the memory budget, and assemble elsewhere,
because assembling is the simpler path and the arithmetic margin alone does not
pay for the extra intermediate. Only the two 3D geometry benchmarks fall on the
first side.

Two secondary effects reinforce the same split. Assembly is a sparse triple
product per sample, so on ShapeNet-Car it is $889$ of them, each producing
$3.4$M nonzeros, whereas the factored build is a scatter-add over incidence
pairs that skips the product entirely; with one shared operator this cost is
paid once and is invisible, and the same ratios apply to the on-disk operator
cache both loaders write between runs. ShapeNet-Car and Aircraft are also the
only two benchmarks trained at batch size $1$, so their per-step footprint is
already set by a single sample's mesh. Gradient checkpointing does not separate
them, being enabled on eight of the ten configurations.

The factored form is not free. Each apply writes and re-reads an
$|\mathcal{E}|\times K$ intermediate between the two products, issues two kernel
launches instead of one, and keeps both $H$ and $H^\top$ in CSR---registering
the transposed factor for the backward pass is what holds this at two copies
rather than four---so on device it saves only $1.3\times$ against the assembled
operator on a car mesh, against $3.3\times$ in host memory and on disk. The
saving is in how many operators are held, not in the size of any one of them on
the GPU.

The two choices compose with the sparse layout rather than competing with it.
The factored apply dispatches each incidence product through the same backend
selection an assembled operator receives, so the SELL kernel is applied to each
factor in turn; this is what the ShapeNet-Car and Aircraft rows of
Table~\ref{tab:app_sell_epochs} measure, at $1.22\times$ and $1.16\times$ with
the operator never assembled. The two benchmarks fall on opposite sides of the
$K\!\geq\!192$ crossover of Appendix~\ref{app:sell_backend}: ShapeNet-Car trains
at $K=Bd_h=192$ and reaches SELL under automatic dispatch, whereas Aircraft
trains at $K=128$ and requires the backend to be requested explicitly. The two
backends give identical Aircraft training curves to the printed digits.

Finally, the two forms are the same operator. The factored build applies the
Gaussian hyperedge weighting of Algorithm~\ref{alg:laplacian} unchanged---per
hyperedge, $\sigma_e$ is the mean member distance to the center and $w_e$ the
mean incidence weight---so the only numerical difference is fp32 summation order
across two products rather than one, and both forms feed the same power
iteration and satisfy $\lambda_{\max}\leq 1$. Routing the factored apply through
the shared sparse backend leaves it bit-identical on the CSR path, agreeing with
the direct formulation to $0$ in both forward and backward on a $500$-node
operator; against the SELL kernel the agreement is fp32 roundoff, at relative
$1.6\times10^{-8}$ forward and $2.3\times10^{-8}$ backward on a $32{,}000$-node
operator at $K=192$.
\subsection{Learned Wavelet Scales}

With the number of scales $J$ fixed, we now look at where training moves them and
how to read the filter bank they form (Figure~\ref{fig:scale_bank}). Every colored
curve in that figure is a single scale acting as a band-pass filter,
$g(s_j\lambda)^2$, that responds only to a limited range of graph frequencies
$\lambda$. The dyadic initialization $s_j^{\mathrm{init}}=x^{*}2^{j}/\lambda_{\max}$
spaces these bands one octave apart, placing the peak of band $j$ at
$\lambda_j^{*}=x^{*}/s_j=\lambda_{\max}/2^{j}$, where
$x^{*}=2-1/\sqrt{3}\approx1.423$ is the frequency at which the Meyer-like kernel
$g$ is largest (Eq.~\eqref{eq:meyer_kernel}). Scales are indexed
$j=0,\dots,J{-}1$ throughout, as in Definition~\ref{def:frame} and
Table~\ref{tab:learned_scales}. The finest scale $s_0$ therefore sits at the top
of the spectrum and picks up the sharpest detail, while the coarsest scale
$s_{J-1}$ sits near the bottom and picks up the smooth, slowly varying part of
the field.

The vertical axis deserves a word, since it is easy to misread. A scale enters the
kernel only through the product $s_j\lambda$, so retuning $s_j$ slides a band left
or right along the spectrum but never changes how tall it is: whatever the scale,
each band rises to the same peak value $g(x^{*})^2\approx1.918$. So that the
figure speaks to \emph{where} the bands sit rather than how tall they are, we
divide every band by this common peak, and each one then tops out at $1$. The
heights in Figure~\ref{fig:scale_bank} are thus normalized, and the comparison
across panels is one of position, not amplitude.

The bold black curve is what these bands add up to: the total spectral coverage
$G_{\mathbf{s}}(\lambda)=\sum_j g(s_j\lambda)^2$ (Definition~\ref{def:frame}), which
records how much filtering weight the bank places at each frequency. Two features
of it carry the message. First, it runs \emph{above} the individual bands: the
Meyer-like kernel never quite reaches zero (in the argument $\xi=s_j\lambda$ it
tapers as $\xi^2$ near the origin and as $4/\xi^2$ for large $\xi$;
Eq.~\eqref{eq:meyer_kernel}), so neighboring bands always
overlap, and at any peak the coverage adds the on-band response to the tails
leaking in from either side. Second, it is nearly, but not exactly, flat: a finite
ladder of equally spaced bands tiles the spectrum only approximately and leaves a
gentle ripple of about one hump per octave. That ripple is precisely the frame
defect $\mathcal{D}_{\mathrm{TF}}=\mathrm{Var}_{\lambda}[G_{\mathbf{s}}]$
(Eq.~\eqref{eq:tf_defect}), the quantity the tight-frame regularizer
$\mathcal{L}_{\mathrm{TF}}$ (Eq.~\eqref{eq:tight_frame_loss}) drives toward zero so
that the bank passes every frequency with almost equal gain and conserves signal
energy (Proposition~\ref{prop:frame}). Panel~(a) of Figure~\ref{fig:scale_bank}
shows this near-flat coverage at initialization.

\begin{figure}[htbp]
\centering
\begin{figpanel}
\centering
\begin{minipage}[t]{0.39\textwidth}
\centering
\includegraphics[width=\linewidth]{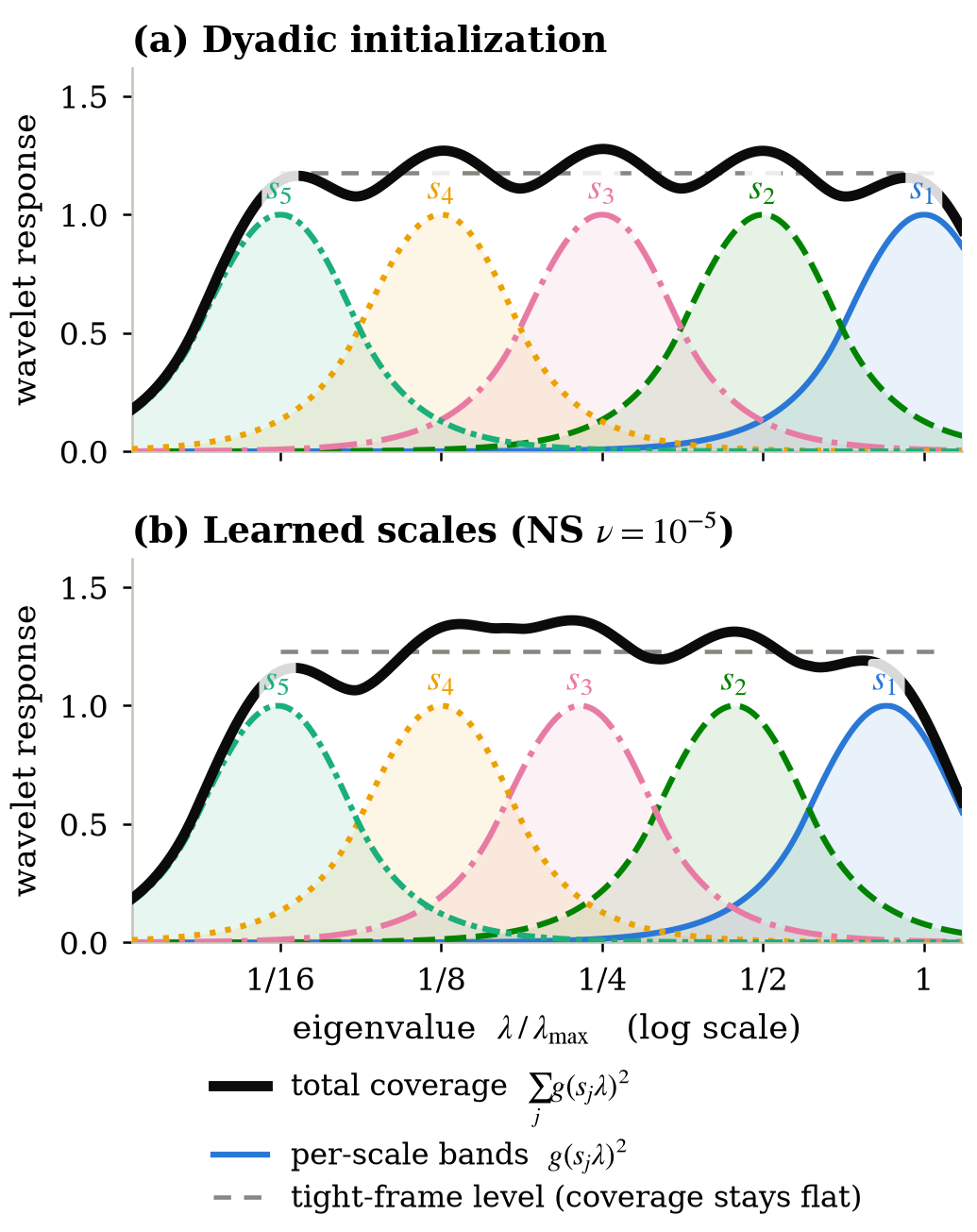}
\caption{Wavelet filter bank on the normalized spectrum (log axis; bands
normalized to unit peak): per-scale bands $g(s_j\lambda)^2$ (colored), their total
coverage $G_{\mathbf{s}}=\sum_j g(s_j\lambda)^2$ (black), and the tight-frame
level (dashed). (a) Dyadic initialization; (b) learned scales on Navier--Stokes
$\nu{=}10^{-5}$.}
\label{fig:scale_bank}
\end{minipage}\hfill
\begin{minipage}[t]{0.59\textwidth}
\centering
\includegraphics[width=\linewidth]{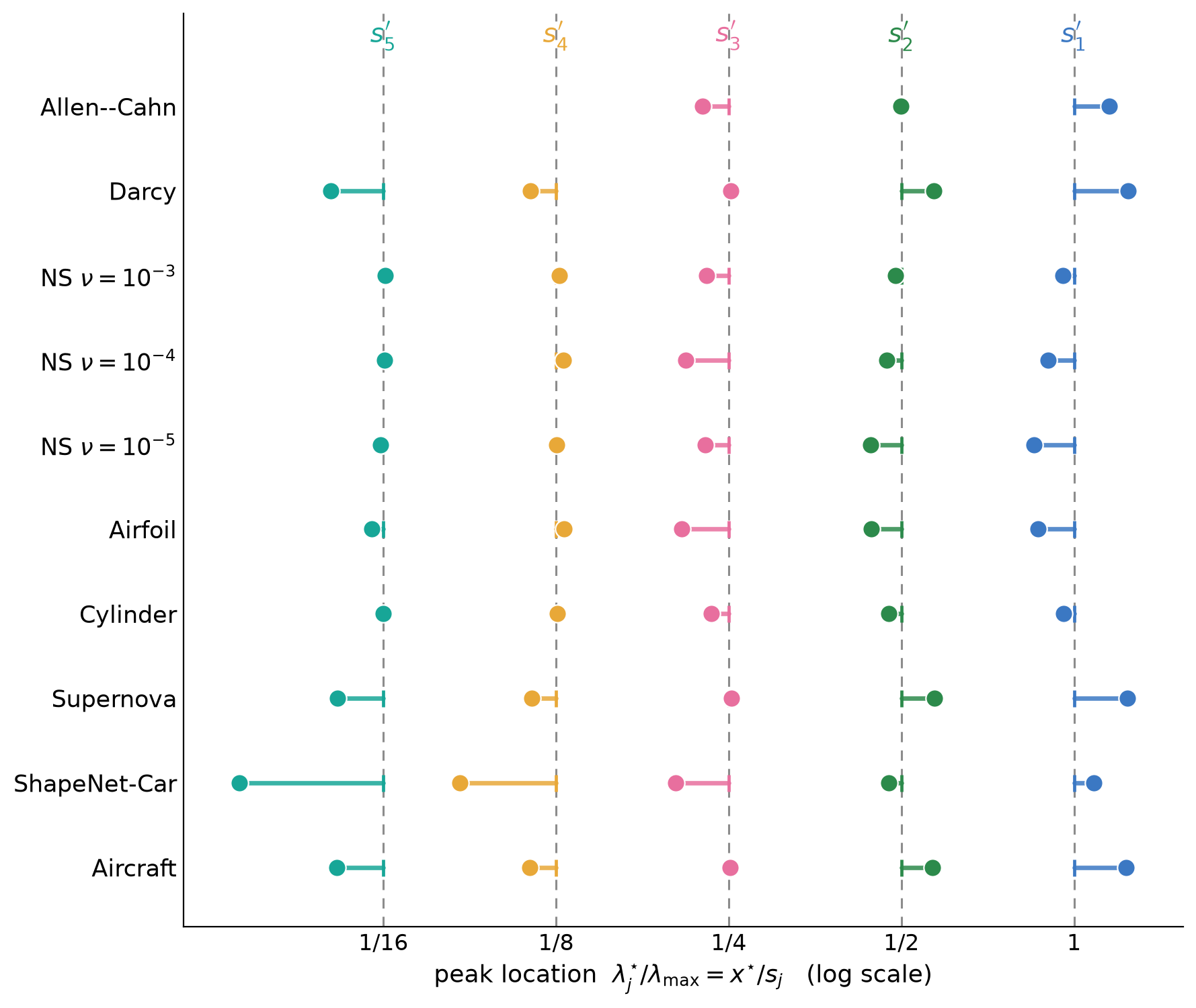}
\caption{Peak location $\lambda_j^{*}/\lambda_{\max}=x^{*}/s_j$ of each
learned scale per benchmark (log axis; colors index $s_0$--$s_4$): each dot is a
learned peak, joined by a segment to its dyadic initialization
$\lambda_{\max}/2^{j}$ (dashed vertical lines). Dots right of the dashed line mark
sharpened (higher-frequency) scales and dots left of it stretched
(lower-frequency) ones. Allen--Cahn uses only three scales.}
\label{fig:scale_shift}
\end{minipage}
\end{figpanel}
\end{figure}

\subsubsection*{Why This Many Scales}

We use $J{=}5$ for every benchmark except Allen--Cahn, which uses $J{=}3$. The
count fixes how many octaves the ladder tiles: $J{=}5$ spans four octaves, from
$\lambda_{\max}$ down to $\approx\lambda_{\max}/16$, enough to separate the sharp,
high-$\lambda$ structures (shocks, wakes, interfaces, and multi-field couplings)
from the smooth low-$\lambda$ bulk in the turbulent and multi-physics problems.
Allen--Cahn's phase field is governed by a single diffusion constant and a few
interface scales, so three bands already cover its informative spectrum; adding
lower-frequency bands only increased the parameter count without improving
accuracy. Since the shared Chebyshev basis makes the per-layer cost independent
of $J$ (Remark~\ref{rem:cost}), $J$ trades spectral resolution against
parameters, not against runtime.

\begin{table}[htbp]
\centering
{\small
\begin{tabular}{@{}lcccccc@{}}
\toprule
\textbf{Dataset} & $\lambda_{\max}$ & $s_0$ & $s_1$ & $s_2$ & $s_3$ & $s_4$ \\
\midrule
Dyadic initialization ($\lambda_{\max}{=}1$) & -- & 1.423 & 2.845 & 5.691 & 11.381 & 22.762 \\
\midrule
Allen--Cahn ($J{=}3$) & 0.997 & 1.240 & 2.858 & 6.337 & -- & -- \\
Darcy & 0.998 & 1.149 & 2.504 & 5.661 & 12.645 & 28.198 \\
Navier--Stokes $\nu{=}10^{-3}$ & 0.997 & 1.492 & 2.926 & 6.241 & 11.257 & 22.636 \\
Navier--Stokes $\nu{=}10^{-4}$ & 0.997 & 1.584 & 3.023 & 6.788 & 11.092 & 22.713 \\
Navier--Stokes $\nu{=}10^{-5}$ & 0.997 & 1.677 & 3.225 & 6.268 & 11.397 & 23.090 \\
Airfoil & 0.987 & 1.664 & 3.251 & 6.972 & 11.160 & 24.139 \\
Cylinder & 0.989 & 1.500 & 3.029 & 6.175 & 11.452 & 23.029 \\
Supernova & 0.996 & 1.154 & 2.504 & 5.645 & 12.600 & 27.475 \\
ShapeNet-Car & 0.998 & 1.318 & 3.001 & 7.062 & 16.761 & 40.677 \\
Aircraft & 0.998 & 1.156 & 2.515 & 5.673 & 12.675 & 27.463 \\
\bottomrule
\end{tabular}
}
\caption{Final learned wavelet scales of the best HALO model on each benchmark,
averaged over the $L$ Adaptive Wavelet Blocks (per-block values vary by at most a
few percent).}
\label{tab:learned_scales}
\end{table}

Table~\ref{tab:learned_scales} lists the learned scales and
Figure~\ref{fig:scale_shift} the peak location of each band relative to its dyadic
start. Although initialized identically across layers, the scales leave the grid
in a dataset-dependent way: the Navier--Stokes and Airfoil models enlarge the
fine and middle scales, pulling their peaks toward the mid-spectrum
(Figures~\ref{fig:scale_shift} and~\ref{fig:scale_bank}b), and the shift grows as the flow becomes more
chaotic (mean $s_0$ rises from $1.49$ at $\nu{=}10^{-3}$ to $1.68$ at
$\nu{=}10^{-5}$). Darcy, Supernova, ShapeNet-Car, and Aircraft move the opposite way
(Figure~\ref{fig:scale_shift}), shrinking $s_0$ to keep more high-$\lambda$ response
while stretching the coarsest scale toward lower frequencies (ShapeNet-Car's $s_4$
by $78\%$ and Aircraft's by $21\%$). On the large $\sim$330k-point Aircraft mesh this outward spread is the most
symmetric: the two finest scales sharpen while the two coarsest stretch,
widening coverage toward both the fine surface features and the smooth far-field. Across all of
them the tight-frame penalty holds the coverage flat despite the drift
(Figure~\ref{fig:scale_bank}b), so the scales specialize to each operator without
giving up the frame property.

\subsection{Zero-Shot Evaluation Details}

Table~\ref{tab:zeroshot_structured_full} gives the per-baseline comparison
behind the compact transfer ledger in Figure~\ref{fig:superresolution_collage}.
All models are trained
on the listed base grid and evaluated on the target grid without fine-tuning.

\begin{table}[htbp]
\centering
{\small
\setlength{\tabcolsep}{1mm}
\begin{tabular}{@{}llcccc@{}}
\toprule
\multicolumn{6}{c}{\textsc{Structured-Grid Zero-Shot Super-Resolution}} \\
\midrule
\textbf{Dataset} & \textbf{Grid} & \textbf{FNO} & \textbf{WNO} & \textbf{GNO} & \textbf{HALO} \\
\midrule
\multirow{2}{*}{Allen--Cahn}
 & Base\, $64^2$  & 0.0239 & 0.0643 & 0.0215 & \textbf{0.0046} \\
 & ZS\, $128^2$   & 0.0244 & 0.2801 & 0.1870 & \textbf{0.0115} \\
\midrule
\multirow{2}{*}{Darcy}
 & Base\, $85^2$  & 0.0108 & 0.0084 & 0.0346 & \textbf{0.0041} \\
 & ZS\, $421^2$   & 0.0121 & 0.0109 & 0.0469 & \textbf{0.0082} \\
\midrule
\multirow{2}{*}{NS $\nu{=}10^{-4}$}
 & Base\, $64^2$  & \textbf{0.0408} & 0.1275 & 0.4046 & 0.0442 \\
 & ZS\, $256^2$   & \textbf{0.0481} & 0.4799 & 0.5590 & 0.0556 \\
\midrule
\multirow{2}{*}{Supernova}
 & Base\, $32^3$  & 0.5576 & 0.5531 & 0.6524 & \textbf{0.4004} \\
 & ZS\, $64^3$    & 0.6746 & 0.7292 & 0.8297 & \textbf{0.5849} \\
\bottomrule
\end{tabular}
}
\caption{Mean relative $L_2$ error; lower is better. Bold marks the best result
in each row; ZS denotes evaluation on the zero-shot target grid.}
\label{tab:zeroshot_structured_full}
\end{table}

HALO has the lowest target-grid error on three of the four structured
benchmarks; FNO remains ahead on Navier--Stokes. Its advantage is primarily
absolute accuracy rather than uniformly graceful degradation: HALO transfers
from a substantially lower base error on Allen--Cahn, Darcy, and Supernova. On
the unstructured domains, training uses $25\%$ of the cells and evaluation uses
the complete mesh ($4.0\times$ as many cells), with error increasing from
$0.0113$ to $0.0332$ on Airfoil and from $0.0664$ to $0.0902$ on Cylinder.


\section{Held-Out Qualitative Predictions}
\label{sec:qualitative_predictions}

Figures~\ref{fig:app_structured_pair}--\ref{fig:app_aircraft_pred} are held-out
diagnostics, not aggregate results; the test-set means remain those in
Table~\ref{tab:benchmark} and Table~\ref{tab:large_geom_results}. Throughout,
reference and prediction share a color scale and the remaining panel of each
group is the pointwise absolute error, so no per-figure caption repeats the
convention.

\subsection{Structured-Grid Benchmarks}
Figure~\ref{fig:app_structured_pair} places the Allen--Cahn and Darcy examples
side by side. Within each dataset panel, the two columns are separate held-out
inputs and the rows are input field, reference, prediction, and absolute error.
The examples expose both interface preservation in Allen--Cahn and the localized
Darcy residuals that track the permeability geometry. Figure~\ref{fig:app_ns_all}
shows $\nu=10^{-3}$, $10^{-4}$, and $10^{-5}$ from left to right. Within each
viscosity group the columns are forecasts at $t=10$ and $13\,\mathrm{s}$, with
row labels shown only in the leftmost group. The lower-viscosity predictions
make the late-rollout difficulty visible as the dynamics grow more chaotic.

\begin{figure}[htbp]
\centering
\begin{minipage}[t]{0.485\linewidth}
\centering
\colorbox{BurntOrange}{\parbox{\dimexpr\linewidth-2\fboxsep\relax}{\centering\color{white}\bfseries Allen--Cahn}}
\vspace{2pt}
\includegraphics[width=\linewidth,height=0.66\textheight,keepaspectratio]{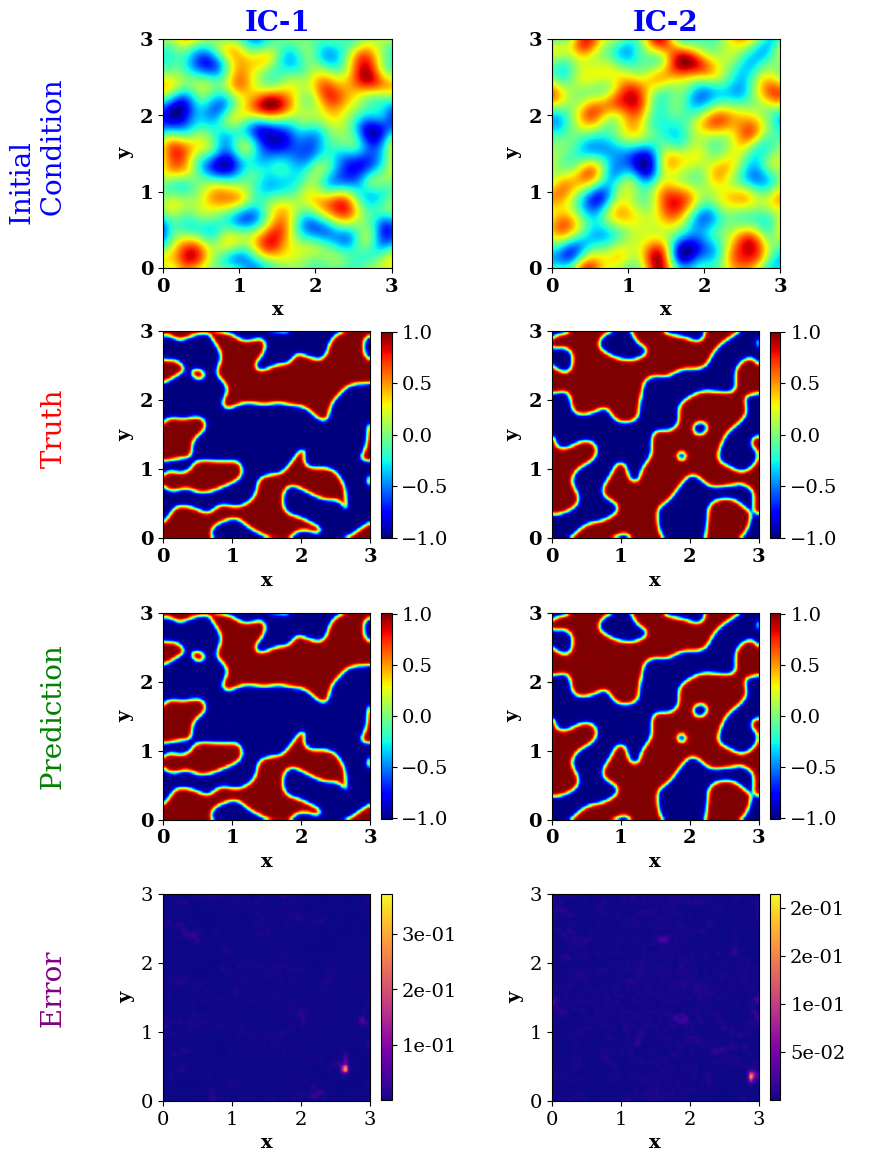}
\end{minipage}
\hfill
\begin{minipage}[t]{0.485\linewidth}
\centering
\colorbox{RoyalPurple}{\parbox{\dimexpr\linewidth-2\fboxsep\relax}{\centering\color{white}\bfseries Darcy}}
\vspace{2pt}
\includegraphics[width=\linewidth,height=0.66\textheight,keepaspectratio]{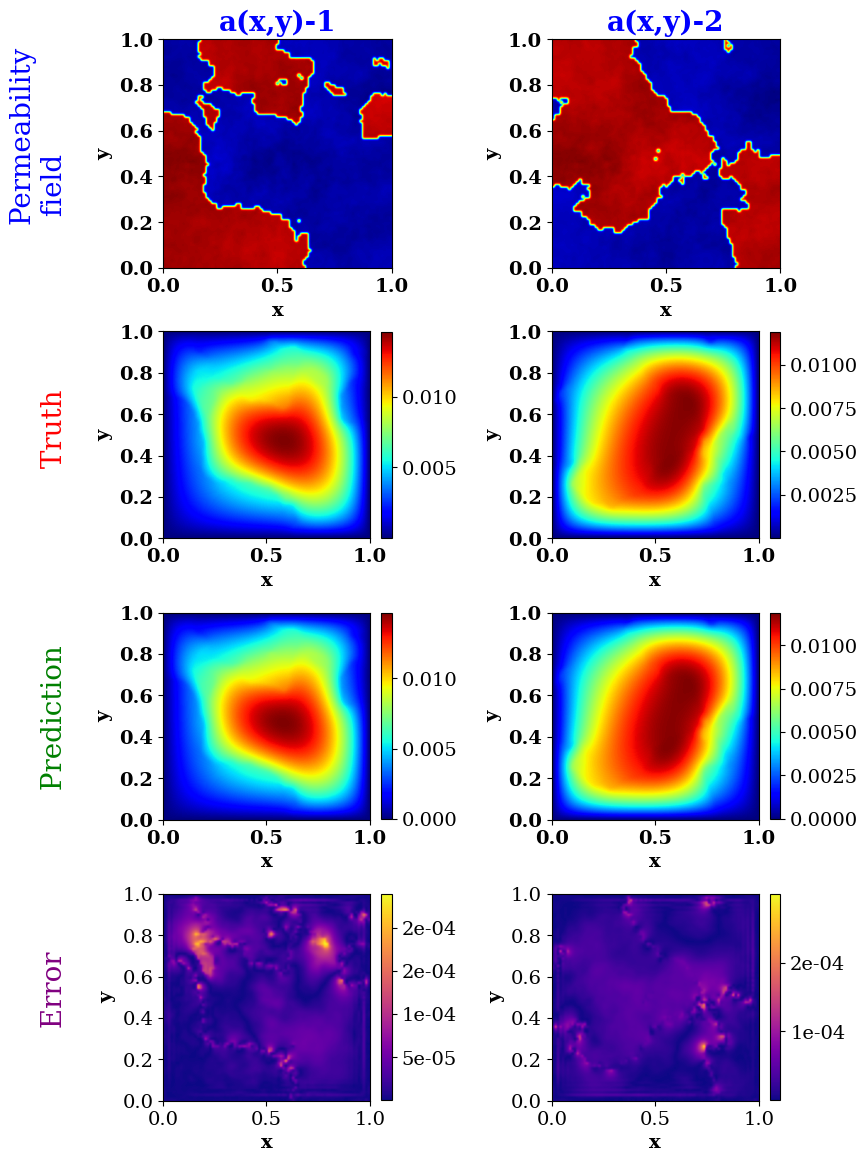}
\end{minipage}
\caption{Representative held-out Allen--Cahn and Darcy predictions. Each
dataset contains two unseen inputs; rows show the input, ground truth, HALO
prediction, and pointwise absolute error.}
\label{fig:app_structured_pair}
\end{figure}
\newpage

\begin{figure}[htbp]
\centering
\begin{minipage}[t]{0.352\linewidth}
\vspace{0pt}\centering
\colorbox{Cerulean}{\parbox{\dimexpr\linewidth-2\fboxsep\relax}{\centering\color{white}\bfseries $\nu=10^{-3}$}}
\vspace{2pt}
\includegraphics[width=\linewidth]{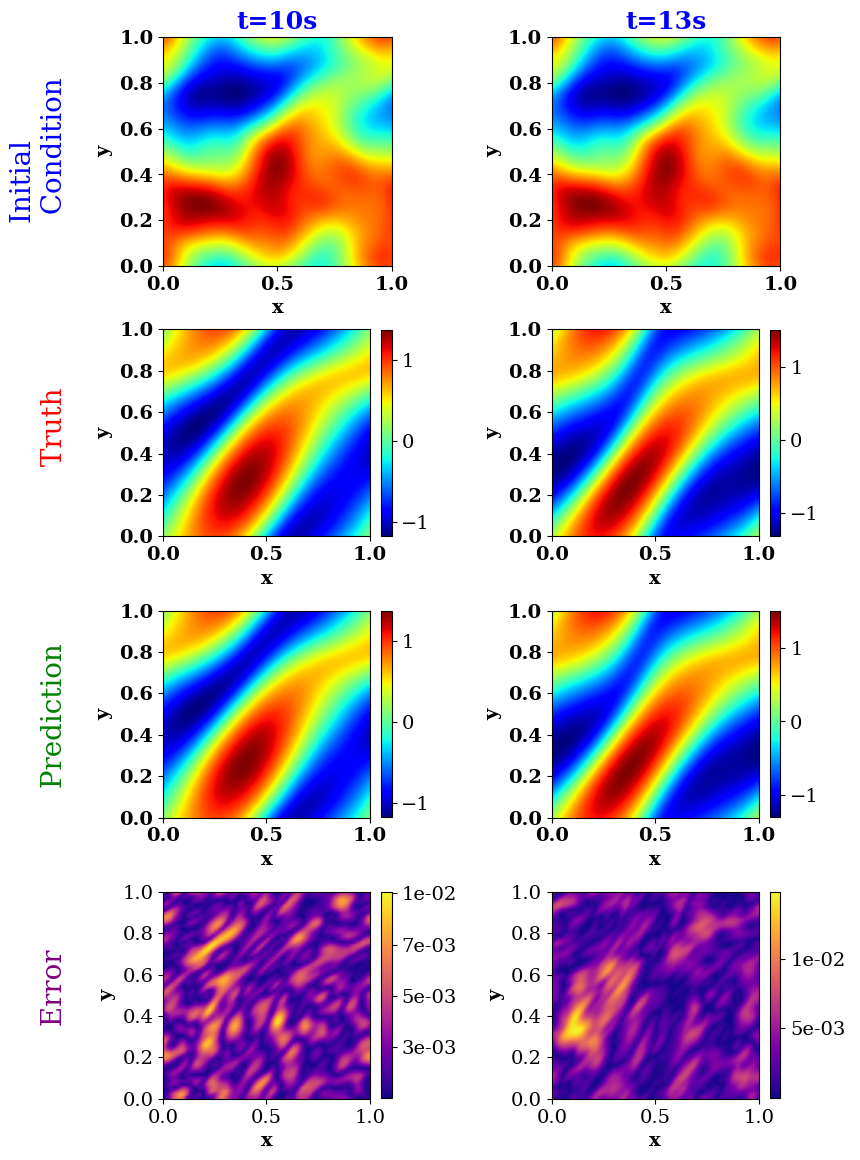}
\end{minipage}
\hfill
\begin{minipage}[t]{0.310\linewidth}
\vspace{0pt}\centering
\colorbox{Cerulean}{\parbox{\dimexpr\linewidth-2\fboxsep\relax}{\centering\color{white}\bfseries $\nu=10^{-4}$}}
\vspace{2pt}
\includegraphics[width=\linewidth]{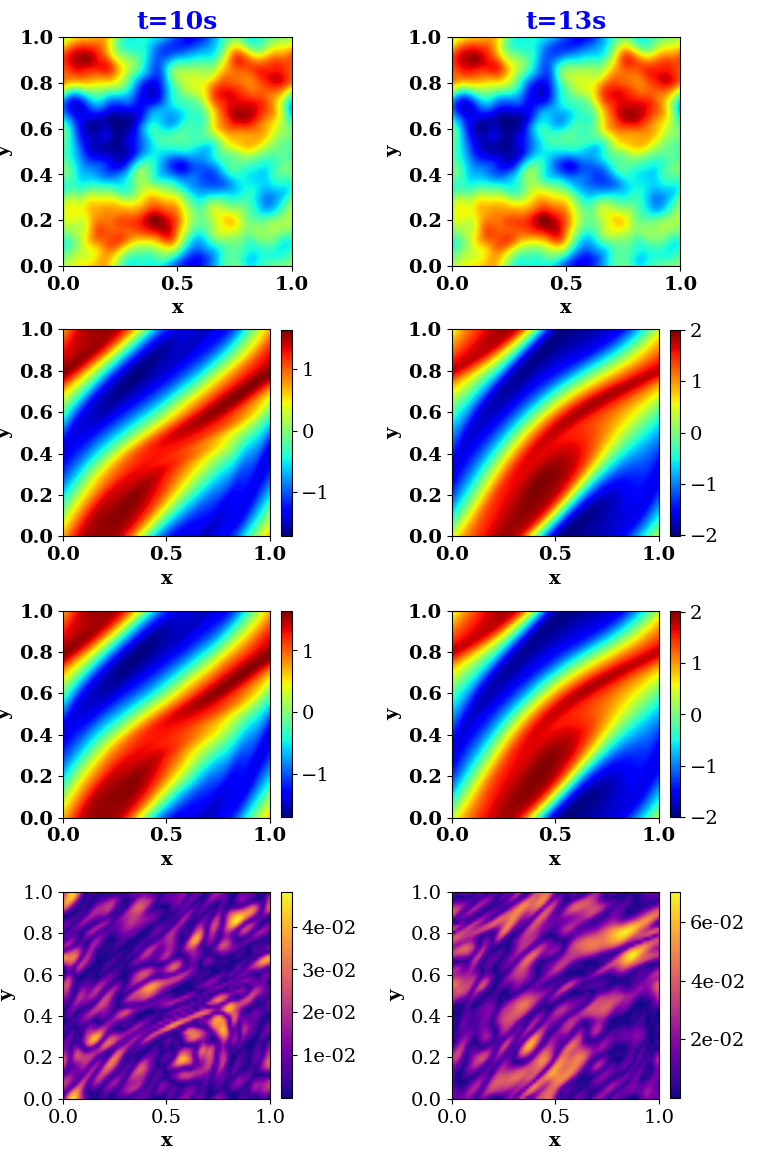}
\end{minipage}
\hfill
\begin{minipage}[t]{0.310\linewidth}
\vspace{0pt}\centering
\colorbox{Cerulean}{\parbox{\dimexpr\linewidth-2\fboxsep\relax}{\centering\color{white}\bfseries $\nu=10^{-5}$}}
\vspace{2pt}
\includegraphics[width=\linewidth]{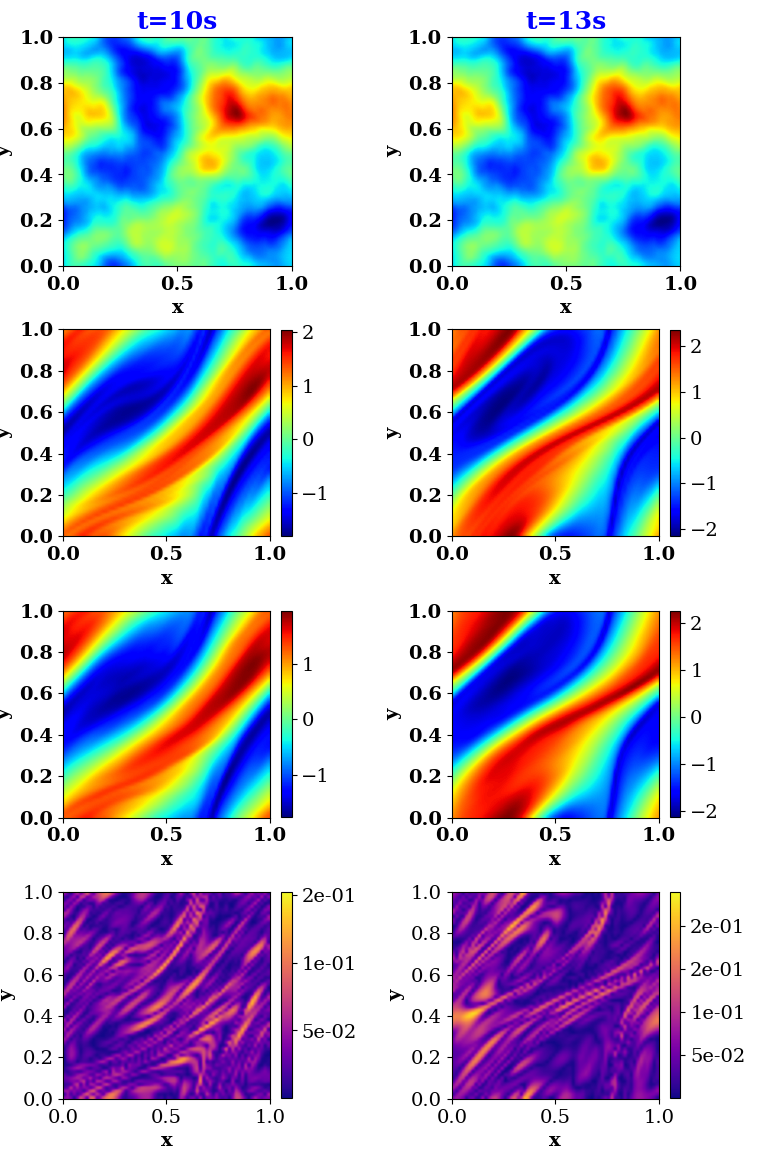}
\end{minipage}
\caption{Held-out Navier--Stokes autoregressive rollouts at three viscosities.
For each viscosity, columns show the $t=10$ and $13\,\mathrm{s}$ forecasts and
rows show the input, ground truth, HALO prediction, and absolute error.}
\label{fig:app_ns_all}
\end{figure}

\subsection{Unstructured-Mesh Flow Benchmarks}
\begin{figure}[H]
\centering
\includegraphics[width=\textwidth,height=0.33\textheight,keepaspectratio]{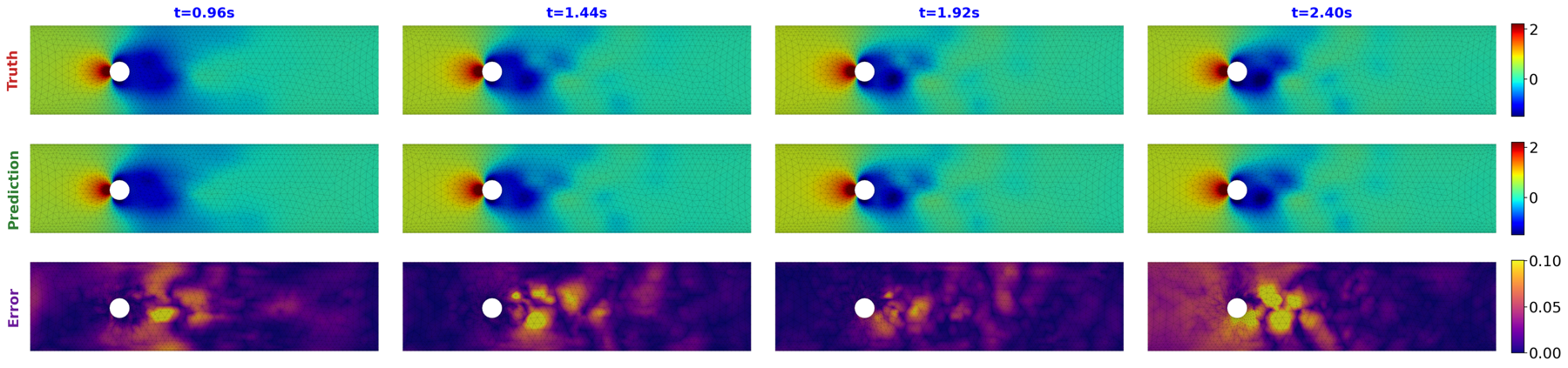}
\par\textbf{(a)} Pressure\par
\includegraphics[width=\textwidth,height=0.33\textheight,keepaspectratio]{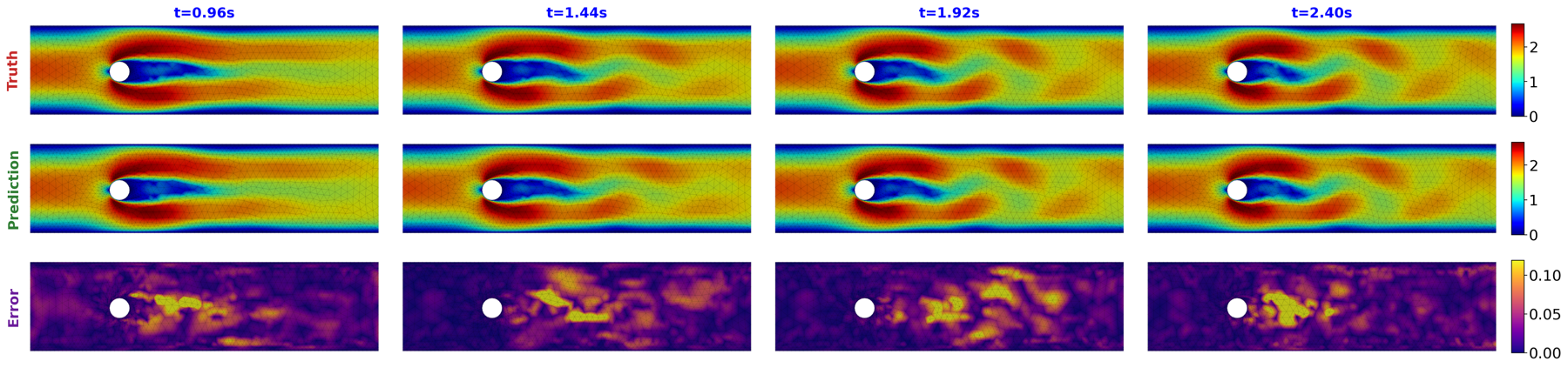}
\par\textbf{(b)} Velocity magnitude
\caption{Held-out Cylinder rollout on the variable native mesh.}
\label{fig:app_cylinder_pressure_speed}
\end{figure}

Figures~\ref{fig:app_cylinder_pressure_speed} and \ref{fig:app_airfoil} show
rollouts directly on the native unstructured discretizations, with the mesh
overlaid to make the spatial support of the forecast explicit. The
reference/prediction pairs track the body-adjacent and downstream flow, and the
error maps locate the harder wake structures behind the airfoil and cylinder.
Cylinder is the variable-mesh case, so its panels also show that the forecast is
produced on that trajectory's own node set rather than a shared one.

\begin{figure}[H]
\centering
\includegraphics[width=\textwidth,height=0.33\textheight,keepaspectratio]{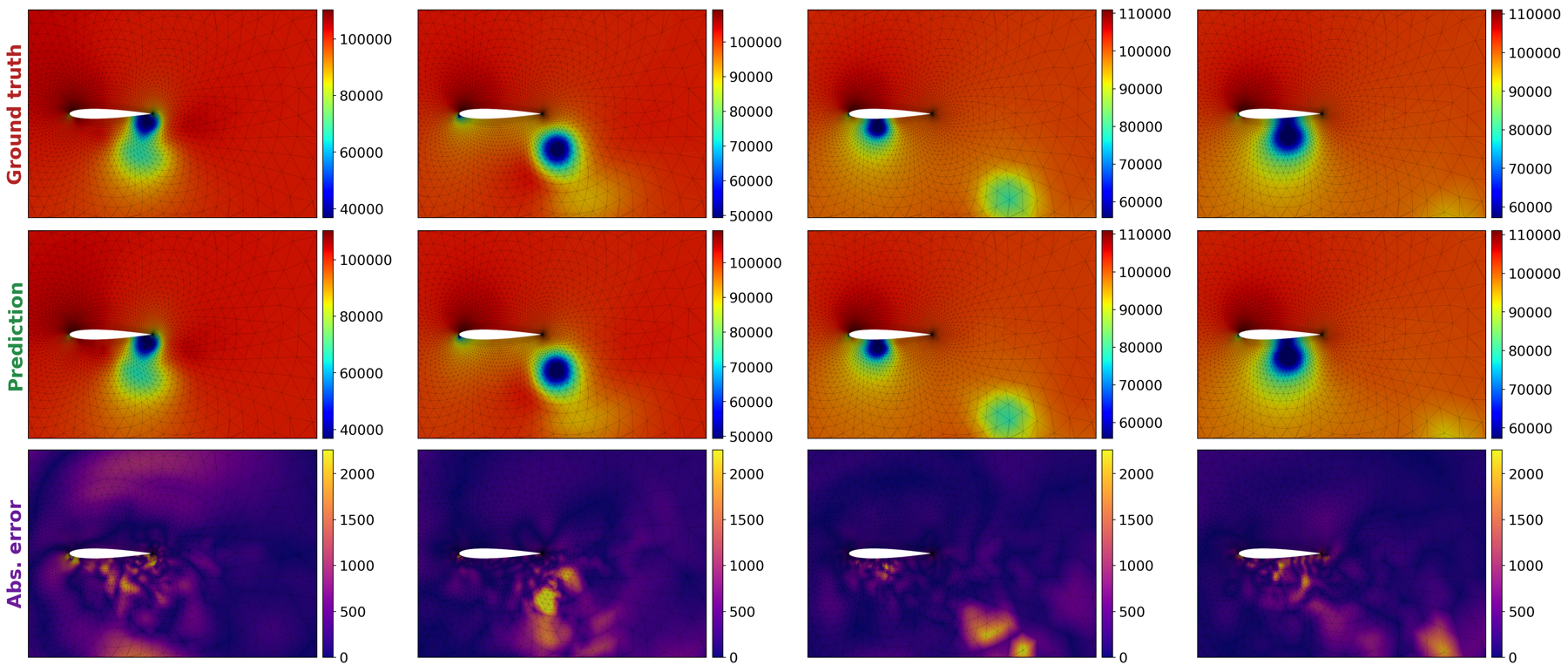}
\par\textbf{(a)} Pressure\par
\includegraphics[width=\textwidth,height=0.33\textheight,keepaspectratio]{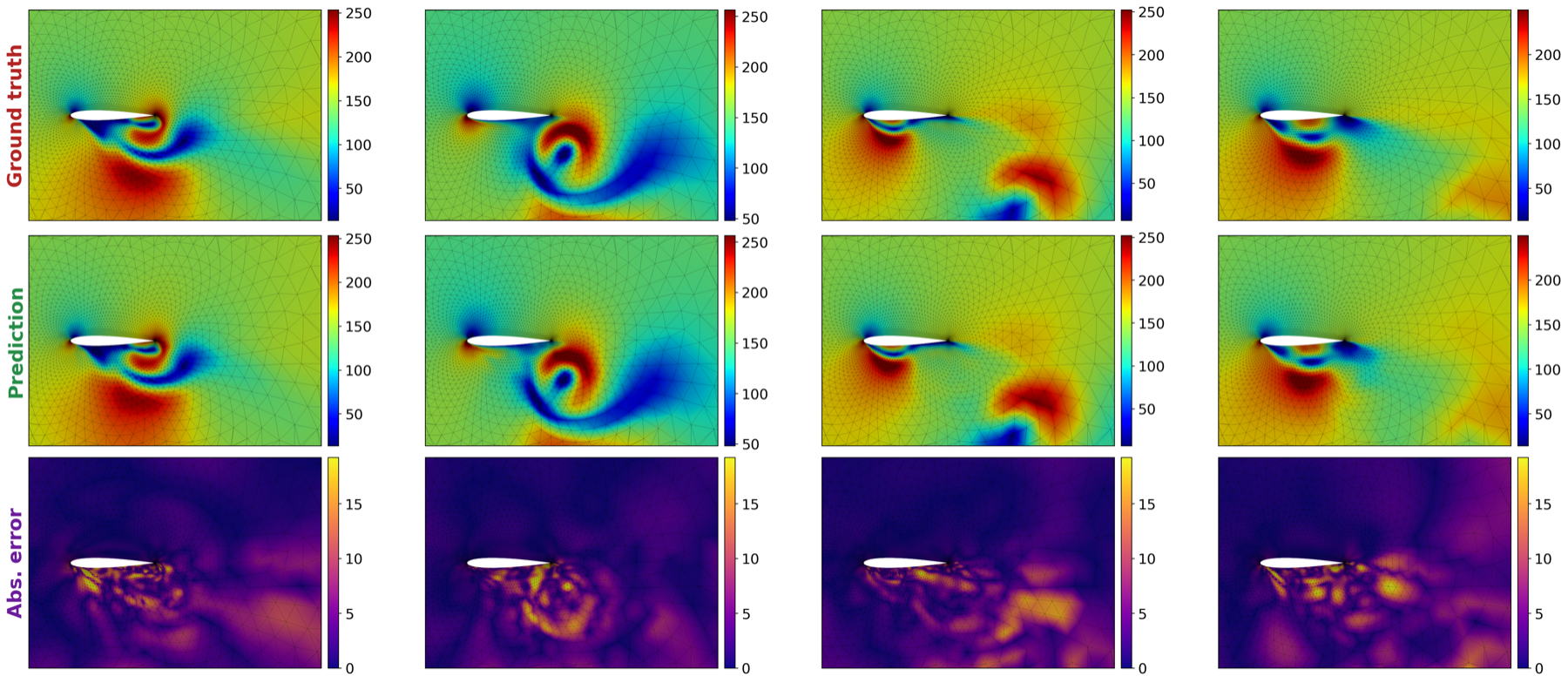}
\par\textbf{(b)} Velocity magnitude
\caption{Held-out Airfoil rollout for the same test trajectory.}
\label{fig:app_airfoil}
\end{figure}

\vspace{-0.5cm}
\subsection{3D Unstructured-Mesh Benchmarks}
Figure~\ref{fig:app_shapenet_pred} shows held-out ShapeNet-Car predictions on the
native 3D unstructured mesh; being steady-state, each panel is a single car
rather than a rollout. The surface-pressure panel (a) resolves the high-pressure
stagnation region at the front of the vehicle and the low-pressure zones over the
hood and cabin, while the volume-velocity panel (b) recovers the surrounding
streamlines and the momentum deficit in the wake. In both, the error maps stay
small and largely structureless, which is what one expects if the input-specific
hypergraph transfers cleanly to an unseen car mesh.

Figure~\ref{fig:app_aircraft_pred} gives the corresponding surface-pressure
comparisons on two held-out Aircraft meshes, each roughly ten times larger than a
ShapeNet-Car mesh. For both airframes, HALO recovers the smooth pressure
distribution over the full surface, while the absolute-error maps localize the
remaining discrepancies along sharp geometric features and surface boundaries,
the same error structure seen in Figure~\ref{fig:large_geom}.

The complete quantitative comparison and its metric-by-metric discussion are
reported with Table~\ref{tab:large_geom_results} in the main Experiments section;
metric definitions and dataset construction remain in Section~\ref{sec:data}.

\begin{figure}[H]
\centering
\includegraphics[width=\textwidth,height=0.30\textheight,keepaspectratio]{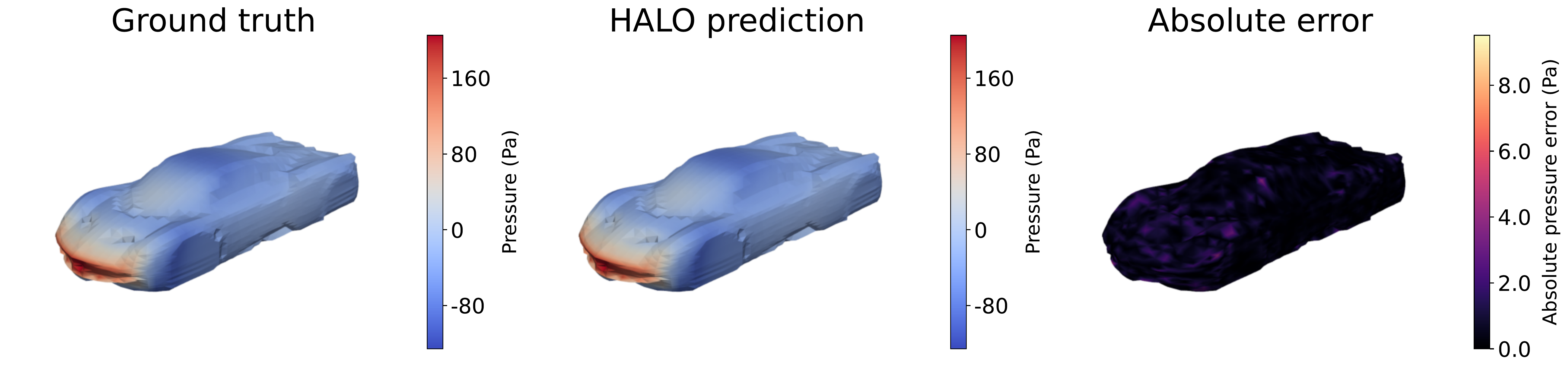}\par
\textbf{(a)} Surface pressure\par
\par
\includegraphics[width=\textwidth,height=0.30\textheight,keepaspectratio]{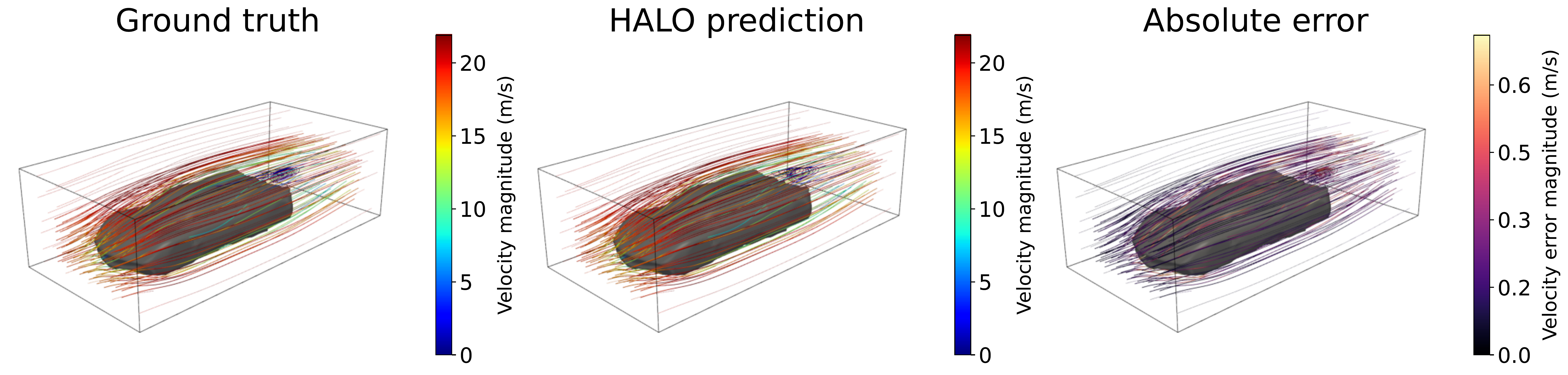}\par
\textbf{(b)} Volume velocity\par
\caption{Held-out ShapeNet-Car predictions on the native 3D unstructured mesh.
\textbf{(a)} surface pressure. \textbf{(b)} volume velocity magnitude with
streamlines. Columns show the ground-truth field, the HALO prediction, and the
pointwise absolute error.}
\label{fig:app_shapenet_pred}
\end{figure}

\begin{figure}[H]
\centering
\includegraphics[width=\textwidth,height=0.30\textheight,keepaspectratio]{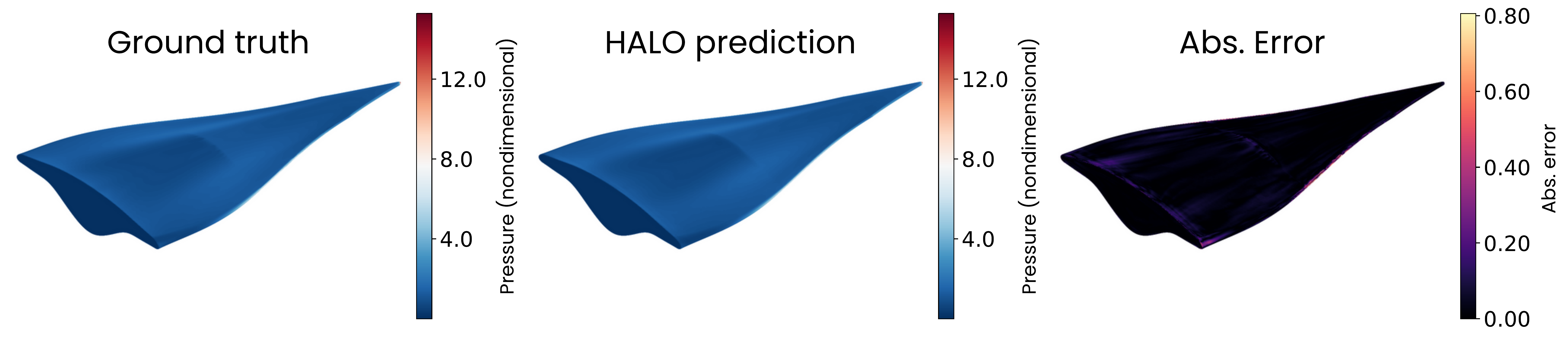}
\par\textbf{(a)} Aircraft 92\par
\includegraphics[width=\textwidth,height=0.30\textheight,keepaspectratio]{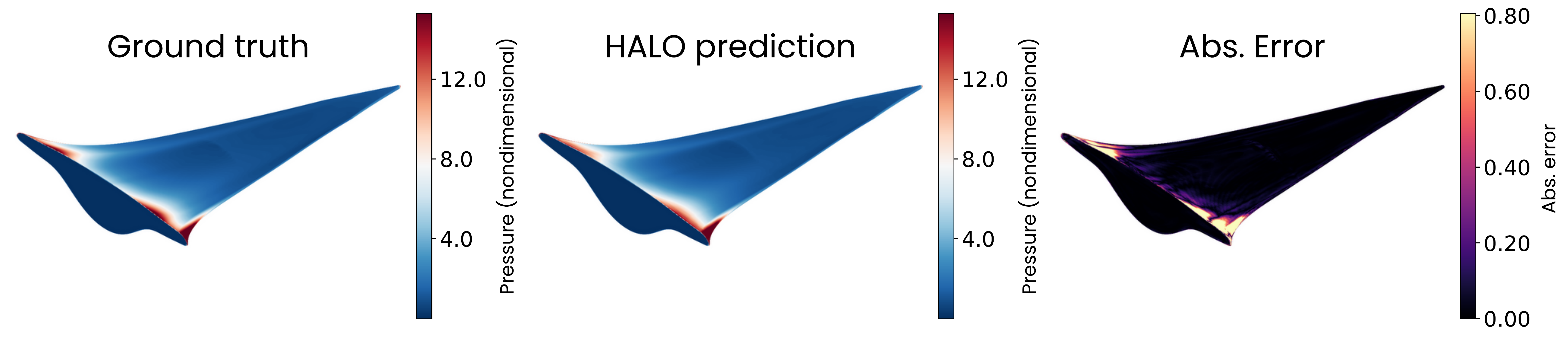}
\par\textbf{(b)} Aircraft 93
\caption{Held-out Aircraft surface-pressure predictions on native 3D
unstructured meshes at Mach $7.0$, angle of attack $0^\circ$, and sideslip
angle $2^\circ$: (a) aircraft 92, with $4.42\%$ relative-$L_2$ error; and
(b) aircraft 93, with $5.83\%$ relative-$L_2$ error. Within each panel, columns
show the ground truth, HALO prediction, and pointwise absolute error.}
\label{fig:app_aircraft_pred}
\end{figure}

\end{document}